\RequirePackage[svgnames]{xcolor}

\providecommand{\paperaccent}{utOrange}

\documentclass[11pt,letterpaper]{mystyle}

\usepackage[all]{hypcap}
\usepackage[svgnames]{xcolor}
\usepackage[comma,authoryear,compress]{natbib}
\hypersetup{
    colorlinks = true,
    allcolors = {PaperAccent},
}

\usepackage{algorithm}
\usepackage{algorithmicx}
\usepackage{algpseudocode}
\usepackage{microtype}
\usepackage{graphicx}
\expandafter\def\csname ver@subfig.sty\endcsname{}
\usepackage{booktabs} %
\usepackage{arydshln}
\usepackage{float}
\usepackage{bigstrut}

\usepackage{amsmath}
\usepackage{amssymb}
\usepackage{mathtools}
\usepackage{amsthm}
\usepackage{mathrsfs}
\usepackage{nicefrac}
\usepackage{dsfont}
\usepackage{enumitem}
\usepackage{subcaption}
\usepackage{graphicx,subfig}
\usepackage{cleveref}
\usepackage{bxcoloremoji}
\usepackage{multirow}
\usepackage{float}

\usepackage[utf8]{inputenc} %
\usepackage[T1]{fontenc}    %
\usepackage{hyperref}       %
\usepackage{url}            %
\usepackage{booktabs}       %
\usepackage{amsfonts}       %
\usepackage{nicefrac}       %
\usepackage{microtype}      %
\usepackage{graphicx}
\usepackage{subcaption} %
\usepackage{amssymb}
\usepackage{fdsymbol}
\usepackage{wrapfig}
\usepackage{lipsum}
\usepackage{enumitem}
\usepackage{stackengine}
\usepackage[font=small,labelfont=bf]{caption}
\usepackage{color}
\usepackage{adjustbox}

\usepackage{rotating}
\usepackage{makecell}

\newtheorem{theorem}{Theorem}[section]
\newtheorem{corollary}[theorem]{Corollary}
\newtheorem{proposition}[theorem]{Proposition}
\newtheorem{assumption}[theorem]{Assumption}
\renewcommand{\mathbf}{\boldsymbol}
\providecommand{\xv}{\mathbf{x}}
\providecommand{\yv}{\mathbf{y}}
\providecommand{\cv}{\mathbf{c}}
\providecommand{\hv}{\mathbf{h}}
\providecommand{\qv}{\mathbf{q}}
\providecommand{\kv}{\mathbf{k}}
\providecommand{\sv}{\mathbf{s}}

\usepackage{amsmath,amsfonts,bm}
\providecommand{\eqref}[1]{equation~\ref{#1}}

\DeclareSymbolFont{gradientsymbols}{OMS}{zplm}{m}{n}
\SetSymbolFont{gradientsymbols}{bold}{OMS}{zplm}{b}{n}
\DeclareMathSymbol{\nabla}{\mathord}{gradientsymbols}{"72}

\usepackage{xspace}
\newcommand{\name}{{FOVEATED}\xspace}

\usepackage{amsmath}

\usepackage[all]{hypcap}

\title{Improving Atomic-Fact Recall via Focused Views in Unstructured Knowledge Editing}

\runningtitle{Improving Atomic-Fact Recall via Focused Views in Unstructured Knowledge Editing}

\author{
Ding Wu\textsuperscript{1,*} \quad
Ye Zhang\textsuperscript{2,*} \quad
Haoyu Wang\textsuperscript{3} \quad
Tianci Liu\textsuperscript{4} 
\\
\textsuperscript{1}Georgia Institute of Technology \quad
\textsuperscript{2}University of Pennsylvania \\
\textsuperscript{3}University at Albany \quad
\textsuperscript{4}University of Tennessee 
}

\begin{document}

\begin{abstract}
Large language models (LLMs) increasingly serve as general-purpose interfaces to factual knowledge, but their parameters do not automatically reflect information that changes after pretraining.
Knowledge editing (KE) provides a targeted alternative to costly retraining by modifying selected knowledge and preserving unrelated knowledge and general capabilities.
Conventional KE uses structured factual triples, whereas unstructured KE (UKE) uses free-form passages containing multiple facts.
Nonetheless, existing UKE editors exhibit a failure mode known as \emph{context reliance}: edited LLMs can often reproduce the editing passage but fail to reliably recall its individual facts without the original passage context.
We identify \emph{context-induced difficulty underestimation} under the standard passage-level editing objective: later facts receive increasingly rich ground-truth context and consequently incur lower initial losses, making them appear easier to learn.
In response, we propose \name, a plug-and-play framework that constructs focused views of each sentence by randomly shifting the Rotary Position Embedding (RoPE) positions assigned to the keys of its preceding context.
The perturbation is applied during editing and removed afterward, leaving the model's native positional encoding unchanged at inference time.
We instantiate \name for both direct-optimization and locate-then-edit editors.
We theoretically analyze how \name counteracts context-induced difficulty underestimation and empirically demonstrate consistent improvements across five KE editors, two LLM backbones, and three benchmarks.

\vspace{2mm}

\textit{Keywords: Knowledge Editing, Large Language Models}

\vspace{5mm}

\coloremojicode{1F4C5} 
\textbf{Date}: {\monthyeardate\today}

\coloremojicode{1F4E7} 
\textbf{Contact}: 
\href{mailto:tliu43@tennessee.edu}{tliu43@tennessee.edu}

\end{abstract}

\maketitle
\def\thefootnote{$^{*}$}\footnotetext{These authors contributed equally to this work.}\def\thefootnote{\arabic{footnote}}
\vspace{3mm}
\section{Introduction}
\label{sec:intro}

Large language models (LLMs) have become a foundation for modern natural
language processing tasks~\citep{vaswani2017attention,brown2020language,zhou2023comprehensive}.
As model scale continues to grow, these capabilities make LLMs increasingly
useful as general-purpose interfaces for accessing and reasoning with
knowledge~\citep{bommasani2021opportunities,roberts2020much,
kojima2022large}.

However, an LLM's parametric knowledge remains largely fixed after
pretraining and can quickly become outdated as real-world knowledge evolves,
leading to factual errors in downstream applications
~\citep{de2021editing,hartvigsen2024aging}.
Keeping the model current through repeated retraining is prohibitively
expensive.
As a remedy, 
\emph{knowledge editing} (KE) addresses this problem by incorporating specific new
knowledge through selective model updates and preserving unrelated
knowledge and general capabilities~\citep{wang2023knowledge,yao2023editing,zhang2024comprehensive}.

Standard KE typically represents new knowledge as a separate factual association, often a \textit{(subject, relation, object)} triple.
Given such structured facts, some methods directly optimize selected parameters~\citep{wang2024roselora,wang2024wise}, while
\emph{locate-then-edit} (LTE) methods first identify where the target knowledge is represented and then write a targeted update into the corresponding layers~\citep{meng2022rome,meng2023memit,
fang2025alphaedit}.
Recent work relaxes this structured-input assumption through
\emph{unstructured knowledge editing} (UKE), where the new knowledge is
provided as a free-form passage~\citep{wu2024akew,yang2025mirage}.
Unlike a structured edit, such a passage may express multiple interrelated
facts without specifying which ones a user may later query.
UKE methods accordingly expand the editable modules and decompose the passage into
smaller units and coordinate their updates~\citep{deng2024unke,jiang2025anyedit,
su2025muke}.
Together, these methods expand the scope of KE from individual factual
associations to free-form passages.

Nonetheless, existing methods often fall short of making effective use of
the edited knowledge beyond the original editing context.
In particular, an edited model may recall the complete passage under the
original editing query, yet fail to retrieve the new knowledge when answering
more targeted questions about individual facts
~\citep{zhou2026context,wang2026fable}.
Recent studies have begun to examine this failure mode, termed
\emph{context reliance}, in which the new knowledge may remain accessible
only as part of the complete passage, where the edited model can rely on the
preceding context as a shortcut during generation.
To mitigate this problem, \citet{zhou2026context} align predictions under global and
local contexts; \citet{wang2026fable} synthesize atomic question--answer
supervision with external models; and \citet{liu2026hpse} construct hybrid
self-distillation trajectories from a privileged in-context state.
\emph{Yet the learning dynamics underlying context reliance remain
underexplored.}

In light of this gap, we examine how context affects the apparent learning difficulty under the standard passage-level objective in this work. Interestingly, we find that
the pre-update editing loss decreases for facts appearing later in the passage,
even though all edited facts are new to the
model~\citep{wu2024akew,deng2024unke}.
Intuitively, under teacher forcing~\citep{lamb2016professor}, later content is predicted from
longer ground-truth prefixes and thus receives richer contextual support.
This growing support acts as a \emph{shortcut}, lowering the initial losses of later facts and
making them appear easier to learn.
We call this \emph{context-induced difficulty underestimation};
Sec.~\ref{sec:problem} details our analysis.
\emph{This observation is the first contribution of this work.}

In response, we propose \name to construct \emph{focused views} that make the
preceding context less reliable as a shortcut during editing.
By varying how a target sentence retrieves its preceding context while preserving the sentence itself, \name trains the model on each fact across randomized contextual views and directs the editing signal toward the target content.
Concretely, \name constructs such views by randomly shifting the \emph{key-side}
Rotary Position Embedding~\citep[RoPE;][]{su2021roformer} positions assigned to
the prompt and preceding sentences, while leaving all query rotations and the target sentence's key rotations unchanged.
The perturbation is applied only during editing and removed afterward, leaving the model's native positional encoding unchanged at inference time.
We further provide theoretical support for the focused-view design.

\name is a flexible framework that intervenes in the editing computation without prescribing how the updates are parameterized. It can be applied to a wide range of existing editors, and we instantiate \name for two broad families.
Direct optimization (DO) editors update selected parameters with gradient descent~\citep{hu2022lora,wang2024roselora,wang2024wise,liu2025unlocking}. 
For this family, \name generalizes the standard passage-level objective to a weighted expectation of sentence-level losses under focused views. Setting the perturbation magnitude to zero exactly recovers the standard passage-level objective.
For LTE editors~\citep{deng2024unke,su2025muke}, 
applying \name is less direct because their
passage objectives optimize intermediate hidden targets rather than the
eventual parameter updates.
We follow \citet{wang2026fable} and adopt a two-stage design by first writing sentence-level targets into shallow layers and then applying the
base editor in deeper layers to learn the complete passage.
The key distinction of our approach is that \name obtains its sentence-level
targets directly from the editing passage under focused views, without
synthesizing atomic question--answer supervision through external models.
See Sec.~\ref{sec:method} for details. 
\emph{This general editing framework and our theoretical analysis form the
main technical contribution of this work.}

The remainder of this paper is organized as follows. Sec.~\ref{sec:problem}
introduces context-induced difficulty underestimation, Sec.~\ref{sec:method}
presents \name and its theoretical analysis, and Sec.~\ref{sec:experiment}
evaluates the method. Secs.~\ref{sec:related_work} and~\ref{sec:conclusion}
review related work and conclude the paper, respectively.

\section{Difficulty Underestimation in Unstructured Knowledge Editing}
\label{sec:problem}

\begin{figure}[t]
\centering
\includegraphics[width=0.95\textwidth]{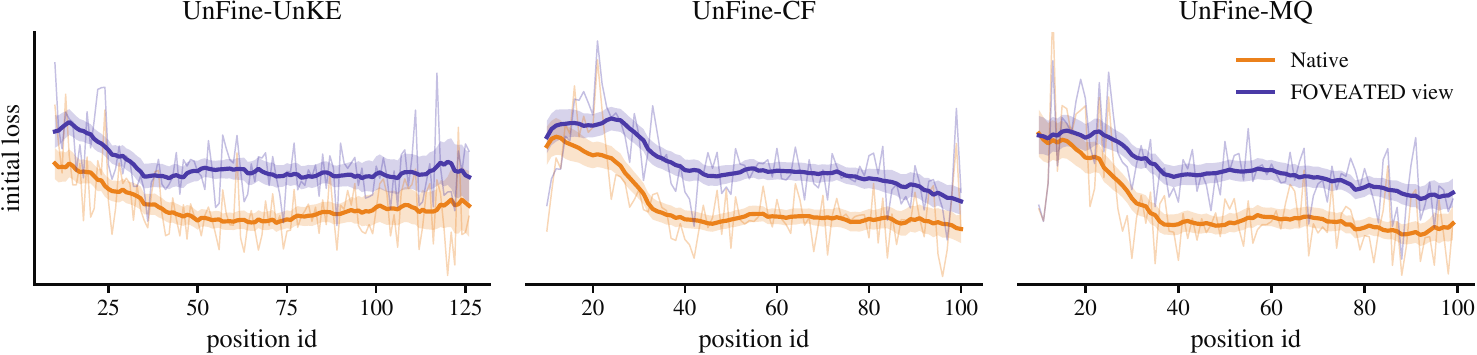}
\caption{Pre-edit NLL loss in UnFine passages. Native loss gradually decreases toward later passage positions, and \name views yield consistently higher losses at the same positions.
}
\label{fig:token-loss}
\end{figure}

This section reveals context-induced difficulty underestimation in UKE. Background is also provided.

\subsection{Preliminaries}
\label{sec:prelim}

For a text sequence $\xv$ of length $|\xv|$, an autoregressive LLM
$\pi(\cdot)$ factorizes its likelihood as
\begin{align*}
    \pi(\xv)
    &= \prod_{t=1}^{|\xv|} \pi(x_t\mid x_1,\ldots,x_{t-1})
    \triangleq \prod_{t=1}^{|\xv|} \pi(x_t\mid\xv_{<t}),
\end{align*}
where $\pi(x_t\mid\xv_{<t})$ is the predicted probability assigned to
token $x_t$ given its prefix $\xv_{<t}$.
To generate text, the LLM sequentially draws $x_t$ from
$\pi(\cdot\mid\xv_{<t})$ and appends it to the prefix,
until an end-of-sequence token is produced, or the length limit is reached~\citep{bengio2000neural}.

\textit{Knowledge Editing (KE)} modifies selected knowledge in a
pre-trained LLM and preserves knowledge outside the scope of the
edit~\citep{wang2023knowledge,zhang2024comprehensive}.
Structured KE represents the knowledge to be updated as a factual
triple \emph{(subject, relation, object)}, with the edit replacing
the object with a new value.
The edit is typically specified by a prompt-response pair
$(\xv,\yv)$, where $\xv$ identifies the subject and relation, and $\yv$ the updated object.
\textit{Unstructured KE (UKE)} instead associates $\xv$ with 
a free-form passage $\cv$ rather than explicitly annotated
triples~\citep{wu2024akew,yang2025mirage}.
A passage may express multiple facts, each a separately queryable
\emph{atomic fact}~\citep{wang2026fable}.
We focus on atomic-fact recall in UKE.

\subsection{Learning Difficulty under Teacher Forcing}
\label{sec:difficulty-underestimation}

UKE aims to edit multiple new facts encoded in a free-form passage.
However, as revealed by \citet{zhou2026context,wang2026fable},
many UKE methods encounter a \emph{context reliance} failure
mode: recall of the edited knowledge
depends heavily on the context supplied during editing.
When that context is absent, the edited
LLMs struggle to recall the new knowledge, especially
facts appearing later in the passage.
Such failure limits the effectiveness of UKE beyond
the original editing context.

Whereas this pattern appears to be
a contextual ``overfitting'' problem of KE~\citep{zhang2024uncovering,liu2025heto},
we show that it is analogous to ``underfitting'':
the presence of
contextual support makes new knowledge appear easy
to predict even when the LLM has not sufficiently
learned it.

Specifically, although (U)KE methods
differ in the variables they optimize to incorporate
new knowledge~\citep{fang2025alphaedit,jiang2025anyedit},
their supervision often comes from fitting the editing
passage through a negative log-likelihood (NLL)
loss~\citep{liu2025heto}.
Let $\theta$ denote the editor-specific
optimization variables and $\pi_\theta$ the corresponding
edited model.
Given prompt $\xv$ and passage $\cv$,
a common full-passage objective is
\begin{align}
    \mathcal{L}_{\mathrm{NLL}}(\theta)
    = -\frac{1}{|\cv|}\sum_{t=1}^{|\cv|}
    \log \pi_\theta(c_t\mid\xv,\cv_{<t}),
    \label{eq:nll}
\end{align}
where the ground-truth $\cv_{<t}$ is provided
to predict token $c_t$,
a procedure called \emph{teacher forcing}~\citep{lamb2016professor}.
Intuitively, the prefix provides
increasingly rich context as the
passage unfolds.
Consequently, even if the LLM does not yet hold the new knowledge, it can still see the fact as a plausible continuation
of the surrounding text.
This contextual shortcut can make
the loss understate the difficulty of acquiring the
knowledge.
We call this effect 
\emph{context-induced difficulty underestimation}.

To examine the effect empirically, we evaluate
Qwen2.5-7B-Instruct~\citep{qwen2024qwen25} on free-form
passages from UnFine~\citep{wang2026fable} and measure
the initial token-level NLL before any editing updates.
Although these passages are designed to contain new
knowledge throughout~\citep{deng2024unke,wang2026fable},
Fig.~\ref{fig:token-loss} shows a clear decrease in
initial loss toward later positions.
Later content thus appears easier to predict despite
still introducing new knowledge, consistent with the
growing prefix providing contextual support before
the model has learned anything from the edit.
To test whether intrinsic sentence difficulty explains the decline, we randomly shuffle the sentence order in each passage and observe the same pattern (App.~\ref{app:shuffle-control}).

To address this issue, we introduce the plug-and-play method \name in the next section.

\section{Proposed Method}
\label{sec:method}

We propose \name (\underline{FO}cused
\underline{V}i\underline{E}ws of \underline{AT}omic facts in unstructured
knowledge \underline{ED}iting) and integrate it with multiple UKE editors.

\subsection{Breaking Context Shortcuts through Positional Encoding Perturbation}
\label{sec:method:krp}

Given the difficulty underestimation induced by contextual shortcuts, we propose to make the context less reliable as a shortcut during the editing phase,
so that the fact needs to be learned consistently across randomized views of the surrounding context.

Inspired by how LLMs use positional
encoding to understand context~\citep{vaswani2017attention,press2021train}, and how
manipulating positional encoding during
fine-tuning can help them handle contexts longer than
those seen during training~\citep{zhu2024pose,peng2024yarn},
\name randomly perturbs selected
positional encodings
to break the context shortcut in a fine-grained way.
Intuitively, when positional information
is perturbed, the shortcut becomes less reliable,
encouraging the LLM to learn
the new knowledge through a more focused view of the
current content.

We focus on Rotary Position Embedding~\citep[RoPE;][]{su2021roformer},
which is widely used in modern LLMs~\citep{dubey2024llama3,qwen2024qwen25,liu2024deepseek}.
Given an attention head of even dimension $d_h$,
let $\hv_m$ denote the input hidden state at position $m$.
The attention head maps $\hv_m$ into
$d_h$-dimensional query and key vectors via
$\mathbf W_Q$ and $\mathbf W_K$, respectively.
To incorporate relative-position information into attention
score computation, RoPE rotates the query and key vectors based on their positions before
computing the attention score.
The query and key vectors lie in
$\mathbb C^{d_h/2}$, with coordinate $j$ rotated
by $m\omega_j$ at position $m$.
Let $\boldsymbol\Theta=\operatorname{diag}(\omega_1,\ldots,\omega_{d_h/2})$ collect the rotation frequencies. The rotated query and key vectors are given by
\begin{align*}
    \qv_t=e^{\mathrm{i}t\boldsymbol\Theta}\mathbf W_Q\hv_t,
    \qquad
    \kv_s=e^{\mathrm{i}s\boldsymbol\Theta}\mathbf W_K\hv_s,
\end{align*}
where $\mathrm{i}$ is the imaginary unit,
and $\qv_t$ and $\kv_s$ denote the rotated query and key
at positions $t$ and $s$.
The attention score is then given by
\begin{align*}
    a_{ts}
    &=\operatorname{Re}(\qv_t^*\kv_s)/\sqrt{d_h}
    =\operatorname{Re}\left(
    (\mathbf W_Q\hv_t)^*
    e^{\mathrm{i}(s-t)\boldsymbol\Theta}
    (\mathbf W_K\hv_s)
    \right)/\sqrt{d_h},
\end{align*}
where $^*$ and $\operatorname{Re}(\cdot)$
denote conjugate transpose and
the real part, respectively.
The term $e^{\mathrm{i}(s-t)\boldsymbol\Theta}$
encodes the \emph{relative position} $s-t$,
allowing the attention score to capture distance information.

Using this notation, \name randomly perturbs the key-side RoPE.
For each target sentence, let $\mathcal B$ contain the prompt and each
preceding sentence as separate context blocks. We independently sample
one offset for every block $b\in\mathcal B$,
\begin{align*}
    \epsilon_b\sim\mathcal N(0,\sigma^2),
    \qquad
    D_b=\operatorname{round}(\epsilon_b),
\end{align*}
where $\sigma$ controls the perturbation scale. Every eligible key position
$s$ in block $b$ uses $d_s=D_b$, while $d_s=0$ in the target sentence.
We then apply these offsets to the context keys while
keeping all query rotations and the target sentence's
key rotations unchanged:
\begin{align*}
    \widetilde{\qv}_t&=\qv_t,\qquad
    \widetilde{\kv}_s
    =e^{\mathrm{i}(s+d_s)\boldsymbol\Theta}\mathbf W_K\hv_s
    =e^{\mathrm{i}d_s\boldsymbol\Theta}\kv_s,\\
    \widetilde a_{ts}
    &=\operatorname{Re}\left(
    (\mathbf W_Q\hv_t)^*
    e^{\mathrm{i}(s+d_s-t)\boldsymbol\Theta}
    (\mathbf W_K\hv_s)
    \right)/\sqrt{d_h}.
\end{align*}
Each block offset is shared across attention layers and resampled at every step. As shown in Fig.~\ref{fig:token-loss}, the resulting focused views generally exhibit higher initial losses than the native view. The perturbation is removed after editing, and the LLM returns to its original RoPE.

The following theorem gives a mean-margin and view-consistency decomposition of \name.
\begin{theorem}[Mean-margin and view-consistency decomposition]
\label{thm:krp:informal}
Let $m_0$ and $m_D$ denote the native and perturbed
margins of a target token,
where $D$ denotes the sampled positional
offsets, with $\ell(m)=\log(1+e^{-m})$.
Under the integrability condition
$\mathbb E_D|m_D|<\infty$, we have
\begin{align*}
    \mathbb E_D[\ell(m_D)]
    =\ell(m_0-\Delta)+\mathcal R,
    \qquad \mathcal R\geq0,
\end{align*}
where $\Delta=m_0-\mathbb E_D[m_D]$ is the change in mean margin
and $\mathcal R$ is an expected Bernoulli KL divergence
that vanishes if and only if the correct-token confidence
is constant across views.
Under additional positive-shortcut assumptions (App.~\ref{app:margin}), $\Delta\geq0$ holds and hence
$\mathbb E_D[\ell(m_D)]\geq\ell(m_0)$.
\end{theorem}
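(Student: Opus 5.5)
The plan is to write the logistic loss through the sigmoid and the Bernoulli cross-entropy, and then split the expectation exactly into a term at the averaged confidence plus a KL remainder. Let $\varsigma(m)=1/(1+e^{-m})$, so that $\ell(m)=-\log\varsigma(m)$ and $\varsigma(m)$ is the correct-token confidence. For a Bernoulli parameter $p\in(0,1)$, write $H(1,p)=-\log p$ for the cross-entropy against the one-hot target.

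The key identity is this: $\ell$ is the Legendre-type Bregman generator of the Bernoulli family in natural parameter $m$. Define $A(m)=\log(1+e^{m})$. Then $\ell(m)=A(m)-m$ and $A'(m)=\varsigma(m)$. For any reference margin $\bar m$ we have the exact expansion
\begin{align*}
\ell(m_D)=\ell(\bar m)-(1-\varsigma(\bar m))(m_D-\bar m)+\mathrm{KL}\bigl(\mathrm{Ber}(\varsigma(\bar m))\,\big\|\,\mathrm{Ber}(\varsigma(m_D))\bigr).
\end{align*}
This follows because the Bregman divergence $A(m_D)-A(\bar m)-\varsigma(\bar m)(m_D-\bar m)$ of the log-partition function equals the Bernoulli KL with the arguments ordered as written. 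I will verify this by direct algebra: both sides reduce to $\varsigma(\bar m)\log\frac{\varsigma(\bar m)}{\varsigma(m_D)}+(1-\varsigma(\bar m))\log\frac{1-\varsigma(\bar m)}{1-\varsigma(m_D)}$.

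Now choose $\bar m=\mathbb E_D[m_D]=m_0-\Delta$. Integrability $\mathbb E_D|m_D|<\infty$ makes $\bar m$ finite. The linear term then has zero mean, and taking expectations gives
\begin{align*}
\mathbb E_D[\ell(m_D)]=\ell(m_0-\Delta)+\mathcal R,\qquad \mathcal R=\mathbb E_D\,\mathrm{KL}\bigl(\mathrm{Ber}(\varsigma(\bar m))\,\|\,\mathrm{Ber}(\varsigma(m_D))\bigr)\ge0.
\end{align*}
The main technical point is justifying the expectation split. Because $0\le \ell(m)\le |m|+\log 2$, the loss $\ell(m_D)$ is integrable, so $\mathcal R$ is finite and the rearrangement is legitimate. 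Nonnegativity of $\mathcal R$ is Gibbs' inequality. For the equality case, $\mathcal R=0$ holds iff the KL vanishes $D$-almost surely, i.e.\ $\varsigma(m_D)=\varsigma(\bar m)$ a.s. Since $\varsigma$ is injective, this means the confidence is constant across views (equivalently $m_D=\bar m$ a.s.). The converse is immediate.

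For the final claim, I invoke the positive-shortcut assumptions of App.~\ref{app:margin}. I expect them to state that the context-block contributions to the correct-token logit advantage are maximized at zero offset in expectation, which yields $\mathbb E_D[m_D]\le m_0$, i.e.\ $\Delta\ge0$. Then $\ell$ is strictly decreasing, so $\ell(m_0-\Delta)\ge\ell(m_0)$, and adding $\mathcal R\ge0$ gives $\mathbb E_D[\ell(m_D)]\ge\ell(m_0)$.

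The only genuinely model-dependent step is deriving $\Delta\ge0$ from those assumptions. If they are phrased attention-wise, I would try expressing the margin as a sum over context keys of $\operatorname{Re}$ of rotated inner products. I would then use that the expected factor $\mathbb E[e^{\mathrm i D_b\boldsymbol\Theta}]$ of a rounded Gaussian has real entries in $[0,1]$, which shrinks positive shortcut contributions.
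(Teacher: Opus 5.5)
Your decomposition is the paper's argument. The paper likewise writes $\operatorname{KL}(\operatorname{Bern}(p_\mu)\,\|\,\operatorname{Bern}(p_D))$ as the Bregman divergence of $F(u)=\log(1+e^u)$ at $\mu=\mathbb E_D[m_D]$, averages away the linear term, and obtains $\mathcal R\ge0$ with equality iff $m_D=\mu$ almost surely.

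For $\Delta\ge0$, the appendix's positive-shortcut hypothesis is exactly the affine model your attention-wise sketch needs: $m_D=u+\sum_{b,r}[A_{br}\cos(\omega_rD_b)+B_{br}\sin(\omega_rD_b)]$ with $A_{br}\ge0$. This linearity is assumed, not derived from the network, which is why your "sum of rotated inner products" step is legitimate only as that assumption. Symmetry of the offsets then removes the sine terms, and $\rho_{br}=\mathbb E\cos(\omega_rD_b)\le1$ gives $\Delta=\sum_{b,r}(1-\rho_{br})A_{br}\ge0$; nonnegativity of $\rho_{br}$ is never needed.
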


In words, \name aims to learn a fact prediction
that is stable across randomized positional views of the context:
$\mathcal R$ penalizes view-dependent confidence,
and, under a positive shortcut, the term $\Delta$ additionally reduces the mean contextual support.
The complete version and proofs are in App.~\ref{app:theory}.

\subsection{Integrating \name with Existing Editors}
\label{sec:method:editors}

We integrate \name with two editor families: methods that directly optimize
selected parameters~\citep{zhang2024comprehensive,zhou2026context}, and
locate-then-edit methods that write hidden targets into model
weights~\citep{fang2025alphaedit,jiang2025anyedit}.

\textit{Direct Optimization (DO)} editors update selected parameters to minimize the passage NLL.
Assume the passage contains $M$ sentences,
with $\mathcal I_m$ denoting the token positions of
sentence $m$.
\name reorganizes the passage NLL into
sentence-level losses and applies key perturbations
to each sentence's preceding context.
For sentence $m$, the \name-induced loss is given by
\begin{align}
    \ell_m(\theta,\boldsymbol D_m)
    =-\frac{1}{|\mathcal I_m|}
    \sum_{t\in\mathcal I_m}
    \log\pi_\theta\!\left(c_t\mid\xv,\cv_{<t};
    \widetilde{\mathbf K}(\boldsymbol D_m)\right),
    \label{eq:do:sent}
\end{align}
where $\widetilde{\mathbf K}(\boldsymbol D_m)$
denotes the keys randomly perturbed by offsets $\boldsymbol D_m$
in the context preceding sentence $m$,
as defined in Sec.~\ref{sec:method:krp}.
The final objective is a weighted expectation of sentence-level losses:
\begin{align}
    \mathcal L_{\mathrm{\name}}(\theta)
    =\sum_{m=1}^{M}\frac{|\mathcal I_m|}{|\cv|}
    \mathbb E_{\boldsymbol D_m}[\ell_m(\theta,\boldsymbol D_m)].
    \label{eq:do}
\end{align}
As shown in App.~\ref{app:theory},
using zero positional offsets in all context blocks
reduces Eq.~\eqref{eq:do} to Eq.~\eqref{eq:nll}.
We keep the original DO editors'
trainable parameters and optimization settings unchanged.

\textit{Locate-Then-Edit (LTE)} editors first determine how
updating some hidden states can lead to the desired knowledge
update, then write the updates into
selected model layers using either closed-form
solvers or gradient descent on intermediate-state reconstruction
losses~\citep{deng2024unke,jiang2025anyedit,su2025muke}.
Because LTE minimizes the NLL over intermediate states rather than model parameters, adapting \name requires an additional weight-writing step.

We therefore use a two-stage LTE adaptation, following the structure of
\citet{wang2026fable}: sentence-level targets are first written into shallow
MLP layers, after which the base LTE editor learns the complete passage in
deeper layers. Unlike FABLE, \name derives the sentence targets directly from
the editing passage under focused views, without externally synthesized QA pairs.

\emph{Stage 1. Sentence-level LTE.}
For sentence $m$, we learn a trainable residual $\delta_m$
that is added to the MLP outputs at prompt
positions $\mathcal A_m$ at a shallow layer $z$%
\footnote{These anchors include a temporary sentence-specific carrier;
see App.~\ref{app:lte-targets} for construction details.}.
We optimize the residual on the sentence's tokens
within the full teacher-forced passage, with key
perturbations applied to its preceding context:
\begin{align}
    \delta_m^\star
    =\arg\min_{\delta_m}\;
    \mathbb E_{\boldsymbol D_m}
    \left[-\frac{1}{|\mathcal I_m|}
    \sum_{t\in\mathcal I_m}
    \log\pi_{\theta_0, \delta_m}\!\left(c_t\mid\xv,\cv_{<t};
    \widetilde{\mathbf K}(\boldsymbol D_m)\right)\right].
    \label{eq:krp:delta}
\end{align}
After optimizing all sentence-level residuals
$[\delta_1, \ldots, \delta_M]$, we jointly write the resulting
targets into the shallow MLP weights using
a closed-form solver~\citep{fang2025alphaedit}.

\emph{Stage 2. Passage-level LTE.}
Starting from the above model, we apply the base
LTE editor and update deeper layers.
Following \citet{wang2026fable}, we penalize
changes in the hidden states at sentence-level
anchor positions to preserve the Stage 1
edits by adding an MSE term to the original editing objective.
See Appendices~\ref{app:stage1-write}
and~\ref{app:hyperparams} for further implementation details.

\section{Experiments}
\label{sec:experiment}

We evaluate \name with five UKE editors spanning the DO and LTE families.

\subsection{Experimental Setup}
\label{sec:exp:setup}

\textit{Base Models.}
We select two LLMs widely used in UKE studies~\citep{jiang2025anyedit, su2025muke,zhou2026context,wang2026fable}:
\textit{Qwen2.5-7B-Instruct}~\citep{qwen2024qwen25} and
\textit{Llama-3.1-8B-Instruct}~\citep{dubey2024llama3}, which we refer to as \textit{Qwen2.5} and
\textit{Llama3.1}, respectively.

\textit{Tasks.}
We use \textit{UnFine}~\citep{wang2026fable},
which augments existing benchmarks to enable finer-grained evaluation of UKE.
It contains three subsets:
(1) \textit{UnFine-UnKE} extends UnKEBench~\citep{deng2024unke};
(2) \textit{UnFine-CF} and
(3) \textit{UnFine-MQ} extend the CounterFact~\citep{meng2022rome}
and MQuAKE~\citep{zhong2023mquake} subsets of
\citet{wu2024akew}, respectively.
We adopt the single-request setting from prior UKE work~\citep{deng2024unke,wang2026fable}.

\textit{Editing Methods.}
We evaluate \name on five editing methods spanning both DO and LTE families.
Among the DO editors,
\textit{FT-M}~\citep{zhang2024comprehensive} fine-tunes the MLP layer identified by causal tracing,
\textit{LoRA}~\citep{hu2022lora} learns additive low-rank updates to model parameters,
and a simplified variant of
\textit{COIN}$^{\star}$~\citep{zhou2026context}
fine-tunes the model on the passage and regularizes against context reliance%
\footnote{Our COIN$^{\star}$ adopts its original context-alignment objective to test whether \name offers complementary gains against context reliance. Other general-purpose locality-preserving mechanisms are omitted to keep the comparison focused.}.
For the LTE family, we study
\textit{$\mu$KE}~\citep{su2025muke} and
\textit{UnKE}~\citep{deng2024unke},
two representative UKE editors.
All paired comparisons hold supervision fixed to the editing query and passage;
we therefore do not include FABLE, which additionally uses externally synthesized atomic-QA supervision.
Implementation details are provided in App.~\ref{app:setup}.
We report computation overhead of \name in App.~\ref{app:editing-cost}.
\name adds moderate editing overhead but no inference cost.

\textit{Evaluation.}
Following UnFine~\citep{wang2026fable},
we assess both holistic passage recall and atomic-fact recall.
Specifically, for \textit{holistic recall}, we report
\textit{ROUGE-L recall}, which measures lexical overlap
through the longest common subsequence~\citep{lin2004rouge},
under the original editing query (\textit{Orig Q}) and its paraphrase
(\textit{Reph Q}), together with their mean
(\textit{Avg.}).
For \textit{atomic-fact recall},
we separately query the model with atomic questions (\textit{Atom Q})
targeting the individual facts in each passage.
We report \textit{ROUGE-L}, \textit{Hit Rate (HR)}, and
$\mathbf{C}_{\mathrm{LCS}}$.
\textit{HR} measures complete key-phrase recovery, and
$\mathbf{C}_{\mathrm{LCS}}$ measures partial
reference-answer coverage~\citep{wang2026fable}.
We further evaluate \textit{general capability preservation} on six
standard benchmarks after editing on UnFine-UnKE.
More details are
provided in App.~\ref{app:metrics}.

\subsection{Overall Performance}

Tab.~\ref{tab:main-results} reports the editing
performance of \name on the three UnFine benchmarks.

Although base editors maintain high editing performance under Orig Q, their
Atom Q performance is markedly worse, showing that passage-level recall does
not necessarily translate into access to the individual facts.
Applying \name consistently improves atomic-fact recall. For example, atomic
ROUGE-L improves in all 30 scenarios by $4.4$ points on average. Similar
trends hold for HR and $\mathbf{C}_{\mathrm{LCS}}$, which improve in 29 out of
30 scenarios, with the only decreases below two points.
Moreover, the improvement of \name does not come at
the cost of holistic editing performance.
Orig Q scores remain largely unchanged, while Reph Q improves in 19 of the 20
applicable scenarios by $8.4$ points on average, indicating more robust
holistic recall under query reformulation.

The improvements hold across both editor families. For DO editors, atomic
ROUGE-L increases by $10.3\%$ on average, with similar gains in HR and
$\mathbf{C}_{\mathrm{LCS}}$ across every setting. Notably,
COIN$^{\star}$ still benefits from \name despite already including a
context-alignment objective, suggesting that the two mechanisms are
complementary. LTE editors show the same overall trend, with a $12.9\%$
average gain in atomic ROUGE-L and only two minor decreases in the other
metrics. These results show that \name benefits both editing paradigms despite
their different update mechanisms.

\begin{table}[t!]
\centering
\caption{Overall editing performance (mean $\pm$ SE); higher is better. Gains over the base editors are shaded. Missing Reph Q entries are ``--''.}
\label{tab:main-results}
\begingroup
\newcommand{\mainmetric}[2]{#1\hspace{0.25pt}\raisebox{-1.5pt}{{\fontsize{4.5}{4.9}\selectfont\textpm#2}}}
\newcommand{\mainmetricgain}[2]{\cellcolor{PaperAccent!18}\textbf{#1}\hspace{0.25pt}\raisebox{-1.5pt}{{\fontsize{4.5}{4.9}\selectfont\textpm#2}}}
\newcommand{\mainmissing}{--}
\fontsize{6.6}{7.3}\selectfont
\setlength{\tabcolsep}{1.4pt}
\resizebox{0.95\linewidth}{!}{%
\begin{tabular}{c >{\bfseries}r cccccc @{\hspace{7pt}} cccccc}
\toprule[0.4ex]
& & \multicolumn{6}{c}{\bfseries Qwen2.5} & \multicolumn{6}{c}{\bfseries Llama3.1} \\
\cmidrule(lr){3-8}\cmidrule(lr){9-14}
& & \multicolumn{3}{c}{\bfseries Holistic (ROUGE-L)} & \multicolumn{3}{c}{\bfseries Atom Q} & \multicolumn{3}{c}{\bfseries Holistic (ROUGE-L)} & \multicolumn{3}{c}{\bfseries Atom Q} \\
\cmidrule(lr){3-5}\cmidrule(lr){6-8}\cmidrule(lr){9-11}\cmidrule(lr){12-14}
& \textbf{Method} & \textbf{Orig Q} & \textbf{Reph Q} & \textbf{Avg.} & \textbf{ROUGE-L} & \textbf{Hit Rate} & $\mathbf{C}_{\mathrm{LCS}}$ & \textbf{Orig Q} & \textbf{Reph Q} & \textbf{Avg.} & \textbf{ROUGE-L} & \textbf{Hit Rate} & $\mathbf{C}_{\mathrm{LCS}}$ \\
\midrule[0.2ex]
\multirow{10}{*}[-2ex]{\rotatebox[origin=c]{90}{\bfseries UnFine-UnKE}}
& LoRA & \mainmetric{100.0}{0.0} & \mainmetric{95.3}{0.5} & \mainmetric{97.6}{0.3} & \mainmetric{40.8}{0.9} & \mainmetric{40.2}{1.2} & \mainmetric{51.9}{0.9} & \mainmetric{100.0}{0.0} & \mainmetric{96.5}{0.4} & \mainmetric{98.2}{0.2} & \mainmetric{39.2}{0.8} & \mainmetric{43.8}{1.1} & \mainmetric{57.1}{0.9} \\
& +Ours & \mainmetric{99.7}{0.2} & \mainmetricgain{99.1}{0.2} & \mainmetricgain{99.4}{0.2} & \mainmetricgain{53.4}{0.8} & \mainmetricgain{61.8}{1.0} & \mainmetricgain{69.9}{0.8} & \mainmetric{99.9}{0.1} & \mainmetricgain{99.0}{0.2} & \mainmetricgain{99.4}{0.1} & \mainmetricgain{45.4}{0.7} & \mainmetricgain{56.8}{1.0} & \mainmetricgain{66.5}{0.8} \\
\noalign{\vskip 0.5ex}\cdashline{3-14}\noalign{\vskip 0.5ex}
& COIN$^{\star}$ & \mainmetric{100.0}{0.0} & \mainmetric{97.4}{0.4} & \mainmetric{98.7}{0.2} & \mainmetric{36.8}{0.7} & \mainmetric{34.7}{1.0} & \mainmetric{49.0}{0.8} & \mainmetric{100.0}{0.0} & \mainmetric{96.5}{0.5} & \mainmetric{98.3}{0.3} & \mainmetric{36.8}{0.7} & \mainmetric{36.7}{1.1} & \mainmetric{53.2}{0.8} \\
& +Ours & \mainmetric{99.3}{0.2} & \mainmetricgain{98.4}{0.3} & \mainmetricgain{98.9}{0.2} & \mainmetricgain{43.9}{0.7} & \mainmetricgain{47.8}{1.0} & \mainmetricgain{59.2}{0.8} & \mainmetric{100.0}{0.0} & \mainmetricgain{98.2}{0.3} & \mainmetricgain{99.1}{0.2} & \mainmetricgain{41.2}{0.8} & \mainmetricgain{45.7}{1.1} & \mainmetricgain{60.2}{0.8} \\
\noalign{\vskip 0.5ex}\cdashline{3-14}\noalign{\vskip 0.5ex}
& FT-M & \mainmetric{99.9}{0.1} & \mainmetric{78.3}{1.4} & \mainmetric{89.1}{0.7} & \mainmetric{23.2}{0.6} & \mainmetric{16.1}{0.8} & \mainmetric{30.0}{0.7} & \mainmetric{100.0}{0.0} & \mainmetric{96.6}{0.5} & \mainmetric{98.3}{0.2} & \mainmetric{35.3}{0.7} & \mainmetric{34.8}{1.0} & \mainmetric{51.6}{0.8} \\
& +Ours & \mainmetric{99.2}{0.2} & \mainmetricgain{81.5}{1.2} & \mainmetricgain{90.3}{0.6} & \mainmetricgain{26.8}{0.7} & \mainmetricgain{21.1}{0.9} & \mainmetricgain{35.1}{0.8} & \mainmetric{100.0}{0.0} & \mainmetricgain{98.5}{0.2} & \mainmetricgain{99.2}{0.1} & \mainmetricgain{40.9}{0.8} & \mainmetricgain{45.3}{1.1} & \mainmetricgain{59.8}{0.8} \\
\noalign{\vskip 0.5ex}\cdashline{3-14}\noalign{\vskip 0.5ex}
& $\mu$KE & \mainmetric{98.8}{0.3} & \mainmetric{80.8}{1.1} & \mainmetric{89.8}{0.6} & \mainmetric{24.4}{0.6} & \mainmetric{14.8}{0.7} & \mainmetric{33.8}{0.7} & \mainmetric{97.5}{0.5} & \mainmetric{90.9}{0.8} & \mainmetric{94.2}{0.6} & \mainmetric{25.9}{0.7} & \mainmetric{19.6}{0.9} & \mainmetric{37.6}{0.8} \\
& +Ours & \mainmetricgain{99.5}{0.2} & \mainmetricgain{83.0}{1.0} & \mainmetricgain{91.3}{0.5} & \mainmetricgain{29.9}{0.6} & \mainmetricgain{24.2}{0.9} & \mainmetricgain{43.5}{0.8} & \mainmetricgain{99.8}{0.1} & \mainmetricgain{93.0}{0.6} & \mainmetricgain{96.4}{0.3} & \mainmetricgain{29.3}{0.6} & \mainmetricgain{23.6}{0.9} & \mainmetricgain{42.7}{0.7} \\
\noalign{\vskip 0.5ex}\cdashline{3-14}\noalign{\vskip 0.5ex}
& UnKE & \mainmetric{89.6}{0.8} & \mainmetric{66.4}{1.1} & \mainmetric{78.0}{0.8} & \mainmetric{24.8}{0.6} & \mainmetric{14.5}{0.7} & \mainmetric{33.8}{0.7} & \mainmetric{89.7}{0.9} & \mainmetric{82.1}{1.1} & \mainmetric{85.9}{0.9} & \mainmetric{23.3}{0.6} & \mainmetric{16.7}{0.8} & \mainmetric{34.3}{0.8} \\
& +Ours & \mainmetric{87.1}{0.8} & \mainmetricgain{70.6}{1.0} & \mainmetricgain{78.8}{0.8} & \mainmetricgain{29.7}{0.6} & \mainmetricgain{22.7}{0.9} & \mainmetricgain{42.7}{0.8} & \mainmetricgain{93.9}{0.6} & \mainmetricgain{85.4}{0.8} & \mainmetricgain{89.7}{0.6} & \mainmetricgain{29.5}{0.6} & \mainmetricgain{25.6}{0.9} & \mainmetricgain{43.7}{0.8} \\

\midrule[0.2ex]
\multirow{10}{*}[-2ex]{\rotatebox[origin=c]{90}{\bfseries UnFine-CF}}
& LoRA & \mainmetric{99.9}{0.1} & \mainmetric{66.5}{1.5} & \mainmetric{83.2}{0.7} & \mainmetric{56.7}{0.9} & \mainmetric{61.4}{1.0} & \mainmetric{68.4}{0.9} & \mainmetric{100.0}{0.0} & \mainmetric{70.5}{1.3} & \mainmetric{85.2}{0.7} & \mainmetric{54.1}{0.9} & \mainmetric{71.2}{1.0} & \mainmetric{77.8}{0.8} \\
& +Ours & \mainmetric{99.7}{0.1} & \mainmetricgain{95.9}{0.5} & \mainmetricgain{97.8}{0.3} & \mainmetricgain{63.6}{0.9} & \mainmetricgain{76.5}{0.9} & \mainmetricgain{80.8}{0.7} & \mainmetric{99.8}{0.1} & \mainmetricgain{92.3}{0.8} & \mainmetricgain{96.0}{0.4} & \mainmetricgain{56.2}{0.9} & \mainmetricgain{77.2}{0.9} & \mainmetricgain{81.9}{0.7} \\
\noalign{\vskip 0.5ex}\cdashline{3-14}\noalign{\vskip 0.5ex}
& COIN$^{\star}$ & \mainmetric{99.9}{0.1} & \mainmetric{73.1}{1.5} & \mainmetric{86.5}{0.7} & \mainmetric{52.4}{1.0} & \mainmetric{57.7}{1.0} & \mainmetric{65.2}{0.9} & \mainmetric{100.0}{0.0} & \mainmetric{60.7}{1.5} & \mainmetric{80.4}{0.8} & \mainmetric{51.8}{0.9} & \mainmetric{63.3}{1.0} & \mainmetric{71.2}{0.8} \\
& +Ours & \mainmetric{99.2}{0.2} & \mainmetricgain{85.1}{1.2} & \mainmetricgain{92.1}{0.6} & \mainmetricgain{55.1}{0.9} & \mainmetricgain{60.9}{1.0} & \mainmetricgain{68.3}{0.9} & \mainmetric{100.0}{0.1} & \mainmetricgain{78.2}{1.4} & \mainmetricgain{89.1}{0.7} & \mainmetricgain{53.6}{0.9} & \mainmetricgain{67.3}{1.0} & \mainmetricgain{74.6}{0.8} \\
\noalign{\vskip 0.5ex}\cdashline{3-14}\noalign{\vskip 0.5ex}
& FT-M & \mainmetric{100.0}{0.0} & \mainmetric{28.4}{0.9} & \mainmetric{64.2}{0.4} & \mainmetric{39.0}{0.9} & \mainmetric{36.5}{1.0} & \mainmetric{45.8}{1.0} & \mainmetric{100.0}{0.0} & \mainmetric{59.4}{1.5} & \mainmetric{79.7}{0.8} & \mainmetric{50.9}{0.9} & \mainmetric{62.1}{1.0} & \mainmetric{70.4}{0.8} \\
& +Ours & \mainmetric{98.9}{0.3} & \mainmetricgain{37.4}{1.0} & \mainmetricgain{68.2}{0.5} & \mainmetricgain{44.1}{0.9} & \mainmetricgain{42.5}{1.1} & \mainmetricgain{51.8}{1.0} & \mainmetric{100.0}{0.0} & \mainmetricgain{78.0}{1.4} & \mainmetricgain{89.0}{0.7} & \mainmetricgain{53.2}{0.9} & \mainmetricgain{67.1}{1.0} & \mainmetricgain{74.4}{0.8} \\
\noalign{\vskip 0.5ex}\cdashline{3-14}\noalign{\vskip 0.5ex}
& $\mu$KE & \mainmetric{98.7}{0.3} & \mainmetric{31.5}{0.7} & \mainmetric{65.1}{0.4} & \mainmetric{33.1}{0.8} & \mainmetric{30.1}{0.9} & \mainmetric{41.9}{0.9} & \mainmetric{98.1}{0.5} & \mainmetric{50.7}{1.2} & \mainmetric{74.4}{0.7} & \mainmetric{38.8}{0.8} & \mainmetric{42.3}{1.0} & \mainmetric{53.2}{0.9} \\
& +Ours & \mainmetricgain{99.6}{0.1} & \mainmetricgain{47.4}{0.9} & \mainmetricgain{73.5}{0.5} & \mainmetricgain{38.4}{0.8} & \mainmetricgain{38.5}{1.0} & \mainmetricgain{50.2}{0.9} & \mainmetricgain{100.0}{0.0} & \mainmetric{49.3}{1.1} & \mainmetricgain{74.6}{0.6} & \mainmetricgain{39.3}{0.8} & \mainmetric{40.5}{1.0} & \mainmetric{51.8}{0.9} \\
\noalign{\vskip 0.5ex}\cdashline{3-14}\noalign{\vskip 0.5ex}
& UnKE & \mainmetric{85.8}{0.9} & \mainmetric{30.0}{0.7} & \mainmetric{57.9}{0.6} & \mainmetric{31.7}{0.8} & \mainmetric{29.5}{1.0} & \mainmetric{41.2}{0.9} & \mainmetric{94.2}{0.7} & \mainmetric{41.3}{1.1} & \mainmetric{67.8}{0.7} & \mainmetric{35.7}{0.8} & \mainmetric{37.0}{1.0} & \mainmetric{48.3}{0.9} \\
& +Ours & \mainmetricgain{89.5}{0.8} & \mainmetricgain{44.0}{0.8} & \mainmetricgain{66.8}{0.6} & \mainmetricgain{38.6}{0.9} & \mainmetricgain{38.2}{1.0} & \mainmetricgain{50.0}{0.9} & \mainmetric{93.9}{0.6} & \mainmetricgain{47.2}{1.1} & \mainmetricgain{70.6}{0.7} & \mainmetricgain{38.7}{0.8} & \mainmetricgain{40.7}{1.0} & \mainmetricgain{51.4}{0.9} \\

\midrule[0.2ex]
\multirow[c]{10}{*}[-2ex]{%
  \rotatebox[origin=c]{90}{\bfseries UnFine-MQ}}%
& LoRA & \mainmetric{100.0}{0.0} & \mainmissing & \mainmetric{100.0}{0.0} & \mainmetric{55.7}{1.1} & \mainmetric{58.2}{1.3} & \mainmetric{65.9}{1.1} & \mainmetric{100.0}{0.0} & \mainmissing & \mainmetric{100.0}{0.0} & \mainmetric{50.2}{1.1} & \mainmetric{65.8}{1.3} & \mainmetric{72.8}{1.1} \\
& +Ours & \mainmetric{99.4}{0.2} & \mainmissing & \mainmetric{99.4}{0.2} & \mainmetricgain{63.3}{1.1} & \mainmetricgain{72.3}{1.1} & \mainmetricgain{78.1}{0.9} & \mainmetric{100.0}{0.0} & \mainmissing & \mainmetric{100.0}{0.0} & \mainmetricgain{54.7}{1.1} & \mainmetricgain{73.4}{1.1} & \mainmetricgain{78.4}{1.0} \\
\noalign{\vskip 0.5ex}\cdashline{3-14}\noalign{\vskip 0.5ex}
& COIN$^{\star}$ & \mainmetric{100.0}{0.0} & \mainmissing & \mainmetric{100.0}{0.0} & \mainmetric{53.0}{1.1} & \mainmetric{55.1}{1.3} & \mainmetric{62.8}{1.1} & \mainmetric{100.0}{0.0} & \mainmissing & \mainmetric{100.0}{0.0} & \mainmetric{49.2}{1.1} & \mainmetric{57.0}{1.3} & \mainmetric{65.1}{1.2} \\
& +Ours & \mainmetric{99.6}{0.2} & \mainmissing & \mainmetric{99.6}{0.2} & \mainmetricgain{56.0}{1.1} & \mainmetricgain{60.2}{1.2} & \mainmetricgain{67.8}{1.1} & \mainmetric{100.0}{0.0} & \mainmissing & \mainmetric{100.0}{0.0} & \mainmetricgain{51.2}{1.1} & \mainmetricgain{62.8}{1.2} & \mainmetricgain{70.0}{1.1} \\
\noalign{\vskip 0.5ex}\cdashline{3-14}\noalign{\vskip 0.5ex}
& FT-M & \mainmetric{100.0}{0.0} & \mainmissing & \mainmetric{100.0}{0.0} & \mainmetric{42.0}{1.1} & \mainmetric{39.3}{1.3} & \mainmetric{48.1}{1.1} & \mainmetric{100.0}{0.0} & \mainmissing & \mainmetric{100.0}{0.0} & \mainmetric{48.5}{1.1} & \mainmetric{57.4}{1.3} & \mainmetric{65.4}{1.2} \\
& +Ours & \mainmetric{97.8}{0.5} & \mainmissing & \mainmetric{97.8}{0.5} & \mainmetricgain{46.1}{1.1} & \mainmetricgain{44.0}{1.3} & \mainmetricgain{52.6}{1.2} & \mainmetric{100.0}{0.0} & \mainmissing & \mainmetric{100.0}{0.0} & \mainmetricgain{50.8}{1.2} & \mainmetricgain{61.6}{1.3} & \mainmetricgain{68.6}{1.1} \\
\noalign{\vskip 0.5ex}\cdashline{3-14}\noalign{\vskip 0.5ex}
& $\mu$KE & \mainmetric{98.5}{0.5} & \mainmissing & \mainmetric{98.5}{0.5} & \mainmetric{35.7}{0.9} & \mainmetric{33.1}{1.2} & \mainmetric{45.1}{1.0} & \mainmetric{98.5}{0.5} & \mainmissing & \mainmetric{98.5}{0.5} & \mainmetric{35.9}{1.0} & \mainmetric{40.5}{1.2} & \mainmetric{50.0}{1.1} \\
& +Ours & \mainmetricgain{99.6}{0.2} & \mainmissing & \mainmetricgain{99.6}{0.2} & \mainmetricgain{38.1}{1.0} & \mainmetricgain{38.1}{1.1} & \mainmetricgain{48.4}{1.0} & \mainmetricgain{99.8}{0.2} & \mainmissing & \mainmetricgain{99.8}{0.2} & \mainmetricgain{40.8}{1.0} & \mainmetricgain{41.5}{1.1} & \mainmetricgain{52.3}{1.0} \\
\noalign{\vskip 0.5ex}\cdashline{3-14}\noalign{\vskip 0.5ex}
& UnKE & \mainmetric{84.9}{1.2} & \mainmissing & \mainmetric{84.9}{1.2} & \mainmetric{33.6}{0.9} & \mainmetric{32.8}{1.1} & \mainmetric{43.8}{1.0} & \mainmetric{92.3}{1.0} & \mainmissing & \mainmetric{92.3}{1.0} & \mainmetric{36.1}{1.0} & \mainmetric{38.2}{1.3} & \mainmetric{49.3}{1.2} \\
& +Ours & \mainmetricgain{85.6}{1.1} & \mainmissing & \mainmetricgain{85.6}{1.1} & \mainmetricgain{35.5}{0.9} & \mainmetricgain{35.0}{1.1} & \mainmetricgain{46.2}{1.0} & \mainmetric{92.0}{0.8} & \mainmissing & \mainmetric{92.0}{0.8} & \mainmetricgain{40.1}{1.0} & \mainmetricgain{41.1}{1.2} & \mainmetricgain{51.8}{1.1} \\
\bottomrule[0.4ex]
\end{tabular}
}
\endgroup
\end{table}

\subsection{Atomic-Fact Recall by Passage Position}

\begin{figure}[tb]
\centering
\includegraphics[width=0.95\linewidth]{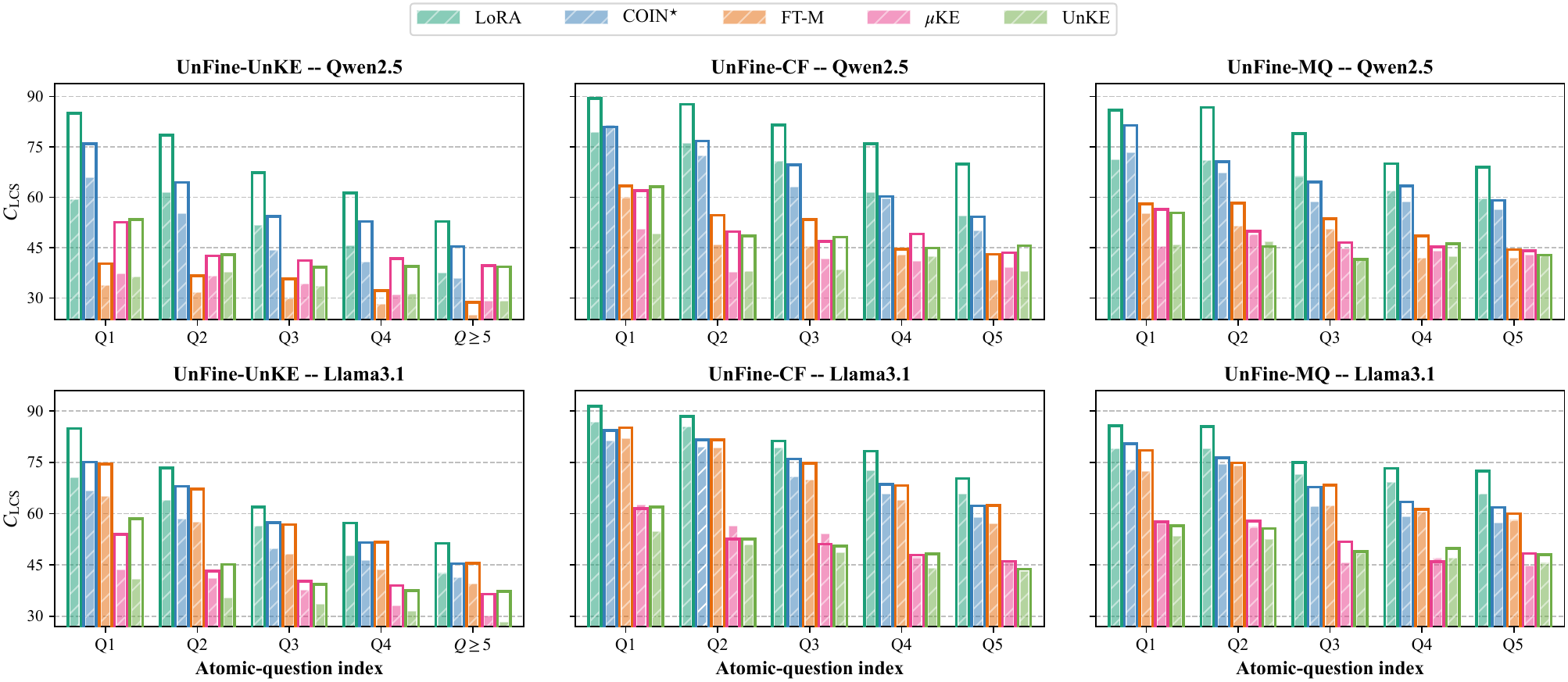}
\caption{
Atomic-fact recall by question position; higher is better.
Solid and transparent bars show performance with and
without \name. Unfilled area represents the performance gap.
}
\label{fig:qidx-depth}
\end{figure}

Fig.~\ref{fig:qidx-depth} reports average atomic-fact recall
by question index, which approximately tracks the corresponding fact's
position; complete results are shown in App.~\ref{app:atomic-positions}.

Consistent with \citet{zhou2026context}, 
atomic-fact recall declines toward
later question positions.
Together with the lower pre-edit losses observed at later passage positions (Fig.~\ref{fig:token-loss}), these observations support
our account of context-induced difficulty underestimation.

Moreover, 
applying \name raises recall for most
editor--position pairs across the passage, with gains observed across all five editors, three
benchmarks, and both LLM backbones.
The improvement persists at later positions: the average gain at late 
$Q\!\geq\!5$ is approximately $6.0$ points, comparable to
the gains observed
from $Q2$ -- $Q4$.
The comparable gains at later positions show that focused
views remain effective when longer preceding contexts provide stronger support
under teacher forcing.

\subsection{General Capability Preservation}

\begin{figure}[tbp!]
\centering
\includegraphics[width=0.95\linewidth]{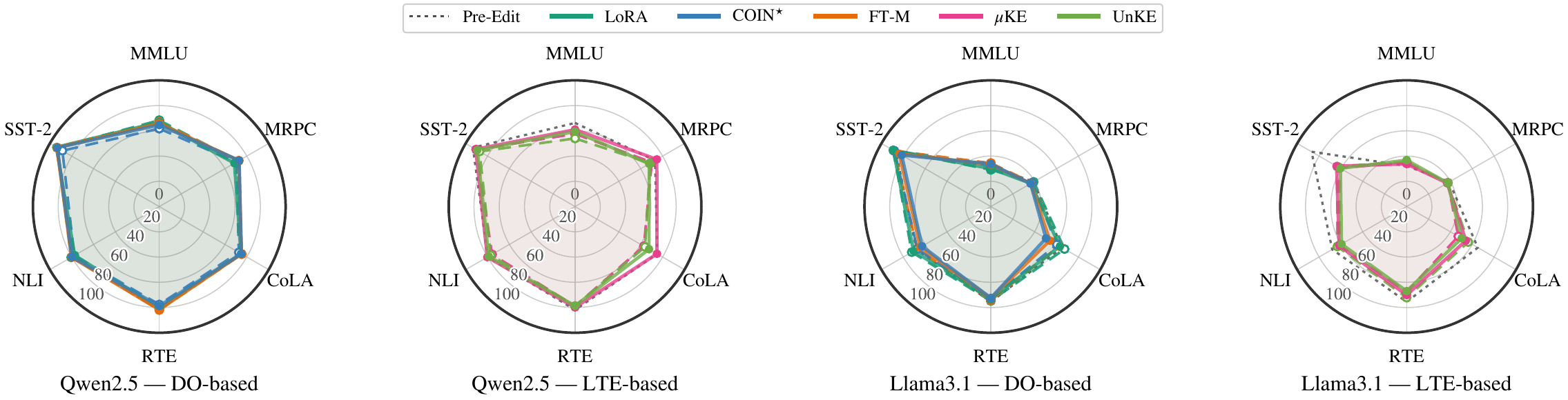}
\caption{
General capability (likelihood F1) after editing on UnFine-UnKE.
Solid and dashed lines show performance with and without \name, respectively.
}
\label{fig:general-capability}
\end{figure}

We evaluate the edited models on six general-purpose language-understanding
and reasoning tasks; dataset details are in App.~\ref{app:general-capability}. We summarize the performance in Fig.~\ref{fig:general-capability};  
numbers are in
Tab.~\ref{tab:general-capability-scores}.

Across the ten editor--backbone combinations, macro-average
F1 changes relative to the base editors by at most $4.3$ points in either direction. Performance remains
stable or improves on Qwen2.5, with the clearest gains under LTE. On Llama3.1,
$\mu$KE improves, whereas the DO editors and UnKE show moderate aggregate
decreases driven by task-specific variation. The results therefore support
broad preservation rather than uniform gains in general capability.

\subsection{Ablation Studies}

\begin{wraptable}{r}{0.49\textwidth}
\centering
\captionsetup{
    font=footnotesize,
    justification=raggedright,
    singlelinecheck=false
}
\vspace{-1.3\baselineskip}

\caption{
Ablations on Qwen2.5 using a subset of UnFine-UnKE.
}
\label{tab:ablation}

\begingroup
\footnotesize
\renewcommand{\arraystretch}{1.08}

\begin{adjustbox}{max width=\linewidth}
\begin{tabular}{@{}>{\bfseries}r ccccc@{}}
\toprule[0.4ex]
& \multicolumn{2}{c}{\textbf{Holistic (ROUGE-L)}}
& \multicolumn{3}{c}{\textbf{Atom Q}} \\
\cmidrule(lr){2-3}
\cmidrule(lr){4-6}

\textbf{Method}
& \textbf{Orig Q}
& \textbf{Reph Q}
& \textbf{ROUGE-L}
& \textbf{Hit Rate}
& $\mathbf{C}_{\mathrm{LCS}}$ \\
\midrule

FT-M (DO)
& 100.0 & 81.3 & 20.8 & 15.9 & 28.8 \\

\noalign{\vskip 0.3ex}\cdashline{2-6}\noalign{\vskip 0.3ex}

No Pert.
& 100.0 & 81.2 & 21.0 & 15.5 & 29.2 \\

ID Pert.
& 100.0 & 83.3 & 22.1 & 16.1 & 30.3 \\

\noalign{\vskip 0.3ex}\cdashline{2-6}\noalign{\vskip 0.3ex}

+Ours
& 99.4 & 84.6 & 25.1 & 19.4 & 33.3 \\

\midrule[0.2ex]

$\mu$KE (LTE)
& 98.5 & 82.1 & 24.6 & 15.7 & 35.9 \\
\noalign{\vskip 0.3ex}\cdashline{2-6}\noalign{\vskip 0.3ex}

No Pert.
& 99.6 & 70.3 & 25.1 & 17.9 & 37.5 \\

ID Pert.
& 99.6 & 76.2 & 26.8 & 21.1 & 40.7 \\

Sent.-only
& 40.1 & 38.5 & 25.1 & 21.4 & 40.3 \\

Rev. order
& 48.9 & 48.9 & 29.1 & 22.1 & 44.7 \\
\noalign{\vskip 0.3ex}\cdashline{2-6}\noalign{\vskip 0.3ex}

+Ours
& 99.9 & 85.5 & 29.5 & 24.2 & 45.2 \\

\bottomrule[0.4ex]
\end{tabular}
\end{adjustbox}

\endgroup
\vspace{-1.3\baselineskip}
\end{wraptable}

We ablate \name with FT-M (DO) and $\mu$KE (LTE). Tab.~\ref{tab:ablation} examines the perturbation scheme shared by both paradigms and the two-stage writing strategy
used for LTE.

\textit{Perturbation scheme.}
Removing perturbation ({No Pert.}) but retaining the \name objective reduces atomic ROUGE-L for both editors; FT-M, for example, drops from $25.1$ to $21.0$. Perturbing
shared position IDs ({ID Pert.}), as in \citet{zhu2024pose} and
\citet{li2026shuffle}, partially recovers the loss but trails the key-only
design for both editors, supporting our asymmetric key perturbation.
For space, we defer additional sensitivity analysis on the perturbation scale $\sigma$ to App.~\ref{app:sigma-sensitivity}, where \name exhibits notable robustness to the choice of $\sigma$ over editor-specific ranges.

\textit{LTE writing strategy.}
The \emph{Sent.-only} variant removes passage-level optimization, whereas
\emph{Rev. order} retains both stages but reverses them. Both variants sharply
reduce holistic recall; for example, removing the
passage-level stage lowers Orig Q/Reph Q from $99.9/85.5$ to $40.1/38.5$.
Reversing the order similarly hurts holistic recall while nearly preserving
atomic ROUGE-L ($29.1$ vs.\ $29.5$), showing that strong atomic recall alone
does not preserve the edited passage as a coherent whole.
The passage-level
stage and its placement after sentence-level optimization are therefore both
important. 

\subsection{Case Studies}

We conclude this section with case studies on \name. 
Tab.~\ref{tab:d3-cases} and App.~\ref{app:d3-cases} present
representative failures across editors and backbones. Base editors
may hallucinate or revert to prior knowledge, resulting in UKE failures. Equipped with \name, the edited models instead recover the
target atomic facts and give fluent answers that remain relevant to the
corresponding atomic questions.

\begin{table}[H]
\centering
\caption{Atomic-fact recall case studies, with errors highlighted.}
\label{tab:d3-cases}
\begingroup
\newcommand{\casewrong}[1]{\textcolor{red!80!black}{\textbf{#1}}}
\newcommand{\caseright}[1]{\textcolor{green!50!black}{\textbf{#1}}}
\fontsize{6.8}{7.6}\selectfont
\setlength{\tabcolsep}{2.2pt}
\renewcommand{\arraystretch}{1.08}
\begin{tabular}{@{}
>{\raggedright\arraybackslash}m{0.078\linewidth}
>{\raggedleft\arraybackslash}m{0.055\linewidth}
>{\raggedright\arraybackslash}m{0.085\linewidth}
>{\raggedright\arraybackslash}m{0.238\linewidth}
>{\raggedright\arraybackslash}m{0.238\linewidth}
>{\raggedright\arraybackslash}m{0.238\linewidth}@{}}
\toprule[0.4ex]
\textbf{Model} &
\textbf{Method} &
\textbf{Dataset} &
\textbf{Atomic query (target)} &
\textbf{Base-editor output} &
\textbf{+Ours output} \\
\midrule[0.2ex]

\multirow[c]{3}{*}[-3ex]{Qwen2.5} &
FT-M &
UnFine-CF &
What year was Charles Alfred Pillsbury born? (1842) &
\casewrong{1850} &
\caseright{1842} \\

\noalign{\vskip 0.5ex}
\cdashline{2-6}
\noalign{\vskip 0.5ex}

&
COIN$^{\star}$ &
UnFine-UnKE &
At what age was Henry Feilden elected to the local council? (25) &
Henry Feilden was elected to the local council at the age of
\casewrong{35}. &
Henry Feilden was elected to the local council at the age of
\caseright{25}. \\

\noalign{\vskip 0.5ex}
\cdashline{2-6}
\noalign{\vskip 0.5ex}

&
$\mu$KE &
UnFine-CF &
What region is Sheffield located in? (South Yorkshire) &
Sheffield is a city located in the
\casewrong{northern region of England}. &
Sheffield, located in
\caseright{South Yorkshire}, England. \\

\midrule[0.2ex]

\multirow[c]{2}{*}[-3ex]{Llama3.1} &
LoRA &
UnFine-UnKE &
How many people does Herlyn Espinal employ in her company?
(over 50 people) &
Herlyn Espinal is a politician and
\casewrong{doesn't employ anyone} \ldots &
Herlyn Espinal is the founder and CEO \ldots{} employing
\caseright{over 50 people} \ldots \\

\noalign{\vskip 0.5ex}
\cdashline{2-6}
\noalign{\vskip 0.5ex}

&
UnKE &
UnFine-MQ &
Which year was SC Atl\'etico founded? (1903) &
SC Atl\'etico, a Portuguese sports club, was founded in
\casewrong{1919}, marking \ldots &
SC Atl\'etico, a renowned sports club based in Madrid, Spain,
was founded in \caseright{1903}. \\

\bottomrule[0.4ex]
\end{tabular}
\endgroup
\end{table}

\section{Related Work}
\label{sec:related_work}

\textit{Knowledge Editing}
aims to update specific factual knowledge in a pretrained
model and preserve unrelated knowledge and the model's general
capabilities~\citep{wang2023knowledge,zhang2024comprehensive}.
Optimization-based approaches learn model updates with the
editing objective, including selected layer optimization~\citep{zhang2024comprehensive} or learning multiple copies~\citep{wang2024wise}. Parameter-efficient methods have also been applied~\citep{hu2022lora,wang2024roselora,wu2024reft,liu2025unlocking}.
On the other hand, Locate-Then-Edit (LTE) methods first identify where a fact is stored,
then compute the weight change often in closed form~\citep{meng2022rome,meng2023memit,li2023pmet,fang2025alphaedit}.
\textit{Unstructured Knowledge Editing} (UKE) replaces the curated triple with a free-form
passage~\citep{wu2024akew,yang2025mirage}. UnKE makes transformer blocks
editable~\citep{deng2024unke}, AnyEdit decomposes passages into sequential
chunks~\citep{jiang2025anyedit}, and $\mu$KE coordinates sentence-level updates
through a nested editing structure~\citep{su2025muke}.
COIN identifies context reliance during editing and mitigates it by
aligning predictions under global and local
contexts~\citep{zhou2026context}.
FABLE uses externally synthesized atomic-QA supervision to write shallow-layer
anchors before fitting the passage holistically, and introduces UnFine to
measure atomic-fact access~\citep{wang2026fable}.
In concurrent work, HPSE uses hybrid rollouts to recover missing facts with token-level
feedback from a privileged in-context state~\citep{liu2026hpse}.
In contrast, our \name targets the learning signal by disrupting contextual
support that can suppress gradients before a fact becomes independently
retrievable.

\textit{Positional Encoding.}
RoPE encodes relative positions by rotating queries and
keys~\citep{su2021roformer} and is widely used in modern
LLMs~\citep{dubey2024llama3,qwen2024qwen25,liu2024deepseek}.
Long-context adaptation also manipulates positions:
\citet{peng2024yarn} scale RoPE frequencies and attention,
\citet{zhu2024pose} shift the indices of entire chunks, and
\citet{li2026shuffle} shift suffix indices and distill the perturbed
view from native-view predictions.
These methods perturb positions to improve the model's tolerance to
positional variation. \name instead uses perturbation for a fundamentally
different purpose: by randomizing only preceding-context keys and
supervising with ground truth rather than native-view predictions, it
breaks contextual shortcuts and makes the edited fact robust across
contexts.
\section{Conclusion}
\label{sec:conclusion}

In this work, we identify \emph{context-induced difficulty underestimation} in UKE and propose \name, which constructs focused positional views to make contextual shortcuts less reliable during editing.
Across five editors, two LLM backbones, and three benchmarks, \name broadly improves fine-grained fact recall and retains holistic recall.
Encouraged by these consistent improvements, we plan to extend \name to handle correlated, continually evolving unstructured knowledge, where new updates may reinforce, revise, or invalidate earlier ones.
By tracking these dependencies and selectively preserving or revising previously edited knowledge, we aim to pave the way toward practical UKE in continuously changing real-world environments.

\bibliography{main}

\appendix
\newpage
\appendix

\section{Omitted Theoretical Analysis}
\label{app:theory}

In this section we develop the analysis omitted in the main body due to page limits.

\subsection{Token-Weighted Aggregation Preserves the Passage Objective}

Let the editing passage contain $M$ sentences with token spans
$\mathcal I_m$, $m=1,\ldots,M$, where
$n_m=|\mathcal I_m|$ and $N=\sum_{m=1}^M n_m=|\cv|$. Without perturbation, the
token-averaged loss on sentence $m$ is
\begin{align*}
    \ell_m(\theta,\mathbf 0)
    =
    -\frac{1}{n_m}
    \sum_{t\in\mathcal I_m}
    \log\pi_\theta(c_t\mid\xv,\cv_{<t}).
\end{align*}

\begin{proposition}[Token-weighted sentence aggregation]
\label{prop:app:sampling}
As the sentence spans form a disjoint partition of the passage, weighting
each sentence loss by $n_m/N$ recovers the passage NLL exactly:
\begin{align}
    \sum_{m=1}^{M}\frac{n_m}{N}\ell_m(\theta,\mathbf 0)
    =
    -\frac{1}{N}
    \sum_{t=1}^{N}
    \log\pi_\theta(c_t\mid\xv,\cv_{<t})
    =
    \mathcal{L}_{\mathrm{NLL}}(\theta).
    \label{eq:app:unbiased}
\end{align}
\end{proposition}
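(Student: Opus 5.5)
The plan is a direct algebraic verification. First, substitute the definition of $\ell_m(\theta,\mathbf 0)$ into the left-hand side of \eqref{eq:app:unbiased}. The weight $n_m/N$ multiplies the prefactor $-1/n_m$. Since each sentence is nonempty ($n_m\ge 1$), the factor $n_m$ cancels, and each summand becomes $-\frac{1}{N}\sum_{t\in\mathcal I_m}\log\pi_\theta(c_t\mid\xv,\cv_{<t})$.

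Next, I need to check that zero offsets leave the conditional probabilities unchanged. With $\boldsymbol D_m=\mathbf 0$, every key rotation $e^{\mathrm{i}(s+d_s)\boldsymbol\Theta}$ reduces to the native $e^{\mathrm{i}s\boldsymbol\Theta}$. The queries are untouched by construction, so every attention score $\widetilde a_{ts}$ equals $a_{ts}$. The perturbed forward pass therefore coincides with the native teacher-forced pass on $(\xv,\cv_{<t})$. This justifies writing $\ell_m(\theta,\mathbf 0)$ with the unperturbed conditionals, as the appendix already does.

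The key step is to use that $\{\mathcal I_m\}_{m=1}^M$ is a disjoint partition of $\{1,\ldots,N\}$. Then
\begin{align*}
\sum_{m=1}^M\sum_{t\in\mathcal I_m}f(t)=\sum_{t=1}^N f(t)
\end{align*}
for any $f$, here $f(t)=\log\pi_\theta(c_t\mid\xv,\cv_{<t})$. The double sum therefore collapses to the full passage sum, which is exactly \eqref{eq:nll} with $|\cv|=N$.

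There is no real obstacle. The only point needing care is that the conditioning context for token $t$ is the full teacher-forced prefix $\cv_{<t}$ in both objectives, not a sentence-local prefix. Because of this, the partition changes only how the sum is grouped, not the summands. Noting $N=\sum_m n_m$ ensures the weights sum to one and the normalizations agree.
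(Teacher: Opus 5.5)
Your proof is correct and matches the paper's argument: substitute the definition of $\ell_m(\theta,\mathbf 0)$, cancel $n_m$, and use the fact that the disjoint spans $\mathcal I_m$ partition the passage to collapse the double sum into the passage NLL. Your extra check that zero offsets reproduce the native forward pass is harmless; the paper does not argue this separately but builds it into its definition of $\ell_m(\theta,\mathbf 0)$.
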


\begin{proof}
We expand the weighted sum. 
By definition:
\begin{align*}
\sum_{m=1}^{M}\frac{n_m}{N}\ell_m(\theta,\mathbf 0)
&=\frac{1}{N}\sum_{m=1}^{M}n_m\ell_m(\theta,\mathbf 0)\\
&=-\frac{1}{N}\sum_{m=1}^{M}n_m\frac{1}{n_m}
  \sum_{t\in\mathcal I_m}\log\pi_\theta(c_t\mid\xv,\cv_{<t})\\
&=-\frac{1}{N}\sum_{m=1}^{M}\sum_{t\in\mathcal I_m}
\log\pi_\theta(c_t\mid\xv,\cv_{<t})\\
&\overset{(a)}{=}-\frac{1}{N}\sum_{t=1}^{N}\log\pi_\theta(c_t\mid\xv,\cv_{<t})\\
&=\mathcal L_{\mathrm{NLL}}(\theta).
\end{align*}
The step $(a)$ uses the fact that the disjoint sentence spans partition the passage. This completes our proof. 
\end{proof}

\subsection{Blockwise RoPE-Frequency Contraction}

Following the RoPE convention~\citep{su2021roformer,peng2024yarn},
we focus on a single attention head of even width $d_h=2R$.
Write $\bar{\qv}_t=\mathbf W_Q\hv_t$ and
$\bar{\kv}_s=\mathbf W_K\hv_s$ for the pre-RoPE query and key
in real coordinates, each partitioned into $R$ two-dimensional
components. RoPE at position $p$ takes the form
\begin{align*}
    \mathbf R_p
    =
    \operatorname{diag}
    \left(
        \operatorname{Rot}(\omega_1p),\ldots,
        \operatorname{Rot}(\omega_Rp)
    \right),
\end{align*}
where $\operatorname{Rot}(\alpha)$ denotes the $2\times2$
rotation matrix by angle $\alpha$.
Using the fact that
$\operatorname{Rot}(\alpha)^\top=\operatorname{Rot}(-\alpha)$
and
$\operatorname{Rot}(\alpha)\operatorname{Rot}(\beta)
=\operatorname{Rot}(\alpha+\beta)$,
we have
\begin{align*}
    \mathbf R_t^\top\mathbf R_s
    =
    \operatorname{diag}
    \left(
        \operatorname{Rot}(\omega_1(s-t)),\ldots,
        \operatorname{Rot}(\omega_R(s-t))
    \right).
\end{align*}
Therefore, the native query--key logit can be written as
\begin{align*}
a_{ts}
&=
\frac{(\mathbf R_t\bar{\qv}_t)^\top
(\mathbf R_s\bar{\kv}_s)}{\sqrt{d_h}}
=
\frac{1}{\sqrt{d_h}}
\sum_{r=1}^{R}
\bar{\qv}_{t,r}^{\top}
\operatorname{Rot}(\omega_r(s-t))
\bar{\kv}_{s,r}.
\end{align*}

Now fix a target sentence $m$.
As outlined in Sec.~\ref{sec:method:krp}, its preceding context is
partitioned into separate blocks: the prompt and each preceding
sentence. Let $\mathcal P$ denote the prompt token positions and
$\mathcal I_j$ the token positions of sentence $j$. The set of
context blocks for target sentence $m$ is
\begin{align*}
    \mathcal B_m
    \triangleq
    \{\mathcal P,\mathcal I_1,\ldots,\mathcal I_{m-1}\}.
\end{align*}
For each block $b\in\mathcal B_m$, let $D_{mb}$ denote the
scalar positional offset shared by its eligible key positions.
Thus, $m$ indexes the target sentence, and $b$ identifies a
block in that sentence's preceding context. We collect these
block offsets into
\begin{align*}
    \boldsymbol D_m
    \triangleq
    (D_{mb})_{b\in\mathcal B_m},
\end{align*}
which is the offset vector used in
Eqs.~\eqref{eq:do:sent} and~\eqref{eq:do}.

As will be specified in App.~\ref{app:offsets} shortly, offsets are sampled independently across blocks at the same scale $\sigma$.
Each sampled offset is shared across attention layers and
resampled at every optimization step.
An eligible key position $s\in b$ uses $d_s=D_{mb}$.
The target sentence's key positions and the key at the readout
position of its first token use $d_s=0$, and all query rotations
remain unchanged.

For an eligible key position $s\in b$, the perturbed logit is
\begin{align}
    \widetilde a_{ts}(D_{mb})
    =
    \frac{1}{\sqrt{d_h}}
    \sum_{r=1}^{R}
    \bar{\qv}_{t,r}^{\top}
    \operatorname{Rot}
    \left(\omega_r(s-t+D_{mb})\right)
    \bar{\kv}_{s,r}.
    \label{eq:app:krp-score}
\end{align}
We show that the logit depends
only on the offset $D_{mb}$ of the block containing $s$,
whereas $\boldsymbol D_m$ specifies the perturbations for
the entire sentence-level forward pass.

\begin{proposition}[Blockwise spectral contraction]
\label{prop:app:contraction}
Fix a target sentence $m$, a context block $b\in\mathcal B_m$,
and the pre-RoPE query and key representations in
Eq.~\eqref{eq:app:krp-score}.
Suppose that $D_{mb}$ has a distribution symmetric about zero,
and define
\begin{align*}
    \rho_{mbr}
    \triangleq
    \mathbb E_{D_{mb}}
    \left[\cos(\omega_rD_{mb})\right].
\end{align*}
Then
\begin{align}
    \mathbb E_{D_{mb}}
    \left[\widetilde a_{ts}(D_{mb})\right]
    &=
    \frac{1}{\sqrt{d_h}}
    \sum_{r=1}^{R}
    \rho_{mbr}
    \bar{\qv}_{t,r}^{\top}
    \operatorname{Rot}(\omega_r(s-t))
    \bar{\kv}_{s,r},
    \qquad
    |\rho_{mbr}|\leq1.
    \label{eq:app:expected-score}
\end{align}

For the rounded Gaussian used by \name, let
$D_{mb}=\operatorname{round}(G_{mb})$ with
$G_{mb}\sim\mathcal N(0,\sigma^2)$.
For $\sigma>0$, the exact probability mass function is
\begin{align}
    \Pr(D_{mb}=k)
    &=
    \Phi\!\left(\frac{k+\frac12}{\sigma}\right)
    -
    \Phi\!\left(\frac{k-\frac12}{\sigma}\right),
    \qquad k\in\mathbb Z,
    \label{eq:app:rounded-pmf}
\end{align}
where $\Phi$ denotes the standard Gaussian cumulative
distribution function. The corresponding spectral multiplier is
\begin{align}
    \rho_{mbr}^{\mathrm{round}}
    =
    \sum_{k\in\mathbb Z}
    \Pr(D_{mb}=k)\cos(\omega_rk).
    \label{eq:app:rounded-rho}
\end{align}
Moreover, $|\rho_{mbr}^{\mathrm{round}}|<1$ whenever
$\sigma>0$ and $\omega_r\notin2\pi\mathbb Z$.

For the continuous Gaussian relaxation
$D_{mb}\sim\mathcal N(0,\sigma^2)$, the multiplier simplifies to
\begin{align}
    \rho_{mbr}^{\mathrm{cont}}
    =
    \exp\left(-\frac{\sigma^2\omega_r^2}{2}\right),
    \label{eq:app:gaussian-rho}
\end{align}
which decreases monotonically with $\sigma$ for every nonzero
RoPE frequency. At $\sigma=0$, both offset distributions
concentrate at zero and their multipliers equal $1$.
\end{proposition}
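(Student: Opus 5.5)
The plan is to work one two-dimensional frequency component at a time, since the logit in Eq.~\eqref{eq:app:krp-score} is a finite sum over $r$ and expectation is linear. For fixed $r$, write $\phi=\omega_r(s-t)$ and use the angle-addition identity
\begin{align*}
\operatorname{Rot}(\phi+\omega_rD)=\operatorname{Rot}(\phi)\operatorname{Rot}(\omega_rD)=\cos(\omega_rD)\operatorname{Rot}(\phi)+\sin(\omega_rD)\operatorname{Rot}(\phi)\mathbf J,
\end{align*}
where $\mathbf J=\operatorname{Rot}(\pi/2)$. Take expectations over $D_{mb}$ with $\bar{\qv}_{t,r},\bar{\kv}_{s,r}$ held fixed; they do not depend on $D_{mb}$ because the pre-RoPE representations are conditioned on. Symmetry of $D_{mb}$ about zero gives $\mathbb E[\sin(\omega_rD_{mb})]=0$, since $\sin$ is odd and bounded. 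The cosine term then yields the factor $\rho_{mbr}$, and summing over $r$ gives Eq.~\eqref{eq:app:expected-score}. The bound $|\rho_{mbr}|\le 1$ is immediate from $|\cos|\le1$ and Jensen's inequality, or equivalently from the triangle inequality.

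For the rounded Gaussian, the pmf in Eq.~\eqref{eq:app:rounded-pmf} follows because $\operatorname{round}(G)=k$ exactly when $G\in[k-\tfrac12,k+\tfrac12)$. The tie-breaking convention affects only measure-zero endpoints, so it is irrelevant. Standardizing then gives the $\Phi$ differences. The distribution is symmetric because $\Pr(D=-k)=\Pr(D=k)$ by the symmetry of $\Phi$, so the general formula applies and produces Eq.~\eqref{eq:app:rounded-rho}.

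The strict inequality is the step needing care. We have $|\rho|\le\sum_k p_k|\cos(\omega_rk)|$, and equality with $1$ would require $|\cos(\omega_rk)|=1$ for every $k$ with $p_k>0$. Moreover, the cosines must share a common sign, since otherwise cancellation already pushes the absolute value below $1$. When $\sigma>0$, every $p_k>0$, in particular $p_0,p_1>0$. Equality at $k=0$ forces the common sign to be $+$, so $\cos(\omega_r)=1$, i.e.\ $\omega_r\in2\pi\mathbb Z$, which is excluded. The excluded case $\cos(\omega_r)=-1$ with mixed signs must be ruled out explicitly: there $\rho=\sum_k p_k(-1)^k$, which is strictly less than $1$ in absolute value because both parities carry positive mass. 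This yields $|\rho^{\mathrm{round}}|<1$.

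For the continuous relaxation, $\mathbb E[\cos(\omega_rG)]=\operatorname{Re}\,\mathbb E[e^{\mathrm i\omega_rG}]$, which is the Gaussian characteristic function $e^{-\sigma^2\omega_r^2/2}$. This is strictly decreasing in $\sigma\ge0$ whenever $\omega_r\neq0$. At $\sigma=0$, both distributions are a point mass at $0$, so $\cos 0=1$. The only nonroutine part is the strict-contraction argument for the rounded case; everything else is linearity plus symmetry.
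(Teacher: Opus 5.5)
Your proof is correct and follows essentially the same route as the paper: linearity over frequency components, the rotation composition rule with symmetry killing the sine term, the pmf obtained by integrating the Gaussian density over $[k-\tfrac12,k+\tfrac12]$, strict contraction from $\Pr(D_{mb}=0),\Pr(D_{mb}=1)>0$, and the Gaussian characteristic function. The differences are cosmetic. You get $|\rho_{mbr}^{\mathrm{round}}|<1$ from the equality case of the triangle inequality, and your separate check of $\cos\omega_r=-1$ is already covered by your common-sign argument. The paper instead bounds $1-\rho_{mbr}^{\mathrm{round}}\geq\Pr(D_{mb}=1)(1-\cos\omega_r)>0$ and $1+\rho_{mbr}^{\mathrm{round}}\geq2\Pr(D_{mb}=0)>0$ directly. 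You also cite rather than derive $e^{-\sigma^2\omega_r^2/2}$, and you explicitly verify the symmetry of the rounded pmf, which the paper uses only implicitly.
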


\begin{proof}
We first establish the result for symmetric offsets, then
specialize it to the rounded Gaussian used by \name and
its continuous Gaussian relaxation.

\emph{General result for symmetric offsets.}
Since the distribution of $D_{mb}$ is symmetric about zero,
$D_{mb}$ and $-D_{mb}$ have the same distribution.
Therefore,
\begin{align*}
    \mathbb E\sin(\omega_rD_{mb})
    &=
    \mathbb E\sin(-\omega_rD_{mb})\\
    &=
    -\mathbb E\sin(\omega_rD_{mb}),
\end{align*}
which implies $\mathbb E\sin(\omega_rD_{mb})=0$.
Hence, the expected rotation is a scalar matrix:
\begin{align*}
    \mathbb E\operatorname{Rot}(\omega_rD_{mb})
    &=
    \begin{pmatrix}
        \mathbb E\cos(\omega_rD_{mb})
        &-\mathbb E\sin(\omega_rD_{mb})\\
        \mathbb E\sin(\omega_rD_{mb})
        &\mathbb E\cos(\omega_rD_{mb})
    \end{pmatrix}\\
    &=\rho_{mbr}\mathbf I_2.
\end{align*}
Using the composition rule for rotations,
\begin{align*}
    \operatorname{Rot}\left(\omega_r(s-t+D_{mb})\right)
    =
    \operatorname{Rot}\left(\omega_r(s-t)\right)
    \operatorname{Rot}(\omega_rD_{mb}).
\end{align*}
Substituting this identity into
Eq.~\eqref{eq:app:krp-score}, with the pre-RoPE
representations held fixed, gives
\begin{align*}
    \mathbb E_{D_{mb}}
    \left[\widetilde a_{ts}(D_{mb})\right]
    &=
    \frac{1}{\sqrt{d_h}}
    \sum_{r=1}^{R}
    \bar{\qv}_{t,r}^{\top}
    \operatorname{Rot}(\omega_r(s-t))
    \mathbb E\left[
        \operatorname{Rot}(\omega_rD_{mb})
    \right]
    \bar{\kv}_{s,r}\\
    &=
    \frac{1}{\sqrt{d_h}}
    \sum_{r=1}^{R}
    \rho_{mbr}
    \bar{\qv}_{t,r}^{\top}
    \operatorname{Rot}(\omega_r(s-t))
    \bar{\kv}_{s,r}.
\end{align*}
Moreover,
\begin{align*}
    |\rho_{mbr}|
    &=
    \left|\mathbb E\left[\cos(\omega_rD_{mb})\right]\right|\\
    &\leq
    \mathbb E\left[|\cos(\omega_rD_{mb})|\right]\\
    &\leq1.
\end{align*}
Consequently, the operator norm satisfies
\begin{align*}
    \left\|
        \mathbb E\left[\operatorname{Rot}(\omega_rD_{mb})\right]
    \right\|_{\mathrm{op}}
    =
    |\rho_{mbr}|
    \leq1.
\end{align*}

\emph{Rounded Gaussian case.}
For $\sigma>0$ and $k\in\mathbb Z$, nearest-integer
rounding gives
\begin{align*}
    \Pr(D_{mb}=k)
    &=
    \int_{k-1/2}^{k+1/2}
    \frac{1}{\sqrt{2\pi}\sigma}
    \exp\left(-\frac{g^2}{2\sigma^2}\right)\,dg\\
    &=
    \Phi\left(\frac{k+1/2}{\sigma}\right)
    -
    \Phi\left(\frac{k-1/2}{\sigma}\right).
\end{align*}
Since $D_{mb}$ is integer-valued,
\begin{align*}
    \rho_{mbr}^{\mathrm{round}}
    &=
    \mathbb E\left[\cos(\omega_rD_{mb})\right]\\
    &=
    \sum_{k\in\mathbb Z}
    \Pr(D_{mb}=k)\cos(\omega_rk).
\end{align*}
The series is absolutely convergent because
\begin{align*}
    \sum_{k\in\mathbb Z}
    \Pr(D_{mb}=k)|\cos(\omega_rk)|
    \leq1.
\end{align*}

For strict contraction, the strictly positive Gaussian density
at $\sigma>0$ gives
$\Pr(D_{mb}=0)>0$ and $\Pr(D_{mb}=1)>0$.
If $\omega_r\notin2\pi\mathbb Z$, then $\cos(\omega_r)<1$,
and hence we have
\begin{align*}
    1-\rho_{mbr}^{\mathrm{round}}
    &=
    \sum_{k\in\mathbb Z}
    \Pr(D_{mb}=k)\left[1-\cos(\omega_rk)\right]\\
    &\geq
    \Pr(D_{mb}=1)\left[1-\cos(\omega_r)\right]\\
    &>0.
\end{align*}
Similarly,
\begin{align*}
    1+\rho_{mbr}^{\mathrm{round}}
    &=
    \sum_{k\in\mathbb Z}
    \Pr(D_{mb}=k)\left[1+\cos(\omega_rk)\right]\\
    &\geq2\Pr(D_{mb}=0)\\
    &>0.
\end{align*}
Thus $-1<\rho_{mbr}^{\mathrm{round}}<1$, proving
$|\rho_{mbr}^{\mathrm{round}}|<1$.

\emph{Continuous relaxation case.}
Let $G\sim\mathcal N(0,\sigma^2)$ and write its characteristic
function as
$\varphi(\omega)=\mathbb E e^{\mathrm i\omega G}$.
For $\sigma>0$, differentiation under the expectation is
justified by $\mathbb E|G|<\infty$.
Let $f_\sigma$ denote the density of $\mathcal N(0,\sigma^2)$.
Since $gf_\sigma(g)=-\sigma^2f_\sigma'(g)$,
integration by parts gives
\begin{align*}
    \varphi'(\omega)
    &=
    \int_{\mathbb R}
    \mathrm i g e^{\mathrm i\omega g}f_\sigma(g)\,dg\\
    &=
    -\mathrm i\sigma^2
    \int_{\mathbb R}
    e^{\mathrm i\omega g}f_\sigma'(g)\,dg\\
    &=
    -\sigma^2\omega\varphi(\omega),
\end{align*}
where the boundary term vanishes because the Gaussian density
tends to zero at infinity.
Together with $\varphi(0)=1$, this differential equation yields
\begin{align*}
    \varphi(\omega)
    =
    \exp\left(-\frac{\sigma^2\omega^2}{2}\right).
\end{align*}
Therefore,
\begin{align*}
    \rho_{mbr}^{\mathrm{cont}}
    =
    \operatorname{Re}\varphi(\omega_r)
    =
    \exp\left(-\frac{\sigma^2\omega_r^2}{2}\right).
\end{align*}
For $\omega_r\neq0$ and $\sigma>0$,
\begin{align*}
    \frac{\partial}{\partial\sigma}
    \rho_{mbr}^{\mathrm{cont}}
    =
    -\sigma\omega_r^2
    \exp\left(-\frac{\sigma^2\omega_r^2}{2}\right)
    <0,
\end{align*}
which proves the stated monotonicity.
When $\sigma=0$, both the rounded and continuous offset
distributions concentrate at zero, giving $\rho_{mbr}=1$. This completes our proof.
\end{proof}

This proposition characterizes the direct key rotation for
fixed pre-RoPE states; changes to intermediate representations
under the full perturbed forward pass are outside its scope.
The bound $|\rho_{mbr}|\leq1$ concerns each expected
RoPE-frequency component and does not imply that the softmax
attention weight scales linearly by $\rho_{mbr}$.
The perturbation directly modifies only key rotations,
leaving query rotations, value vectors, and the causal mask
unchanged.

\subsection{Exact Loss Decomposition and Editing Gradients}
\label{app:consistency}

Here
we connect the proposed positional perturbation to the token-level learning signal. 
Fix a target token $c_t$ in sentence $m$ and omit the token index $t$ for brevity. Let
$\boldsymbol D_m=(D_{mb})_{b\in\mathcal B_m}$ denote the
sentence's vector of context-block offsets, as defined in
the preceding subsection.

For DO, let $z_{m,v}(\boldsymbol D_m;\theta)$ denote the
pre-softmax logit for vocabulary token $v$ at the prediction
step for $c_t$. These logits are produced by the full
perturbed forward pass, including the changes to intermediate
representations induced by the offsets.
The native and perturbed correct-versus-rest margins are
\begin{align*}
    m_{m,0}(\theta)
    &\triangleq
    z_{m,c_t}(\mathbf 0;\theta)
    -
    \log\sum_{v\neq c_t}
    \exp\bigl(z_{m,v}(\mathbf 0;\theta)\bigr),\\
    m_{m,\boldsymbol D_m}(\theta)
    &\triangleq
    z_{m,c_t}(\boldsymbol D_m;\theta)
    -
    \log\sum_{v\neq c_t}
    \exp\bigl(z_{m,v}(\boldsymbol D_m;\theta)\bigr).
\end{align*}
Both margins use the same current parameters $\theta$;
the native margin is evaluated with zero positional offsets. 
Then for a margin value $u$, the token cross-entropy and its
negative derivative are
\begin{align*}
    \ell(u)
    &=\log(1+\exp(-u)),\\
    C(u)
    &\triangleq-\ell'(u)
    =\frac{1}{1+\exp(u)}.
\end{align*}

\paragraph{Exact loss decomposition.}
Define
\begin{align*}
    \mu_m(\theta)
    &\triangleq
    \mathbb E_{\boldsymbol D_m}
    \left[m_{m,\boldsymbol D_m}(\theta)\right],\\
    p_{m,\boldsymbol D_m}(\theta)
    &\triangleq
    \operatorname{sigmoid}
    \left(m_{m,\boldsymbol D_m}(\theta)\right),\\
    p_{m,\mu}(\theta)
    &\triangleq
    \operatorname{sigmoid}\left(\mu_m(\theta)\right).
\end{align*}
Here $p_{m,\boldsymbol D_m}$ is the softmax probability
assigned to the correct token, and $p_{m,\mu}$ is the
probability corresponding to the mean margin.
In general,
$p_{m,\mu}\neq
\mathbb E_{\boldsymbol D_m}[p_{m,\boldsymbol D_m}]$.
We suppress the dependence on $\theta$ when it is clear
from context.
Then we have the following property.

\begin{proposition}[Mean-margin and view-consistency decomposition]
\label{prop:app:consistency}
Suppose that
$\mathbb E_{\boldsymbol D_m}
|m_{m,\boldsymbol D_m}|<\infty$.
Define
$\Delta_m\triangleq m_{m,0}-\mu_m$,
the token-specific quantity denoted by $\Delta$ in
Theorem~\ref{thm:krp:informal}.
Then
\begin{align}
    \mathbb E_{\boldsymbol D_m}
    \left[\ell(m_{m,\boldsymbol D_m})\right]
    &=
    \ell(\mu_m)+\mathcal R_m
    =
    \ell(m_{m,0}-\Delta_m)+\mathcal R_m,
    \label{eq:app:consistency}\\
    \mathcal R_m
    &\triangleq
    \mathbb E_{\boldsymbol D_m}
    \left[
        \operatorname{KL}\left(
            \operatorname{Bern}(p_{m,\mu})
            \,\|\,
            \operatorname{Bern}(p_{m,\boldsymbol D_m})
        \right)
    \right]
    \geq0.
    \notag
\end{align}
Moreover, $\mathcal R_m=0$ if and only if
$p_{m,\boldsymbol D_m}=p_{m,\mu}$ almost surely.
\end{proposition}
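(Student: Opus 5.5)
The plan is to prove the identity pointwise in the margin and then take expectations. The key observation is that $\ell(u)=\log(1+e^{-u})=-\log\operatorname{sigmoid}(u)$ is exactly the Bernoulli cross-entropy of the label ``correct'' against probability $\operatorname{sigmoid}(u)$. For any reference point $\mu$ and any $u$, write $p=\operatorname{sigmoid}(\mu)$ and $q=\operatorname{sigmoid}(u)$. Using $\log\frac{p}{1-p}=\mu$ and $\log\frac{q}{1-q}=u$, expanding the Bernoulli KL gives
\begin{align*}
\operatorname{KL}(\operatorname{Bern}(p)\,\|\,\operatorname{Bern}(q))
&=p\log\frac{p}{q}+(1-p)\log\frac{1-p}{1-q}\\
&=\ell(u)-\ell(\mu)-(1-p)(u-\mu).
\end{align*}
One can verify this by substituting $\log q=-\ell(u)$ and $\log(1-q)=-u-\ell(u)$, and likewise for $p$. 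The factor $(1-p)=C(\mu)=-\ell'(\mu)$ shows that this is just the Bregman divergence of the convex function $\ell$ between $u$ and $\mu$. This makes the nonnegativity transparent.

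Next I set $\mu=\mu_m$ and $u=m_{m,\boldsymbol D_m}$, then take $\mathbb E_{\boldsymbol D_m}$. The linear term is $-(1-p_{m,\mu})\,\mathbb E[m_{m,\boldsymbol D_m}-\mu_m]$, and it vanishes by the definition of $\mu_m$. This is the only place integrability is needed. The assumption $\mathbb E|m_{m,\boldsymbol D_m}|<\infty$ makes $\mu_m$ finite and the linear term integrable. Since $0\le\ell(u)\le\log 2+|u|$, $\ell(m_{m,\boldsymbol D_m})$ is also integrable, so the expectation splits legitimately. This yields $\mathbb E[\ell(m_{m,\boldsymbol D_m})]=\ell(\mu_m)+\mathcal R_m$. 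The second form $\ell(m_{m,0}-\Delta_m)$ is just the definition $\Delta_m=m_{m,0}-\mu_m$ substituted in.

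For $\mathcal R_m\ge0$ I use Gibbs' inequality pointwise, or equivalently convexity of $\ell$, since the Bregman divergence is nonnegative. For the equality case: a nonnegative random variable has zero expectation iff it vanishes almost surely. Bernoulli KL is zero iff $p=q$, because $\ell$ is strictly convex, $\ell''(u)=\operatorname{sigmoid}(u)(1-\operatorname{sigmoid}(u))>0$. Hence $\mathcal R_m=0$ iff $p_{m,\boldsymbol D_m}=p_{m,\mu}$ almost surely. Since sigmoid is injective, this is equivalent to the margin being almost surely constant, i.e., constant confidence across views.

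The main obstacle is bookkeeping rather than depth. It consists of getting the sign and weight of the linear term right so that it cancels, and of handling measurability and integrability honestly. The KL term is finite whenever $m_{m,\boldsymbol D_m}$ is finite, which holds almost surely because it has finite mean. The rest is the algebraic identity above, which I would check once carefully in the logit parametrization.
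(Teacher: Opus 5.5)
Your argument is essentially the paper's. The paper writes the Bernoulli KL as the Bregman divergence of the softplus $F(u)=\log(1+e^{u})$ at $\mu_m$, takes expectations so the linear term drops, and converts back via $\ell(u)=F(u)-u$; you work with $\ell$ directly, and both routes get $\mathcal R_m\geq0$ and the equality case from (strict) convexity and injectivity of the sigmoid. One sign slip needs fixing: since $\ell'(\mu)=-(1-p)$, the pointwise identity is $\operatorname{KL}=\ell(u)-\ell(\mu)+(1-p)(u-\mu)$, and with your minus sign the right-hand side would go negative (e.g.\ $\mu=0$, $u\to\infty$). The slip is harmless for the conclusion, because the linear term has zero mean under $\mathbb E_{\boldsymbol D_m}$ whatever its sign, and you obtain $\mathcal R_m\geq0$ from Gibbs' inequality anyway.
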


\begin{proof}
All expectations in this proof are taken with respect to
$\boldsymbol D_m$. Define
\begin{align*}
    F(u)\triangleq\log(1+\exp(u)).
\end{align*}
The integrability assumption ensures that the expectations
below are finite, since
\begin{align*}
    F(u)\leq\log 2+|u|,
    \qquad
    \ell(u)\leq\log 2+|u|.
\end{align*}
For $p(u)=\operatorname{sigmoid}(u)$, we have
\begin{align*}
    \log p(u)&=u-F(u),\\
    \log(1-p(u))&=-F(u).
\end{align*}
Applying these identities at
$u=m_{m,\boldsymbol D_m}$ and $u=\mu_m$ gives
\begin{align*}
    &\operatorname{KL}\left(
        \operatorname{Bern}(p_{m,\mu})
        \,\|\,
        \operatorname{Bern}(p_{m,\boldsymbol D_m})
    \right)\\
    &\quad=
    p_{m,\mu}
    \log\frac{p_{m,\mu}}{p_{m,\boldsymbol D_m}}
    +
    (1-p_{m,\mu})
    \log\frac{1-p_{m,\mu}}{1-p_{m,\boldsymbol D_m}}\\
    &\quad=
    p_{m,\mu}
    \left[
        \mu_m-m_{m,\boldsymbol D_m}
        -F(\mu_m)+F(m_{m,\boldsymbol D_m})
    \right]\\
    &\qquad\quad+
    (1-p_{m,\mu})
    \left[
        F(m_{m,\boldsymbol D_m})-F(\mu_m)
    \right]\\
    &\quad=
    F(m_{m,\boldsymbol D_m})-F(\mu_m)
    -F'(\mu_m)(m_{m,\boldsymbol D_m}-\mu_m),
\end{align*}
where $F'(\mu_m)=p_{m,\mu}$.

Taking expectations and using
$\mathbb E[m_{m,\boldsymbol D_m}-\mu_m]=0$, we obtain
\begin{align*}
    \mathcal R_m
    &=
    \mathbb E F(m_{m,\boldsymbol D_m})-F(\mu_m)\\
    &=
    \mathbb E\left[
        \ell(m_{m,\boldsymbol D_m})+m_{m,\boldsymbol D_m}
    \right]
    -
    \left[\ell(\mu_m)+\mu_m\right]\\
    &=
    \mathbb E\ell(m_{m,\boldsymbol D_m})-\ell(\mu_m),
\end{align*}
where the second equality uses $\ell(u)=F(u)-u$.
Rearranging and substituting
$\mu_m=m_{m,0}-\Delta_m$ proves the decomposition.

The same identity shows that $\mathcal R_m$ is the Jensen
gap of the loss:
\begin{align*}
    \mathcal R_m
    =
    \mathbb E\ell(m_{m,\boldsymbol D_m})
    -
    \ell\left(\mathbb E m_{m,\boldsymbol D_m}\right)
    \geq0.
\end{align*}
Since
\begin{align*}
    \ell''(u)=C(u)(1-C(u))>0
    \qquad\text{for all }u\in\mathbb R,
\end{align*}
the loss is strictly convex. Equality therefore holds if
and only if $m_{m,\boldsymbol D_m}=\mu_m$ almost surely.
The sigmoid function is injective, so this is equivalent to
$p_{m,\boldsymbol D_m}=p_{m,\mu}$ almost surely.
\end{proof}

The decomposition provides an analytical interpretation of
the perturbed NLL objective: $\Delta_m$ records the reduction
in mean margin, and $\mathcal R_m$ measures variation in
correct-token confidence across positional views.
This connects the objective to the noise-as-regularization
viewpoint~\citep{bishop1995training,wager2013dropout}.

\paragraph{Editing gradients.}
Now we differentiate the loss decomposition to characterize
how positional perturbations change the NLL gradients
for DO parameters and LTE Stage 1 residuals.

\begin{corollary}[Gradients for parameters and hidden targets]
\label{cor:app:gradient}
Under the integrability condition of
Proposition~\ref{prop:app:consistency}, suppose that the
offset distribution is independent of $\theta$ and that the
native and perturbed margins are differentiable in $\theta$.
Assume that, throughout a neighborhood of $\theta$,
\begin{align*}
    \|\nabla_\theta m_{m,\boldsymbol D_m}\|_2
    \leq H_m(\boldsymbol D_m),
    \qquad
    \mathbb E_{\boldsymbol D_m}H_m(\boldsymbol D_m)<\infty.
\end{align*}
Define the expected token loss
\begin{align*}
    \mathcal J_m(\theta)
    \triangleq
    \mathbb E_{\boldsymbol D_m}
    \left[\ell(m_{m,\boldsymbol D_m}(\theta))\right].
\end{align*}
Then
\begin{align}
    \nabla_\theta\mathcal J_m
    =
    -C(\mu_m)\nabla_\theta\mu_m
    +\nabla_\theta\mathcal R_m.
    \label{eq:app:editing-gradient}
\end{align}
Relative to the native token loss,
\begin{align}
    \nabla_\theta\mathcal J_m
    -\nabla_\theta\ell(m_{m,0})
    ={}&
    -[C(\mu_m)-C(m_{m,0})]\nabla_\theta m_{m,0}
    \notag\\
    &+
    C(\mu_m)\nabla_\theta\Delta_m
    +\nabla_\theta\mathcal R_m.
    \label{eq:app:gradient-difference}
\end{align}

Write
\begin{align*}
    g_{m,\boldsymbol D_m}
    &\triangleq\nabla_\theta m_{m,\boldsymbol D_m},
    \qquad
    \bar g_m\triangleq
    \mathbb E_{\boldsymbol D_m}g_{m,\boldsymbol D_m}.
\end{align*}
If $m_{m,\boldsymbol D_m}$ and $g_{m,\boldsymbol D_m}$
have finite second moments, define
\begin{align*}
    V_m^{\mathrm{margin}}
    &\triangleq
    \operatorname{Var}(m_{m,\boldsymbol D_m}),\\
    V_m^{\mathrm{grad}}
    &\triangleq
    \mathbb E_{\boldsymbol D_m}
    \left\|g_{m,\boldsymbol D_m}-\bar g_m\right\|_2^2.
\end{align*}
Then
\begin{align}
    \|\nabla_\theta\mathcal R_m\|_2
    \leq
    \frac{V_m^{\mathrm{margin}}}{8}\|\bar g_m\|_2
    +
    \frac14
    \sqrt{V_m^{\mathrm{margin}}V_m^{\mathrm{grad}}}.
    \label{eq:app:consistency-gradient-bound}
\end{align}

For LTE Stage 1, the same identities hold with
$\nabla_\theta$ replaced by $\nabla_{\delta_m}$,
where $\delta_m$ is the sentence-specific residual in
Eq.~\eqref{eq:krp:delta} and the model parameters
$\theta_0$ are fixed.
The native and perturbed margins use the same carrier
substitution and residual injection, with zero offsets
in the native computation.
\end{corollary}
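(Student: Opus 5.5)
We first prove Eq.~\eqref{eq:app:editing-gradient} by differentiating the exact decomposition of Proposition~\ref{prop:app:consistency}, $\mathcal J_m=\ell(\mu_m)+\mathcal R_m$. The step that needs justification is interchanging $\nabla_\theta$ with $\mathbb E_{\boldsymbol D_m}$, both in $\mathcal J_m$ and in $\mu_m=\mathbb E[m_{m,\boldsymbol D_m}]$. For this we use dominated convergence, in its mean-value form. The offset law does not depend on $\theta$. Since $|\ell'|=C\le 1$, the chain rule gives $\|\nabla_\theta \ell(m_{m,\boldsymbol D_m})\|_2\le H_m(\boldsymbol D_m)$, so difference quotients of both $\ell(m_{m,\boldsymbol D_m})$ and $m_{m,\boldsymbol D_m}$ are dominated by the integrable $H_m$ on the neighborhood. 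Hence
\begin{align*}
\nabla_\theta\mathcal J_m=-\mathbb E[C(m_{m,\boldsymbol D_m})g_{m,\boldsymbol D_m}],
\qquad
\nabla_\theta\mu_m=\bar g_m .
\end{align*}
Because $\ell(\mu_m)$ is differentiable with gradient $-C(\mu_m)\bar g_m$, it follows that $\mathcal R_m=\mathcal J_m-\ell(\mu_m)$ is differentiable with
\begin{align*}
\nabla_\theta\mathcal R_m=-\mathbb E[C(m_{m,\boldsymbol D_m})g_{m,\boldsymbol D_m}]+C(\mu_m)\bar g_m ,
\end{align*}
which is Eq.~\eqref{eq:app:editing-gradient}. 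For Eq.~\eqref{eq:app:gradient-difference}, subtract $\nabla_\theta\ell(m_{m,0})=-C(m_{m,0})\nabla_\theta m_{m,0}$ and substitute $\nabla_\theta\mu_m=\nabla_\theta m_{m,0}-\nabla_\theta\Delta_m$; regrouping the terms is pure algebra.

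The main work is the bound~\eqref{eq:app:consistency-gradient-bound}. Write $X=m_{m,\boldsymbol D_m}-\mu_m$, which has mean zero. We split
\begin{align*}
\nabla_\theta\mathcal R_m=-\mathbb E\bigl[(C(\mu_m+X)-C(\mu_m))\,\bar g_m\bigr]-\mathbb E\bigl[(C(\mu_m+X)-C(\mu_m))(g_{m,\boldsymbol D_m}-\bar g_m)\bigr].
\end{align*}
This uses $\mathbb E[g_{m,\boldsymbol D_m}-\bar g_m]=0$, which lets us subtract $C(\mu_m)$ inside the second term for free.

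For the second term we use $|C'|=C(1-C)\le 1/4$, so $|C(\mu_m+X)-C(\mu_m)|\le|X|/4$. Cauchy--Schwarz then bounds this term by $\tfrac14\sqrt{V_m^{\mathrm{margin}}V_m^{\mathrm{grad}}}$.

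For the first term the Lipschitz bound is not enough, because $\mathbb E|X|$ is first order in $X$, while the target is $V_m^{\mathrm{margin}}/8$. Instead we Taylor-expand to second order:
\begin{align*}
C(\mu_m+X)-C(\mu_m)=C'(\mu_m)X+\tfrac12 C''(\xi)X^2 .
\end{align*}
The linear part vanishes in expectation, and what remains is $\tfrac12\sup|C''|\,V_m^{\mathrm{margin}}$. We must therefore show $\sup_u|C''(u)|\le 1/4$. Computing, $C''=C(1-C)(1-2C)$; with $p=C\in(0,1)$, the maximum of $|p(1-p)(1-2p)|$ is $1/(6\sqrt3)\approx0.096<1/4$. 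This gives the factor $1/8$, with room to spare. Finite second moments make every expectation here legitimate.

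This second-order step, namely checking the constant on $C''$ and confirming that first-order terms cancel by centering, is where care is needed; the rest is bookkeeping. Finally, the LTE Stage~1 statement follows verbatim. Replace $\theta$ by $\delta_m$ with $\theta_0$ fixed; the margins are still differentiable functions of $\delta_m$ under the same domination hypotheses. Nothing in the argument used the specific parameterization, so the identities and the bound carry over unchanged.
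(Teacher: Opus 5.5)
Your proposal is correct and follows essentially the same route as the paper: dominated convergence to differentiate under $\mathbb E_{\boldsymbol D_m}$, then splitting $\nabla_\theta\mathcal R_m$ into a mean-shift term along $\bar g_m$ and a covariance term. You bound the first by a second-order Taylor expansion using $\mathbb E X=0$, and the second via $|C'|\le 1/4$ and Cauchy--Schwarz. The only differences are cosmetic: you center the covariance term at $C(\mu_m)$ rather than at $\bar C_m$, which changes nothing because $g_{m,\boldsymbol D_m}-\bar g_m$ has mean zero; and you compute the sharper constant $\sup_u|C''(u)|=1/(6\sqrt3)$, where the paper simply uses $1/4$.
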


\begin{proof}
All expectations are over $\boldsymbol D_m$.
We first differentiate the decomposition, then bound
the gradient of the consistency term.

\emph{Differentiating the expectation.}
The dominating bound and $|\ell'(u)|\leq1$ give
\begin{align*}
    \left\|
        \ell'(m_{m,\boldsymbol D_m})
        \nabla_\theta m_{m,\boldsymbol D_m}
    \right\|_2
    \leq H_m(\boldsymbol D_m).
\end{align*}
Differentiation under the expectation is therefore justified,
yielding
\begin{align*}
    \nabla_\theta\mu_m
    &=
    \mathbb E
    \left[\nabla_\theta m_{m,\boldsymbol D_m}\right]
    =\bar g_m,\\
    \nabla_\theta\mathcal J_m
    &=
    \mathbb E
    \left[
        \ell'(m_{m,\boldsymbol D_m})
        g_{m,\boldsymbol D_m}
    \right]\\
    &=
    -\mathbb E
    \left[
        C(m_{m,\boldsymbol D_m})g_{m,\boldsymbol D_m}
    \right],\\
    \nabla_\theta\ell(\mu_m)
    &=-C(\mu_m)\bar g_m.
\end{align*}
Using $\mathcal R_m=\mathcal J_m-\ell(\mu_m)$, we obtain
\begin{align}
    \nabla_\theta\mathcal R_m
    =
    C(\mu_m)\bar g_m
    -
    \mathbb E
    \left[
        C(m_{m,\boldsymbol D_m})g_{m,\boldsymbol D_m}
    \right],
    \label{eq:app:consistency-gradient}
\end{align}
which proves Eq.~\eqref{eq:app:editing-gradient}.

Next, substitute
$\nabla_\theta\mu_m
=\nabla_\theta m_{m,0}-\nabla_\theta\Delta_m$
and subtract the native gradient:
\begin{align*}
    &\nabla_\theta\mathcal J_m
    -\nabla_\theta\ell(m_{m,0})\\
    &\quad=
    -C(\mu_m)
    \left(
        \nabla_\theta m_{m,0}
        -\nabla_\theta\Delta_m
    \right)
    +\nabla_\theta\mathcal R_m
    +C(m_{m,0})\nabla_\theta m_{m,0}\\
    &\quad=
    -[C(\mu_m)-C(m_{m,0})]\nabla_\theta m_{m,0}
    +C(\mu_m)\nabla_\theta\Delta_m
    +\nabla_\theta\mathcal R_m.
\end{align*}
This proves Eq.~\eqref{eq:app:gradient-difference}.

\emph{Mean and covariance contributions.}
Let
$\bar C_m=\mathbb E[C(m_{m,\boldsymbol D_m})]$.
Centering the product in
Eq.~\eqref{eq:app:consistency-gradient} gives
\begin{align}
    \nabla_\theta\mathcal R_m
    ={}&
    [C(\mu_m)-\bar C_m]\bar g_m
    \notag\\
    &-
    \mathbb E\left[
        (C(m_{m,\boldsymbol D_m})-\bar C_m)
        (g_{m,\boldsymbol D_m}-\bar g_m)
    \right].
    \label{eq:app:gradient-covariance}
\end{align}
We bound the two terms separately.
Since
$\sup_u|C'(u)|\leq1/4$ and
$\sup_u|C''(u)|\leq1/4$,
Taylor's formula around $\mu_m$ gives
\begin{align*}
    |\bar C_m-C(\mu_m)|
    &=
    \left|
        \mathbb E\left[
            C(m_{m,\boldsymbol D_m})-C(\mu_m)
            -C'(\mu_m)(m_{m,\boldsymbol D_m}-\mu_m)
        \right]
    \right|\\
    &\leq
    \frac18
    \mathbb E(m_{m,\boldsymbol D_m}-\mu_m)^2\\
    &=
    \frac{V_m^{\mathrm{margin}}}{8}.
\end{align*}
The derivative bound also yields
\begin{align*}
    \operatorname{Var}(C(m_{m,\boldsymbol D_m}))
    &\leq
    \mathbb E\left[
        (C(m_{m,\boldsymbol D_m})-C(\mu_m))^2
    \right]\\
    &\leq
    \frac{V_m^{\mathrm{margin}}}{16}.
\end{align*}
Applying vector Cauchy--Schwarz to the covariance term,
\begin{align*}
    &\left\|
        \mathbb E\left[
            (C(m_{m,\boldsymbol D_m})-\bar C_m)
            (g_{m,\boldsymbol D_m}-\bar g_m)
        \right]
    \right\|_2\\
    &\quad\leq
    \sqrt{
        \operatorname{Var}(C(m_{m,\boldsymbol D_m}))
        \,
        \mathbb E\|g_{m,\boldsymbol D_m}-\bar g_m\|_2^2
    }\\
    &\quad\leq
    \frac14
    \sqrt{V_m^{\mathrm{margin}}V_m^{\mathrm{grad}}}.
\end{align*}
Combining these bounds with the triangle inequality proves
Eq.~\eqref{eq:app:consistency-gradient-bound}.
The same argument applies to differentiation with respect
to $\delta_m$ under the corresponding assumptions.
This completes our proof.
\end{proof}

Averaging these token-level identities within each sentence
gives the corresponding LTE Stage 1 NLL identities.
For DO, further aggregating sentence losses with weights
$|\mathcal I_m|/|\cv|$ gives the perturbed NLL objective
in Eq.~\eqref{eq:do}.
When $\Delta_m\geq0$, the first term in
Eq.~\eqref{eq:app:gradient-difference} reinforces the native
margin correction, while the overall gradient change
depends on all three terms.

\paragraph{Local sensitivity to block offsets.}
The consistency term admits a local interpretation in
terms of margin variance.
Let
\begin{align*}
    h_{m,\boldsymbol D_m}
    \triangleq m_{m,\boldsymbol D_m}-\mu_m,
    \qquad
    \mathbb E h_{m,\boldsymbol D_m}=0.
\end{align*}
All expectations below are over $\boldsymbol D_m$ or,
in the continuous construction, its underlying Gaussian
vector.

Taylor's formula with an integral remainder gives
\begin{align*}
    &\ell(\mu_m+h_{m,\boldsymbol D_m})
    -\ell(\mu_m)
    -\ell'(\mu_m)h_{m,\boldsymbol D_m}\\
    &\quad=
    h_{m,\boldsymbol D_m}^2
    \int_0^1
    (1-u)\ell''(\mu_m+u h_{m,\boldsymbol D_m})\,du.
\end{align*}
Taking expectations yields
\begin{align*}
    \mathcal R_m
    =
    \mathbb E\left[
        h_{m,\boldsymbol D_m}^2
        \int_0^1
        (1-u)\ell''(\mu_m+u h_{m,\boldsymbol D_m})\,du
    \right].
\end{align*}
If the margin has a finite second moment, then
$0<\ell''\leq1/4$ implies
\begin{align*}
    0\leq\mathcal R_m
    \leq
    \frac18\operatorname{Var}(m_{m,\boldsymbol D_m}).
\end{align*}

When the third centered absolute moment is finite,
a second-order expansion gives
\begin{align*}
    \ell(\mu_m+h_{m,\boldsymbol D_m})
    ={}&
    \ell(\mu_m)
    +\ell'(\mu_m)h_{m,\boldsymbol D_m}\\
    &+
    \frac{\ell''(\mu_m)}{2}h_{m,\boldsymbol D_m}^2
    +q_{m,\boldsymbol D_m},
\end{align*}
where
\begin{align*}
    q_{m,\boldsymbol D_m}
    &=
    \frac{h_{m,\boldsymbol D_m}^3}{2}
    \int_0^1
    (1-u)^2
    \ell'''(\mu_m+u h_{m,\boldsymbol D_m})\,du,\\
    |q_{m,\boldsymbol D_m}|
    &\leq
    \frac{|h_{m,\boldsymbol D_m}|^3}{24}.
\end{align*}
The bound follows from
$|\ell'''|=|C(1-C)(2C-1)|\leq1/4$.
Taking expectations and writing
$r_{m,3}=\mathbb E q_{m,\boldsymbol D_m}$ gives
\begin{align}
    \mathcal R_m
    &=
    \frac{\ell''(\mu_m)}{2}
    \operatorname{Var}(m_{m,\boldsymbol D_m})
    +r_{m,3},
    \label{eq:app:consistency-variance}\\
    |r_{m,3}|
    &\leq
    \frac1{24}
    \mathbb E|m_{m,\boldsymbol D_m}-\mu_m|^3.
    \notag
\end{align}

We now specialize to a small continuous perturbation
\begin{align*}
    D_{mb}=\varepsilon\xi_{mb},
    \qquad
    \xi_{mb}\sim\mathcal N(0,\sigma^2),
    \qquad b\in\mathcal B_m,
\end{align*}
with independent draws across blocks.
Let
$\boldsymbol\xi_m=(\xi_{mb})_{b\in\mathcal B_m}$, and define
the sensitivity to each shared block offset by
\begin{align*}
    J_{mb}
    \triangleq
    \left.
    \frac{\partial m_{m,\boldsymbol D_m}}{\partial D_{mb}}
    \right|_{\boldsymbol D_m=\mathbf 0}.
\end{align*}
At the fixed current model state, suppose that the margin
admits the expansion
\begin{align*}
    m_{m,\varepsilon\boldsymbol\xi_m}
    &=
    m_{m,0}
    +\varepsilon Y_m
    +\varepsilon^2 e_{m,\varepsilon},
    \qquad
    Y_m=\sum_{b\in\mathcal B_m}J_{mb}\xi_{mb},
\end{align*}
where, uniformly for sufficiently small $\varepsilon>0$,
\begin{align*}
    |e_{m,\varepsilon}|
    \leq K_m(\boldsymbol\xi_m),
    \qquad
    \mathbb E K_m(\boldsymbol\xi_m)^3<\infty.
\end{align*}
For this family of perturbations, $\mu_m$ and $\mathcal R_m$
are evaluated at
$\boldsymbol D_m=\varepsilon\boldsymbol\xi_m$.

Independence across blocks gives
\begin{align*}
    \mathbb E Y_m&=0,\\
    \mathbb E Y_m^2
    &=
    \sum_{b,b'\in\mathcal B_m}
    J_{mb}J_{mb'}
    \mathbb E[\xi_{mb}\xi_{mb'}]\\
    &=
    \sigma^2\sum_{b\in\mathcal B_m}J_{mb}^2,\\
    \mu_m
    &=
    m_{m,0}+\varepsilon^2\mathbb E e_{m,\varepsilon},\\
    h_{m,\varepsilon\boldsymbol\xi_m}
    &=
    \varepsilon Y_m
    +\varepsilon^2
    (e_{m,\varepsilon}-\mathbb E e_{m,\varepsilon}).
\end{align*}
Expanding the variance, we obtain
\begin{align*}
    \operatorname{Var}(m_{m,\varepsilon\boldsymbol\xi_m})
    ={}&
    \varepsilon^2\mathbb E Y_m^2
    +2\varepsilon^3
    \operatorname{Cov}(Y_m,e_{m,\varepsilon})\\
    &+
    \varepsilon^4\operatorname{Var}(e_{m,\varepsilon}).
\end{align*}
Writing $K_m=K_m(\boldsymbol\xi_m)$ for brevity,
by
Cauchy--Schwarz inequality, we have
\begin{align*}
    &\left|
        \operatorname{Var}(m_{m,\varepsilon\boldsymbol\xi_m})
        -\varepsilon^2\mathbb E Y_m^2
    \right|\\
    &\quad\leq
    2\varepsilon^3
    \sqrt{\mathbb E Y_m^2\,\mathbb E K_m^2}
    +\varepsilon^4\mathbb E K_m^2
    =O(\varepsilon^3).
\end{align*}
Similarly,
\begin{align*}
    \mathbb E|h_{m,\varepsilon\boldsymbol\xi_m}|^3
    \leq{}&
    4\varepsilon^3\mathbb E|Y_m|^3\\
    &+
    4\varepsilon^6
    \mathbb E
    |e_{m,\varepsilon}-\mathbb E e_{m,\varepsilon}|^3
    =O(\varepsilon^3).
\end{align*}
Finally,
\begin{align*}
    |\ell''(\mu_m)-\ell''(m_{m,0})|
    \leq\frac14|\mu_m-m_{m,0}|
    =O(\varepsilon^2).
\end{align*}
Substituting these estimates into
Eq.~\eqref{eq:app:consistency-variance} gives
\begin{align}
    \mathcal R_m
    =
    \frac{\varepsilon^2\sigma^2\ell''(m_{m,0})}{2}
    \sum_{b\in\mathcal B_m}J_{mb}^2
    +O(\varepsilon^3).
    \label{eq:app:phase-sensitivity}
\end{align}

Each derivative $J_{mb}$ accounts for the shared offset's
effect through all eligible keys and attention layers.
Thus, to leading order in this continuous local
approximation, the consistency term is proportional to
the squared sensitivity of the target-token margin to
the context-block offsets.
For the finite rounded offsets used by \name, the exact
decomposition is given by Eq.~\eqref{eq:app:consistency}.

\subsection{Sign of the Mean-Margin Term under a Local Shortcut Model}
\label{app:margin}

We now derive sufficient conditions for the mean-margin
reduction $\Delta_m=m_{m,0}-\mu_m$ to be nonnegative.
Following Appendix~\ref{app:consistency}, we fix a target
token in sentence $m$ and suppress its token index.
We connect the spectral contraction in
Proposition~\ref{prop:app:contraction} to the margin
through an affine model of the RoPE rotation components.

\begin{assumption}[Positive contextual shortcut]
\label{ass:app:shortcut}
At the current optimization state, the margin is modeled as
\begin{align*}
    m_{m,\boldsymbol D_m}
    =
    u_m
    +
    \sum_{b\in\mathcal B_m}
    \sum_{r=1}^{R}
    \left[
        A_{mbr}\cos(\omega_rD_{mb})
        +
        B_{mbr}\sin(\omega_rD_{mb})
    \right],
\end{align*}
where $u_m$, $A_{mbr}$, and $B_{mbr}$ are finite real
coefficients held fixed across positional views, and
\begin{align*}
    A_{mbr}\geq0
    \qquad
    \text{for every }b\in\mathcal B_m
    \text{ and }r=1,\ldots,R.
\end{align*}
Here $u_m$ is the offset-independent contribution.
At zero offsets, the model gives
\begin{align*}
    m_{m,0}
    =
    u_m
    +
    \sum_{b\in\mathcal B_m}
    \sum_{r=1}^{R}A_{mbr}.
\end{align*}
\end{assumption}

The cosine and sine terms arise by expanding the additional
rotation $\operatorname{Rot}(\omega_rD_{mb})$ in an affine
function of the rotation components.
The positivity condition applies to each native
block-frequency contribution $A_{mbr}$ individually.

\begin{theorem}[Mean-margin decrease under a positive shortcut]
\label{thm:app:correction}
Under the affine margin model in
Assumption~\ref{ass:app:shortcut}, suppose that each block
offset $D_{mb}$ has a distribution symmetric about zero.
Let
\begin{align*}
    \rho_{mbr}
    =
    \mathbb E_{D_{mb}}
    \left[\cos(\omega_rD_{mb})\right]
\end{align*}
be the spectral multiplier from
Proposition~\ref{prop:app:contraction}.
Then the mean-margin reduction defined in
Proposition~\ref{prop:app:consistency} satisfies
\begin{align*}
    \Delta_m
    =
    m_{m,0}-\mu_m
    =
    \sum_{b\in\mathcal B_m}
    \sum_{r=1}^{R}
    (1-\rho_{mbr})A_{mbr}
    \geq0.
\end{align*}
Consequently,
\begin{align*}
    \mathbb E_{\boldsymbol D_m}
    \left[\ell(m_{m,\boldsymbol D_m})\right]
    &=
    \ell(m_{m,0}-\Delta_m)+\mathcal R_m\\
    &\geq
    \ell(m_{m,0}-\Delta_m)\\
    &\geq
    \ell(m_{m,0}).
\end{align*}
For fixed native margin $m_{m,0}$, the mean-margin loss
$\ell(m_{m,0}-\Delta_m)$ increases strictly with $\Delta_m$:
\begin{align*}
    \frac{\partial}{\partial\Delta_m}
    \ell(m_{m,0}-\Delta_m)
    =
    C(m_{m,0}-\Delta_m)>0.
\end{align*}
\end{theorem}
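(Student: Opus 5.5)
The plan is to compute $\mu_m=\mathbb E_{\boldsymbol D_m}[m_{m,\boldsymbol D_m}]$ directly from the affine margin model in Assumption~\ref{ass:app:shortcut}. Every coefficient is finite and the sum over $b\in\mathcal B_m$ and $r\le R$ is finite, so linearity of expectation applies term by term. Each $D_{mb}$ is symmetric about zero, so the symmetry argument from the proof of Proposition~\ref{prop:app:contraction} gives $\mathbb E\sin(\omega_rD_{mb})=0$. By definition, $\mathbb E\cos(\omega_rD_{mb})=\rho_{mbr}$. Hence
\begin{align*}
    \mu_m=u_m+\sum_{b\in\mathcal B_m}\sum_{r=1}^{R}\rho_{mbr}A_{mbr}.
\end{align*}
Subtracting this from the zero-offset expression $m_{m,0}=u_m+\sum_{b,r}A_{mbr}$ yields the stated formula $\Delta_m=\sum_{b,r}(1-\rho_{mbr})A_{mbr}$.

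Nonnegativity then follows term by term. Proposition~\ref{prop:app:contraction} gives $|\rho_{mbr}|\le1$, so $1-\rho_{mbr}\ge0$, and the assumption gives $A_{mbr}\ge0$. Each summand is therefore nonnegative, and so is $\Delta_m$.

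For the loss chain, I first check the integrability condition of Proposition~\ref{prop:app:consistency}. Under the model, $|m_{m,\boldsymbol D_m}|\le|u_m|+\sum_{b,r}(|A_{mbr}|+|B_{mbr}|)$, which is a deterministic bound, so $\mathbb E|m_{m,\boldsymbol D_m}|<\infty$ trivially. The proposition then gives $\mathbb E\ell(m_{m,\boldsymbol D_m})=\ell(m_{m,0}-\Delta_m)+\mathcal R_m$ with $\mathcal R_m\ge0$, and this gives the first inequality.

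For the second inequality, I use that $\ell'(u)=-C(u)<0$, so $\ell$ is strictly decreasing. Since $m_{m,0}-\Delta_m\le m_{m,0}$, it follows that $\ell(m_{m,0}-\Delta_m)\ge\ell(m_{m,0})$.

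The derivative claim is the chain rule: $\partial_{\Delta}\ell(m_{m,0}-\Delta)=-\ell'(m_{m,0}-\Delta)=C(m_{m,0}-\Delta)$. This is positive because $C(u)=1/(1+e^u)>0$ for every real $u$.

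No step is a genuine obstacle. The only points needing care are that symmetry must be used to kill the sine terms, since the $B_{mbr}$ have arbitrary sign, and that the bound $\rho_{mbr}\le1$, rather than just $|\rho_{mbr}|\le1$, is what yields $1-\rho_{mbr}\ge0$; both are already supplied by Proposition~\ref{prop:app:contraction}.
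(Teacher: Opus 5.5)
Your proposal is correct and follows the paper's proof essentially step for step: symmetry kills the sine terms, linearity gives $\mu_m=u_m+\sum_{b,r}\rho_{mbr}A_{mbr}$, and $|\rho_{mbr}|\le1$ together with $A_{mbr}\ge0$ gives $\Delta_m\ge0$; the deterministic bound on the margin verifies integrability for Proposition~\ref{prop:app:consistency}, and monotonicity of $\ell$ plus the chain rule give the loss chain and the derivative claim. Your closing remark is slightly misphrased, since $|\rho_{mbr}|\le1$ already implies $\rho_{mbr}\le1$ (only that upper bound is used), but nothing in the argument depends on it.
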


\begin{proof}
Under Assumption~\ref{ass:app:shortcut}, the margin is bounded
in absolute value by
\begin{align*}
    |m_{m,\boldsymbol D_m}|
    \leq
    |u_m|
    +
    \sum_{b\in\mathcal B_m}
    \sum_{r=1}^{R}
    \left(|A_{mbr}|+|B_{mbr}|\right),
\end{align*}
so the integrability condition of
Proposition~\ref{prop:app:consistency} holds.

By symmetry of each block-offset distribution,
\begin{align*}
    \mathbb E_{D_{mb}}\sin(\omega_rD_{mb})=0,
    \qquad
    \mathbb E_{D_{mb}}\cos(\omega_rD_{mb})=\rho_{mbr}.
\end{align*}
Taking expectations in the affine model therefore gives
\begin{align*}
    \mu_m
    &=
    \mathbb E_{\boldsymbol D_m}
    \left[m_{m,\boldsymbol D_m}\right]\\
    &=
    u_m
    +
    \sum_{b\in\mathcal B_m}
    \sum_{r=1}^{R}
    \rho_{mbr}A_{mbr}\\
    &=
    \left(
        u_m+
        \sum_{b\in\mathcal B_m}
        \sum_{r=1}^{R}A_{mbr}
    \right)
    -
    \sum_{b\in\mathcal B_m}
    \sum_{r=1}^{R}
    (1-\rho_{mbr})A_{mbr}\\
    &=
    m_{m,0}
    -
    \sum_{b\in\mathcal B_m}
    \sum_{r=1}^{R}
    (1-\rho_{mbr})A_{mbr}.
\end{align*}
Thus,
\begin{align*}
    \Delta_m
    =
    m_{m,0}-\mu_m
    =
    \sum_{b\in\mathcal B_m}
    \sum_{r=1}^{R}
    (1-\rho_{mbr})A_{mbr}.
\end{align*}
Proposition~\ref{prop:app:contraction} gives
$|\rho_{mbr}|\leq1$, hence $1-\rho_{mbr}\geq0$.
Together with $A_{mbr}\geq0$, this proves $\Delta_m\geq0$.

Substituting $\mu_m=m_{m,0}-\Delta_m$ into
Eq.~\eqref{eq:app:consistency} yields
\begin{align*}
    \mathbb E_{\boldsymbol D_m}
    \left[\ell(m_{m,\boldsymbol D_m})\right]
    =
    \ell(m_{m,0}-\Delta_m)+\mathcal R_m.
\end{align*}
Since $\mathcal R_m\geq0$, the expected loss is at least
$\ell(m_{m,0}-\Delta_m)$.
Moreover, $\ell'(u)=-C(u)<0$ and $\Delta_m\geq0$ imply
\begin{align*}
    \ell(m_{m,0}-\Delta_m)\geq\ell(m_{m,0}).
\end{align*}
Finally, holding $m_{m,0}$ fixed and differentiating gives
\begin{align*}
    \frac{\partial}{\partial\Delta_m}
    \ell(m_{m,0}-\Delta_m)
    =
    -\ell'(m_{m,0}-\Delta_m)
    =
    C(m_{m,0}-\Delta_m)>0,
\end{align*}
which proves the stated monotonicity.
\end{proof}

\subsection{Beyond the Affine Margin Model}
\label{app:curvature}

We now bound the error of the affine approximation by
expressing the full margin computation as a function of
the RoPE rotation components.
As in the preceding subsections, we fix a target token in
sentence $m$, retain the sentence index, and suppress the
token index. The current optimization variables are held
fixed throughout this analysis.

For each context block $b\in\mathcal B_m$ and frequency $r$,
define
\begin{align*}
    \boldsymbol\chi_{mbr}(\boldsymbol D_m)
    &=
    \begin{pmatrix}
        \cos(\omega_rD_{mb})\\
        \sin(\omega_rD_{mb})
    \end{pmatrix},
    \qquad
    \boldsymbol\chi_{mbr}^{(0)}
    =
    \begin{pmatrix}1\\0\end{pmatrix}.
\end{align*}
Let $\boldsymbol\chi_m(\boldsymbol D_m)$ collect these
coordinate pairs over all blocks and frequencies, and
let $\boldsymbol\chi_m^{(0)}$ collect their native values.
To define the margin as a function of these coordinates,
write the additional key-side transformation as
\begin{align*}
    \mathbf T_{mbr}(\boldsymbol\chi_{mbr})
    =
    \begin{pmatrix}
        \chi_{mbr,1}&-\chi_{mbr,2}\\
        \chi_{mbr,2}&\chi_{mbr,1}
    \end{pmatrix}.
\end{align*}
At the sampled coordinates, this matrix equals
$\operatorname{Rot}(\omega_rD_{mb})$.
Each transformation is applied to the eligible keys and
attention layers sharing block $b$ and frequency $r$.
We treat $\chi_{mbr,1}$ and $\chi_{mbr,2}$ as separate
coordinates when differentiating the margin.
Let $f_m$ denote the resulting full margin computation,
including the changes to intermediate representations.
Then
\begin{align*}
    m_{m,\boldsymbol D_m}
    &=
    f_m\bigl(\boldsymbol\chi_m(\boldsymbol D_m)\bigr),\\
    m_{m,0}
    &=
    f_m\bigl(\boldsymbol\chi_m^{(0)}\bigr).
\end{align*}

\begin{proposition}[Mean-margin decrease beyond the affine model]
\label{prop:app:curvature}
Suppose that $f_m$ is twice continuously differentiable
on a neighborhood of every segment joining
$\boldsymbol\chi_m^{(0)}$ to a possible
$\boldsymbol\chi_m(\boldsymbol D_m)$, with
\begin{align*}
    \left\|
        \nabla_{\boldsymbol\chi_m}^2 f_m
    \right\|_{\mathrm{op}}
    \leq\Lambda_m
\end{align*}
throughout these segments, where $\Lambda_m$ is a finite
nonnegative constant.
Assume that each block offset has a distribution symmetric
about zero, and let
\begin{align*}
    \rho_{mbr}
    =
    \mathbb E_{D_{mb}}
    \left[\cos(\omega_rD_{mb})\right].
\end{align*}
Define
\begin{align*}
    S_{mbr}
    &\triangleq
    \left.
    \frac{\partial f_m}{\partial\chi_{mbr,1}}
    \right|_{\boldsymbol\chi_m=\boldsymbol\chi_m^{(0)}},\\
    \Delta_m^{\mathrm{lin}}
    &\triangleq
    \sum_{b\in\mathcal B_m}
    \sum_{r=1}^{R}
    (1-\rho_{mbr})S_{mbr},\\
    \mathcal E_m
    &\triangleq
    \Lambda_m
    \sum_{b\in\mathcal B_m}
    \sum_{r=1}^{R}
    (1-\rho_{mbr}).
\end{align*}
Then the actual mean-margin reduction
$\Delta_m=m_{m,0}-\mu_m$ satisfies
\begin{align}
    \left|\Delta_m-\Delta_m^{\mathrm{lin}}\right|
    \leq\mathcal E_m,
    \label{eq:app:margin-remainder}
\end{align}
and the expected loss satisfies
\begin{align}
    \mathbb E_{\boldsymbol D_m}
    \left[\ell(m_{m,\boldsymbol D_m})\right]
    \geq
    \ell\left(
        m_{m,0}-\Delta_m^{\mathrm{lin}}+\mathcal E_m
    \right).
    \label{eq:app:curvature-loss}
\end{align}
In particular,
$\Delta_m^{\mathrm{lin}}>\mathcal E_m$ implies
$\Delta_m>0$ and
\begin{align*}
    \mathbb E_{\boldsymbol D_m}
    \left[\ell(m_{m,\boldsymbol D_m})\right]
    >
    \ell(m_{m,0}).
\end{align*}
\end{proposition}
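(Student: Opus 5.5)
The plan is a second-order Taylor expansion of $f_m$ around $\boldsymbol\chi_m^{(0)}$, followed by averaging over $\boldsymbol D_m$. Write $\boldsymbol\eta_m\triangleq\boldsymbol\chi_m(\boldsymbol D_m)-\boldsymbol\chi_m^{(0)}$. Its $(b,r)$ pair is
\begin{align*}
\boldsymbol\eta_{mbr}=\bigl(\cos(\omega_rD_{mb})-1,\ \sin(\omega_rD_{mb})\bigr).
\end{align*}
Since $f_m$ is $C^2$ on the segment, Taylor's theorem with integral remainder gives
\begin{align*}
m_{m,\boldsymbol D_m}=m_{m,0}+\nabla f_m(\boldsymbol\chi_m^{(0)})^\top\boldsymbol\eta_m+q_m(\boldsymbol D_m),
\qquad |q_m(\boldsymbol D_m)|\le\tfrac{\Lambda_m}{2}\|\boldsymbol\eta_m\|_2^2 .
\end{align*}
Boundedness of $\boldsymbol\eta_m$ then makes the margin bounded, so the integrability condition of Proposition~\ref{prop:app:consistency} holds.

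Next I take expectations term by term. Symmetry of each $D_{mb}$ gives $\mathbb E\sin(\omega_rD_{mb})=0$, exactly as in Proposition~\ref{prop:app:contraction}. Hence the expected linear term reduces to $\sum_{b,r}S_{mbr}(\rho_{mbr}-1)=-\Delta_m^{\mathrm{lin}}$, and the sine-direction partial derivatives drop out.

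For the remainder, the key identity is
\begin{align*}
\|\boldsymbol\eta_{mbr}\|_2^2=(\cos\alpha-1)^2+\sin^2\alpha=2(1-\cos\alpha),\qquad \alpha=\omega_rD_{mb}.
\end{align*}
Therefore $\mathbb E\|\boldsymbol\eta_m\|_2^2=2\sum_{b,r}(1-\rho_{mbr})$, and so $|\mathbb E q_m|\le\Lambda_m\sum_{b,r}(1-\rho_{mbr})=\mathcal E_m$. The factor $\tfrac12$ from Taylor cancels the $2$ exactly, which is why $\mathcal E_m$ carries no constant. This step is the main point to check carefully. I also need the operator-norm Hessian bound along the whole segment, so that the quadratic-form estimate holds pointwise for every sample. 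Combining the two terms gives $\mu_m=m_{m,0}-\Delta_m^{\mathrm{lin}}+\mathbb E q_m$, which proves \eqref{eq:app:margin-remainder}.

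For the loss bound I use the exact decomposition of Proposition~\ref{prop:app:consistency}: $\mathbb E\ell(m_{m,\boldsymbol D_m})=\ell(\mu_m)+\mathcal R_m\ge\ell(\mu_m)$. The bound just obtained gives $\mu_m\le m_{m,0}-\Delta_m^{\mathrm{lin}}+\mathcal E_m$. Since $\ell$ is strictly decreasing ($\ell'=-C<0$), this yields \eqref{eq:app:curvature-loss}.

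Finally, suppose $\Delta_m^{\mathrm{lin}}>\mathcal E_m$. Then $\Delta_m\ge\Delta_m^{\mathrm{lin}}-\mathcal E_m>0$, so $m_{m,0}-\Delta_m^{\mathrm{lin}}+\mathcal E_m<m_{m,0}$. Strict monotonicity of $\ell$ then gives the strict inequality $\mathbb E\ell(m_{m,\boldsymbol D_m})>\ell(m_{m,0})$.
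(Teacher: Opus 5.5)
Your proposal is correct and follows the paper's proof essentially step for step. It uses the same second-order Taylor expansion in the rotation coordinates with the $\tfrac{\Lambda_m}{2}\|\boldsymbol\eta_m\|_2^2$ remainder, the same identity $\mathbb E\|\boldsymbol\eta_m\|_2^2=2\sum_{b,r}(1-\rho_{mbr})$ that cancels the factor $\tfrac12$, and the same use of strict monotonicity of $\ell$. The only cosmetic difference is that you get $\mathbb E\,\ell(m_{m,\boldsymbol D_m})\ge\ell(\mu_m)$ from the decomposition in Proposition~\ref{prop:app:consistency} rather than directly from Jensen's inequality; these are the same fact, since $\mathcal R_m$ is exactly the Jensen gap.
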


\begin{proof}
All expectations are taken with respect to
$\boldsymbol D_m$.
Write
\begin{align*}
    \mathbf e_m
    \triangleq
    \boldsymbol\chi_m(\boldsymbol D_m)
    -
    \boldsymbol\chi_m^{(0)}
\end{align*}
for the random displacement in the rotation coordinates.
Taylor's theorem along the segment
$\boldsymbol\chi_m^{(0)}+u\mathbf e_m$,
$0\leq u\leq1$, gives
\begin{align*}
    m_{m,\boldsymbol D_m}
    =
    m_{m,0}
    +
    \nabla f_m(\boldsymbol\chi_m^{(0)})^\top\mathbf e_m
    +r_m,
\end{align*}
where
\begin{align*}
    r_m
    =
    \int_0^1
    (1-u)\,
    \mathbf e_m^\top
    \nabla^2 f_m
    \left(\boldsymbol\chi_m^{(0)}+u\mathbf e_m\right)
    \mathbf e_m\,du.
\end{align*}
The Hessian bound yields
\begin{align*}
    |r_m|
    &\leq
    \int_0^1
    (1-u)
    \left\|
        \nabla^2 f_m
        \left(\boldsymbol\chi_m^{(0)}+u\mathbf e_m\right)
    \right\|_{\mathrm{op}}
    \|\mathbf e_m\|_2^2\,du\\
    &\leq
    \frac{\Lambda_m}{2}\|\mathbf e_m\|_2^2.
\end{align*}
Since the rotation coordinates are bounded, this expansion
also ensures that the margin and its loss are integrable.

For the coordinate pair associated with block $b$ and
frequency $r$, symmetry gives
\begin{align*}
    \mathbb E\mathbf e_{mbr}
    &=
    \begin{pmatrix}
        \mathbb E\cos(\omega_rD_{mb})-1\\
        \mathbb E\sin(\omega_rD_{mb})
    \end{pmatrix}\\
    &=
    \begin{pmatrix}
        \rho_{mbr}-1\\0
    \end{pmatrix}.
\end{align*}
Consequently, the expected linear term is
\begin{align*}
    \nabla f_m(\boldsymbol\chi_m^{(0)})^\top
    \mathbb E\mathbf e_m
    &=
    \sum_{b\in\mathcal B_m}
    \sum_{r=1}^{R}
    S_{mbr}(\rho_{mbr}-1)\\
    &=
    -\Delta_m^{\mathrm{lin}}.
\end{align*}

To bound the expected remainder, we compute
\begin{align*}
    \mathbb E\|\mathbf e_m\|_2^2
    &=
    \sum_{b\in\mathcal B_m}
    \sum_{r=1}^{R}
    \mathbb E\left[
        \bigl(\cos(\omega_rD_{mb})-1\bigr)^2
        +\sin^2(\omega_rD_{mb})
    \right]\\
    &=
    2\sum_{b\in\mathcal B_m}
    \sum_{r=1}^{R}
    \mathbb E\left[1-\cos(\omega_rD_{mb})\right]\\
    &=
    2\sum_{b\in\mathcal B_m}
    \sum_{r=1}^{R}(1-\rho_{mbr}).
\end{align*}
It follows that
\begin{align*}
    |\mathbb E r_m|
    &\leq\mathbb E|r_m|\\
    &\leq
    \frac{\Lambda_m}{2}\mathbb E\|\mathbf e_m\|_2^2\\
    &=
    \Lambda_m
    \sum_{b\in\mathcal B_m}
    \sum_{r=1}^{R}(1-\rho_{mbr})\\
    &=
    \mathcal E_m.
\end{align*}
Taking expectations in the Taylor expansion therefore gives
\begin{align*}
    \mu_m
    &=
    m_{m,0}-\Delta_m^{\mathrm{lin}}+\mathbb E r_m,\\
    \Delta_m
    &=
    m_{m,0}-\mu_m
    =
    \Delta_m^{\mathrm{lin}}-\mathbb E r_m.
\end{align*}
Thus,
\begin{align*}
    |\Delta_m-\Delta_m^{\mathrm{lin}}|
    =
    |\mathbb E r_m|
    \leq\mathcal E_m,
\end{align*}
which proves Eq.~\eqref{eq:app:margin-remainder}.

By convexity of $\ell$ and Jensen's inequality,
\begin{align*}
    \mathbb E\left[\ell(m_{m,\boldsymbol D_m})\right]
    &\geq\ell(\mu_m)\\
    &=
    \ell\left(
        m_{m,0}-\Delta_m^{\mathrm{lin}}+\mathbb E r_m
    \right)\\
    &\geq
    \ell\left(
        m_{m,0}-\Delta_m^{\mathrm{lin}}+\mathcal E_m
    \right),
\end{align*}
where the last inequality uses
$\mathbb E r_m\leq\mathcal E_m$ and $\ell'<0$.
This proves Eq.~\eqref{eq:app:curvature-loss}.

Finally, if $\Delta_m^{\mathrm{lin}}>\mathcal E_m$, then
\begin{align*}
    \Delta_m
    \geq
    \Delta_m^{\mathrm{lin}}-\mathcal E_m
    >0.
\end{align*}
Since $\ell$ is strictly decreasing,
\begin{align*}
    \mathbb E\left[\ell(m_{m,\boldsymbol D_m})\right]
    &\geq
    \ell\left(
        m_{m,0}-\Delta_m^{\mathrm{lin}}+\mathcal E_m
    \right)\\
    &>
    \ell(m_{m,0}).
\end{align*}
This completes the proof.
\end{proof}

\subsection{Conditional Attention-Mass Redistribution}
\label{app:attention-mass}

We now give a sufficient condition for positional
perturbations to increase the attention mass assigned to
unperturbed keys.
Fix a target sentence $m$, an attention head, and a query
position $t$ used to predict a token in that sentence.
As in Proposition~\ref{prop:app:contraction}, we hold the
pre-RoPE query and key representations fixed.

Partition the visible key positions into two disjoint sets:
$\mathcal H_m$ contains the context keys designated for
perturbation, and $\mathcal U_m$ contains the remaining
keys. The latter includes visible target-sentence keys
and the protected first-token readout key whenever visible.
We assume $\mathcal U_m$ is nonempty.

Using the logit notation from
Proposition~\ref{prop:app:contraction}, define
\begin{align*}
    Z_m^{\mathcal U}
    &\triangleq
    \sum_{u\in\mathcal U_m}e^{a_{tu}},\\
    Z_m^{\mathcal H}(\boldsymbol D_m)
    &\triangleq
    \sum_{s\in\mathcal H_m}
    e^{\widetilde a_{ts}(\boldsymbol D_m)},\\
    M_m(\boldsymbol D_m)
    &\triangleq
    \frac{Z_m^{\mathcal U}}
    {Z_m^{\mathcal U}+Z_m^{\mathcal H}(\boldsymbol D_m)}.
\end{align*}
Here $M_m(\boldsymbol D_m)$ is the total attention mass
assigned to $\mathcal U_m$, and
$M_{m,0}=M_m(\mathbf 0)$ is its native value.
Under the fixed-representation analysis,
$Z_m^{\mathcal U}$ is constant across positional views.

\begin{proposition}[Attention mass under exponential-moment contraction]
\label{prop:app:attention-mass}
Suppose that, for some $\kappa>0$,
\begin{align}
    \mathbb E_{\boldsymbol D_m}
    Z_m^{\mathcal H}(\boldsymbol D_m)
    \leq
    e^{-\kappa}Z_m^{\mathcal H}(\mathbf 0).
    \label{eq:app:partition-contraction}
\end{align}
Then
\begin{align}
    \mathbb E_{\boldsymbol D_m}
    M_m(\boldsymbol D_m)
    \geq
    \frac{1}
    {1+e^{-\kappa}(1-M_{m,0})/M_{m,0}}.
    \label{eq:app:attention-mass}
\end{align}
For any $\eta\in(0,1)$, with probability at least $1-\eta$,
\begin{align}
    M_m(\boldsymbol D_m)
    \geq
    \frac{1}
    {1+e^{-\kappa}(1-M_{m,0})/(\eta M_{m,0})}.
    \label{eq:app:attention-probability}
\end{align}
When $\mathcal H_m$ is nonempty, the expectation bound
strictly exceeds $M_{m,0}$, and the probability bound
strictly exceeds $M_{m,0}$ if
$\kappa>\log(1/\eta)$.
\end{proposition}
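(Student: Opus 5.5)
The plan is to view $M_m$ as a convex, decreasing function of the single random quantity $Z_m^{\mathcal H}(\boldsymbol D_m)$. Jensen's inequality will then handle the expectation bound, and Markov's inequality the high-probability bound.

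\emph{Expectation bound.} Write $g(z)\triangleq Z_m^{\mathcal U}/(Z_m^{\mathcal U}+z)$ for $z\geq0$. Since $Z_m^{\mathcal U}>0$ (because $\mathcal U_m$ is nonempty), $g$ is strictly decreasing and convex on $[0,\infty)$, with $g''(z)=2Z_m^{\mathcal U}/(Z_m^{\mathcal U}+z)^3>0$. As $0\le g\le1$, $M_m(\boldsymbol D_m)$ is integrable, and $Z_m^{\mathcal H}$ is nonnegative with finite expectation by Eq.~\eqref{eq:app:partition-contraction}. Jensen's inequality gives $\mathbb E M_m\geq g(\mathbb E Z_m^{\mathcal H})$. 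Monotonicity of $g$ together with Eq.~\eqref{eq:app:partition-contraction} then gives $\mathbb E M_m\geq g(e^{-\kappa}Z_m^{\mathcal H}(\mathbf 0))$. To rewrite this in the stated form, divide the numerator and denominator by $Z_m^{\mathcal U}$ and use the native identity $Z_m^{\mathcal H}(\mathbf 0)/Z_m^{\mathcal U}=(1-M_{m,0})/M_{m,0}$. This identity holds because the unperturbed logits satisfy $\widetilde a_{ts}(\mathbf 0)=a_{ts}$. The result is Eq.~\eqref{eq:app:attention-mass}.

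\emph{Probability bound.} Markov's inequality applied to the nonnegative variable $Z_m^{\mathcal H}$ gives
\[
\Pr\bigl(Z_m^{\mathcal H}\geq e^{-\kappa}Z_m^{\mathcal H}(\mathbf 0)/\eta\bigr)\leq\eta .
\]
If $Z_m^{\mathcal H}(\mathbf 0)=0$, the set $\mathcal H_m$ is empty, the bound is trivial, and we may use the non-strict event instead. On the complementary event, which has probability at least $1-\eta$, monotonicity of $g$ gives $M_m\geq g(e^{-\kappa}Z_m^{\mathcal H}(\mathbf 0)/\eta)$. The same algebra as before then yields Eq.~\eqref{eq:app:attention-probability}.

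\emph{Strictness.} When $\mathcal H_m\neq\emptyset$, every term $e^{a_{ts}}$ is positive, so $Z_m^{\mathcal H}(\mathbf 0)>0$ and $M_{m,0}<1$; also $M_{m,0}>0$. The map $x\mapsto1/(1+cx)$ is strictly decreasing in $c$ for fixed $x=(1-M_{m,0})/M_{m,0}>0$. Hence the expectation bound exceeds $1/(1+x)=M_{m,0}$ exactly when $e^{-\kappa}<1$, which holds because $\kappa>0$. Likewise, the probability bound exceeds $M_{m,0}$ exactly when $e^{-\kappa}/\eta<1$, that is, when $\kappa>\log(1/\eta)$.

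There is no substantial obstacle in this argument. The only points needing care are the convexity of $g$, so that Jensen's inequality points in the right direction, and the handling of the edge case $Z_m^{\mathcal H}(\mathbf 0)=0$ in the Markov step.
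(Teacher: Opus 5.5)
Your proposal is correct and follows the paper's proof essentially step for step. You use Jensen's inequality on the convex decreasing map $z\mapsto Z_m^{\mathcal U}/(Z_m^{\mathcal U}+z)$ together with the identity $Z_m^{\mathcal H}(\mathbf 0)/Z_m^{\mathcal U}=(1-M_{m,0})/M_{m,0}$, then Markov's inequality on $Z_m^{\mathcal H}$ and monotonicity for the high-probability bound, and strict monotonicity in $e^{-\kappa}$ (resp.\ $e^{-\kappa}/\eta$) for strictness. The only cosmetic difference is that the paper disposes of the case $\mathcal H_m=\emptyset$ (where $M_m\equiv M_{m,0}=1$) before invoking Markov, which is cleaner than your remark about switching to the non-strict event, since your Markov step as written needs $Z_m^{\mathcal H}(\mathbf 0)>0$.
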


\begin{proof}
All expectations and probabilities are over
$\boldsymbol D_m$.
If $\mathcal H_m$ is empty, then
$M_m(\boldsymbol D_m)=M_{m,0}=1$, and both bounds hold.
We therefore consider the case
$Z_m^{\mathcal H}(\mathbf 0)>0$.

Define
\begin{align*}
    \psi_m(z)
    \triangleq
    \frac{Z_m^{\mathcal U}}{Z_m^{\mathcal U}+z},
    \qquad z\geq0.
\end{align*}
Since $\mathcal U_m$ is nonempty,
$Z_m^{\mathcal U}>0$, and
\begin{align*}
    \psi_m'(z)
    &=
    -\frac{Z_m^{\mathcal U}}
    {(Z_m^{\mathcal U}+z)^2}<0,\\
    \psi_m''(z)
    &=
    \frac{2Z_m^{\mathcal U}}
    {(Z_m^{\mathcal U}+z)^3}>0.
\end{align*}
Thus $\psi_m$ is decreasing and convex.

\emph{Expected attention mass.}
By Jensen's inequality and
Eq.~\eqref{eq:app:partition-contraction},
\begin{align*}
    \mathbb E M_m(\boldsymbol D_m)
    &=
    \mathbb E\psi_m
    \left(Z_m^{\mathcal H}(\boldsymbol D_m)\right)\\
    &\geq
    \psi_m\left(
        \mathbb E Z_m^{\mathcal H}(\boldsymbol D_m)
    \right)\\
    &\geq
    \psi_m\left(
        e^{-\kappa}Z_m^{\mathcal H}(\mathbf 0)
    \right)\\
    &=
    \frac{Z_m^{\mathcal U}}
    {Z_m^{\mathcal U}
     +e^{-\kappa}Z_m^{\mathcal H}(\mathbf 0)}.
\end{align*}
The native mass satisfies
\begin{align*}
    \frac{Z_m^{\mathcal H}(\mathbf 0)}
    {Z_m^{\mathcal U}}
    =
    \frac{1-M_{m,0}}{M_{m,0}}.
\end{align*}
Substituting this identity proves
Eq.~\eqref{eq:app:attention-mass}.
Since $\kappa>0$ and
$Z_m^{\mathcal H}(\mathbf 0)>0$, the lower bound strictly
exceeds
\begin{align*}
    \frac{Z_m^{\mathcal U}}
    {Z_m^{\mathcal U}+Z_m^{\mathcal H}(\mathbf 0)}
    =
    M_{m,0}.
\end{align*}

\emph{High-probability attention mass.}
Applying Markov's inequality to the nonnegative random
variable $Z_m^{\mathcal H}(\boldsymbol D_m)$ gives
\begin{align*}
    &\Pr\left(
        Z_m^{\mathcal H}(\boldsymbol D_m)
        >
        \frac{e^{-\kappa}
        Z_m^{\mathcal H}(\mathbf 0)}{\eta}
    \right)\\
    &\quad\leq
    \frac{
        \eta\,\mathbb E Z_m^{\mathcal H}(\boldsymbol D_m)
    }{
        e^{-\kappa}Z_m^{\mathcal H}(\mathbf 0)
    }
    \leq\eta.
\end{align*}
On the complementary event, which has probability at
least $1-\eta$, monotonicity of $\psi_m$ yields
\begin{align*}
    M_m(\boldsymbol D_m)
    &\geq
    \psi_m\left(
        \frac{e^{-\kappa}
        Z_m^{\mathcal H}(\mathbf 0)}{\eta}
    \right)\\
    &=
    \frac{1}
    {1+e^{-\kappa}(1-M_{m,0})/(\eta M_{m,0})}.
\end{align*}
This proves Eq.~\eqref{eq:app:attention-probability}.
For $0<M_{m,0}<1$, this bound exceeds $M_{m,0}$ precisely
when $e^{-\kappa}/\eta<1$, or equivalently
$\kappa>\log(1/\eta)$.
\end{proof}

\paragraph{A sufficient condition from spectral contraction.}

The condition in Eq.~\eqref{eq:app:partition-contraction}
can be expressed in terms of the mean decrease and
fluctuations of each perturbed logit.
For $s\in\mathcal H_m$, define
\begin{align*}
    \Gamma_{ms}
    \triangleq
    a_{ts}
    -
    \mathbb E_{\boldsymbol D_m}
    \widetilde a_{ts}(\boldsymbol D_m).
\end{align*}
For a key $s$ in block $b$, Proposition~\ref{prop:app:contraction}
gives
\begin{align*}
    \Gamma_{ms}
    =
    \frac{1}{\sqrt{d_h}}
    \sum_{r=1}^{R}
    (1-\rho_{mbr})
    \bar{\qv}_{t,r}^{\top}
    \operatorname{Rot}(\omega_r(s-t))
    \bar{\kv}_{s,r}.
\end{align*}

Suppose that each centered logit satisfies the
exponential-moment bound
\begin{align*}
    \mathbb E_{\boldsymbol D_m}
    \exp\left(
        \widetilde a_{ts}(\boldsymbol D_m)
        -
        \mathbb E_{\boldsymbol D_m}
        \widetilde a_{ts}(\boldsymbol D_m)
    \right)
    \leq
    \exp(v_{ms}^2/2)
\end{align*}
for some $v_{ms}\geq0$.
Then
\begin{align*}
    \mathbb E_{\boldsymbol D_m}
    e^{\widetilde a_{ts}(\boldsymbol D_m)}
    &=
    \exp\left(
        \mathbb E_{\boldsymbol D_m}
        \widetilde a_{ts}(\boldsymbol D_m)
    \right)\\
    &\quad{}\times
    \mathbb E_{\boldsymbol D_m}
    \exp\left(
        \widetilde a_{ts}(\boldsymbol D_m)
        -
        \mathbb E_{\boldsymbol D_m}
        \widetilde a_{ts}(\boldsymbol D_m)
    \right)\\
    &\leq
    \exp\left(a_{ts}-\Gamma_{ms}+v_{ms}^2/2\right).
\end{align*}
Consequently, a sufficient condition for
Eq.~\eqref{eq:app:partition-contraction} is
\begin{align*}
    \Gamma_{ms}-\frac{v_{ms}^2}{2}\geq\kappa
    \qquad
    \text{for every }s\in\mathcal H_m.
\end{align*}
Indeed, summing the exponential-moment bounds over
perturbed keys gives
\begin{align*}
    \mathbb E_{\boldsymbol D_m}
    Z_m^{\mathcal H}(\boldsymbol D_m)
    &=
    \sum_{s\in\mathcal H_m}
    \mathbb E_{\boldsymbol D_m}
    e^{\widetilde a_{ts}(\boldsymbol D_m)}\\
    &\leq
    \sum_{s\in\mathcal H_m}
    e^{a_{ts}}
    \exp\left(-\Gamma_{ms}+v_{ms}^2/2\right)\\
    &\leq
    e^{-\kappa}
    \sum_{s\in\mathcal H_m}e^{a_{ts}}\\
    &=
    e^{-\kappa}Z_m^{\mathcal H}(\mathbf 0).
\end{align*}

\section{Method Details}

The \name pipeline has three components omitted from the main body:
the context perturbation settings,
the anchor and carrier construction for the LTE first stage,
and the shallow weight write that precedes the LTE block fitting.

\subsection{Context Perturbation Settings}
\label{app:offsets}

For a target sentence $m$, the prompt and each preceding sentence form
separate blocks, which use separate perturbations at the same scale $\sigma$.
Each block shares one rounded-Gaussian perturbation across attention layers.
Draws are independent across blocks and refreshed at each optimization step. The target sentence and its first-token readout position are excluded from the perturbed key positions. 

\subsection{LTE Stage 1: Anchor and Target Construction}
\label{app:lte-targets}

We now describe the anchor states used to optimize the residuals
in Eq.~\eqref{eq:krp:delta}. All selection scores and source states are
computed with the pre-edited model under native RoPE.

Our design is motivated by the intuition that mentioning a concept can evoke
several associated pieces of information in parallel. We approximate this
effect by treating each sentence as an atomic editing unit and binding all
sentence-level targets to shared query-side anchors through a batched write.
We find that the construction below works well empirically.

\textbf{Query anchors.}
Let $\xv=(x_1,\ldots,x_n)$ denote the tokenized editing prompt. We compare
its token probabilities under the original model $\pi_{\theta_0}$ and under
$\pi_{\theta_0}^{\cv}$, which additionally receives the editing passage in
context. Within their common prompt suffix, the log-probability lift is
\begin{align}
    g_p
    =
    \log\pi_{\theta_0}^{\cv}(x_p\mid\xv_{<p})
    -
    \log\pi_{\theta_0}(x_p\mid\xv_{<p}).
    \label{eq:krp:lift}
\end{align}
We select positions with $g_p>\tau_g$ and
$\pi_{\theta_0}^{\cv}(x_p\mid\xv_{<p})>\kappa_g$ to form $\mathcal A_q$,
excluding the final prompt position and requiring the resulting set to be
nonempty. The lift identifies prompt tokens supported by the passage, and
the confidence threshold removes uncertain predictions. We use these
positions as a query-side proxy for the entity or concept being edited.

\textbf{Sentence-level targets.}
For each sentence $m$, we augment the shared query anchors with a temporary
carrier position $u$ and define
$\mathcal A_m=\mathcal A_q\cup\{u\}$. To make the carrier
sentence-specific, we select the content token whose prediction benefits most
from the query and record its native input $\sv_m$ to the layer-$z$ MLP
\texttt{down\_proj}. During the optimization of $\delta_m$, we place $\sv_m$
at the carrier and add $\delta_m$ to the layer-$z$ MLP outputs at
$\mathcal A_m$, following Eq.~\eqref{eq:krp:delta}. The complete passage is
teacher-forced, with loss only on $\mathcal I_m$ and key perturbations applied
to the preceding context. Sentences without an eligible content token are
deferred to Stage 2, and the temporary carrier is absent at evaluation.

After optimizing the sentence-level residuals, we disable key perturbation,
record the corresponding anchor outputs plus $\delta_m^\star$, and jointly
write all resulting targets into the shallow MLP layers as described in
Appendix~\ref{app:stage1-write}.

\subsection{LTE Weight Writing}
\label{app:stage1-write}

\textbf{Shallow-layer write.}
Let $\mathcal L_K=(\ell_0,\ldots,\ell_{R_K-1})$ be the shallow write
layers, and concatenate the sentence targets into $\mathbf Z^\star$.
At layer $\ell=\ell_r$, let $\mathbf K_e^\ell$ collect the MLP \texttt{down\_proj} input keys
at the anchors, $\mathbf K_p^\ell$ the preservation keys at the remaining
prompt positions, and $\mathbf Z^\ell$ the current hidden outputs at the final target layer
$z$ for the same anchor columns. We distribute the remaining residual over
the current and subsequent write layers:
\begin{align*}
    \mathbf E^\ell
    =\frac{\mathbf Z^\star-\mathbf Z^\ell}{R_K-r}.
\end{align*}
Using AlphaEdit's prior-knowledge preservation projector
$\mathbf P_\ell$~\citep{fang2025alphaedit}, we form
\begin{align}
    \mathbf A_\ell
      &=\mathbf P_\ell\bigl(\mathbf K_e^\ell\mathbf K_e^{\ell\top}
          +\mathbf K_p^\ell\mathbf K_p^{\ell\top}\bigr)
          +\lambda_2\mathbf I,\\
    \mathbf B_\ell&=\mathbf P_\ell\mathbf K_e^\ell\mathbf E^{\ell\top},
    \qquad
    \mathbf A_\ell(\Delta\mathbf W^\ell)^\top=\mathbf B_\ell.
    \label{eq:krp:write}
\end{align}
Here $\mathbf W_0^\ell$ is the pre-edit weight and $\lambda_2$ is the
ridge coefficient. The update is
$\mathbf W^\ell\leftarrow\mathbf W_0^\ell+\Delta\mathbf W^\ell$.
We recompute the keys and current target-layer
outputs after each layer's write, using the settings in
Appendix~\ref{app:hyperparams}.

\textbf{Passage-level write.}
After the shallow write, we record the sentence-anchor outputs at the
second-stage fitting layer. For $\mu$KE and UnKE, these outputs supply the
reference states for an additional MSE preservation loss during block
fitting, alongside the editor's passage-target and locality losses.
The second stage uses the complete passage, including sentences for which
no first-stage carrier state was selected.

\section{Implementation Details}
\label{app:setup}

\subsection{Training Details and Hyperparameters}
\label{app:hyperparams}

\textbf{Backbones.}
Tab.~\ref{tab:hp:backbones} lists the checkpoints. Models are loaded in
\texttt{bfloat16}, and prompts use each tokenizer's native chat template
during both editing and evaluation. 

\begin{table}[htbp]
\centering
\begingroup
\footnotesize
\setlength{\tabcolsep}{5pt}
\caption{Backbone checkpoints.}
\label{tab:hp:backbones}
\begin{tabular}{>{\bfseries}r l c}
\toprule[0.4ex]
\textbf{Name} & \textbf{Checkpoint} & \textbf{Precision} \\
\midrule[0.2ex]
Qwen2.5 & \texttt{Qwen/Qwen2.5-7B-Instruct} & bf16 \\
Llama3.1 & \texttt{meta-llama/Llama-3.1-8B-Instruct} & bf16 \\
\bottomrule[0.4ex]
\end{tabular}
\endgroup
\end{table}

\textbf{\name with LTE.}
The two-stage configuration first optimizes a per-sentence hidden residual
under key perturbation, then writes the results into shallow MLP layers
before handing off to the LTE editor.
The shallow write uses AlphaEdit~\citep{fang2025alphaedit} with a
null-space projector computed from Wikipedia samples.
Tab.~\ref{tab:hp:krp-lte} lists all settings.
For both stages, target losses are read from the final transformer layer
($27$ for Qwen2.5, $31$ for Llama3.1).
Perturbations are disabled when recording native target states and during
evaluation.

\begin{table}[htbp]
\centering
\begingroup
\footnotesize
\setlength{\tabcolsep}{5pt}
\renewcommand{\arraystretch}{1.05}
\caption{\name with LTE configuration.}
\label{tab:hp:krp-lte}
\begin{tabular}{>{\bfseries}r c}
\toprule[0.4ex]
\textbf{Hyperparameter} & \textbf{Value} \\
\midrule[0.2ex]
\multicolumn{2}{l}{\textbf{\textit{Perturbation \& anchors}}} \\
Perturbation scale $\sigma$ & $16$ \\
Query-anchor lift threshold $\tau_g$ & $1.0$ \\
Query-anchor confidence threshold $\kappa_g$ & $0.3$ \\
Carrier \& target optimization layer $z$ & $6$ \\
\midrule[0.2ex]
\multicolumn{2}{l}{\textbf{\textit{First stage: sentence residual}}} \\
Optimizer & Adam \\
Learning rate & $0.5$ \\
Loss evaluations / gradient updates & $25$ / $24$ \\
Residual norm cap (relative to initial state) & $4.0$ \\
\midrule[0.2ex]
\multicolumn{2}{l}{\textbf{\textit{Shallow write (AlphaEdit)}}} \\
Write layers & $4$--$6$ \\
Ridge coefficient $\lambda_2$ & $10$ \\
Prompt preservation keys & $\le 400$ \\
Null-space projector: Wikipedia samples & $20{,}000$ \\
Null-space projector: singular-value threshold & $0.02$ \\
\bottomrule[0.4ex]
\end{tabular}
\endgroup
\end{table}

\textbf{FT-M.}
FT-M optimizes only the \texttt{down\_proj} weight in layer $8$.
Plain FT-M trains on the full-passage loss without perturbation.
The \name variant enables sentence chunking with $\sigma=1$:
each target sentence is evaluated with its own offsets, and the
sentence losses are averaged in proportion to token counts.
Tab.~\ref{tab:hp:ft} lists the remaining settings.

\begin{table}[htbp]
\centering
\begingroup
\footnotesize
\setlength{\tabcolsep}{5pt}
\renewcommand{\arraystretch}{1.05}
\caption{FT-M configuration. Perturbation and sentence-loss
settings apply to the \name variant; plain FT-M disables perturbations
and uses the full-passage loss.}
\label{tab:hp:ft}
\begin{tabular}{>{\bfseries}r c}
\toprule[0.4ex]
\textbf{Hyperparameter} & \textbf{Value} \\
\midrule[0.2ex]
Trainable parameters & Layer-$8$ MLP \texttt{down\_proj} weight \\
Optimizer & Adam \\
Learning rate & $1\times10^{-4}$ \\
Gradient updates per request & $40$ \\
Weight decay & $0$ \\
Perturbation scale $\sigma$ (\name) & $1.0$ \\
Sentence-loss reduction (\name) & Token-weighted average \\
\bottomrule[0.4ex]
\end{tabular}
\endgroup
\end{table}

\textbf{LoRA.}
LoRA trains rank-$r$ adapters on every linear projection of
every transformer layer, keeping the base weights frozen.
Adapters remain active during evaluation and are removed afterward.
The training objective follows FT-M: plain LoRA uses the
full-passage loss without perturbation, and LoRA with \name uses
sentence chunking with $\sigma=1$.
Tab.~\ref{tab:hp:lora} lists the per-backbone settings.

\begin{table}[htbp]
\centering
\begingroup
\footnotesize
\setlength{\tabcolsep}{5pt}
\renewcommand{\arraystretch}{1.05}
\caption{LoRA configuration. Objective, chunking, and
perturbation settings follow Tab.~\ref{tab:hp:ft}.}
\label{tab:hp:lora}
\begin{tabular}{>{\bfseries}r c c}
\toprule[0.4ex]
\textbf{Hyperparameter} & \textbf{Qwen2.5} & \textbf{Llama3.1} \\
\midrule[0.2ex]
Adapter rank $r$ & $8$ & $12$ \\
Scaling $\alpha$ & $32$ & $24$ \\
Learning rate (AdamW) & $1\times10^{-4}$ & $5\times10^{-5}$ \\
Gradient updates per request & $40$ & $40$ \\
Weight decay & $0$ & $0$ \\
\bottomrule[0.4ex]
\end{tabular}
\endgroup
\end{table}

\textbf{COIN$^\star$.}
COIN$^{\star}$ uses the context-alignment objective of
\citet{zhou2026context} with the same write backend as FT-M, optimizing only
the \texttt{down\_proj} weight of layer $8$.
The objective is
\begin{equation*}
\mathcal{L}_{\text{COIN}^{\star}}
= \mathcal{L}_{\text{NLL}}
+ \alpha\,\frac{1}{|S|}\sum_{t\in S}
\mathrm{KL}\!\left(
p_\theta(\cdot \mid q, y_{<t})\,\big\|\,
p_\theta(\cdot \mid y_{t-k:t-1})
\right),
\end{equation*}
where the prediction from the last $k$ passage tokens is trained to match the
full-context prediction, which is held fixed as the teacher.
We omit COIN's original locality-preserving mechanisms, including the key-consistency regularizer and early-stopping rule. These mechanisms can also be applied to the other editors and would complicate the comparison.

When combining COIN$^{\star}$ with \name, we perturb only the NLL term and
leave the alignment term unchanged. Sentence chunking and offset settings
follow FT-M.
Tab.~\ref{tab:hp:coin} lists the per-backbone settings. For Llama3.1, we use FT-M's learning rate and number of steps, selected based on the performance of base COIN$^{\star}$ on 100 randomly sampled UnFine-UnKE examples.

\begin{table}[htbp]
\centering
\begingroup
\footnotesize
\setlength{\tabcolsep}{5pt}
\renewcommand{\arraystretch}{1.05}
\caption{COIN$^\star$ configuration. Perturbation and chunking
settings for the \name variant follow Tab.~\ref{tab:hp:ft}.}
\label{tab:hp:coin}
\begin{tabular}{>{\bfseries}r c c}
\toprule[0.4ex]
\textbf{Hyperparameter} & \textbf{Qwen2.5} & \textbf{Llama3.1} \\
\midrule[0.2ex]
Trainable parameters & \multicolumn{2}{c}{Layer-$8$ MLP \texttt{down\_proj} weight} \\
Local window $k$ & $5$ & $5$ \\
Alignment coefficient $\alpha$ & $0.1$ & $0.1$ \\
Aligned positions per request & $\le 12$ & $\le 12$ \\
Optimizer & AdamW & AdamW \\
Learning rate & $5\times10^{-4}$ & $1\times10^{-4}$ \\
Gradient updates per request & $25$ & $40$ \\
Weight decay & $0$ & $0$ \\
\bottomrule[0.4ex]
\end{tabular}
\endgroup
\end{table}

\textbf{LTE editors.}
Tab.~\ref{tab:hp:lte} gives the configurations of $\mu$KE and UnKE.
Both optimize hidden targets with Adam and fit the entire transformer
block at layer $7$ with AdamW.
$\mu$KE uses non-overlapping $20$-token windows; UnKE constructs one
target at the final prompt position for the complete passage.

\begin{table}[htbp]
\centering
\begingroup
\footnotesize
\setlength{\tabcolsep}{5pt}
\renewcommand{\arraystretch}{1.05}
\caption{LTE editor configurations, retained when they
serve as the second-stage writer in \name.}
\label{tab:hp:lte}
\begin{tabular}{>{\bfseries}r c c}
\toprule[0.4ex]
\textbf{Hyperparameter} & $\boldsymbol{\mu}\textbf{KE}$ & \textbf{UnKE} \\
\midrule[0.2ex]
Trainable transformer block & $7$ & $7$ \\
Hidden-target learning rate (Adam) & $0.5$ & $0.5$ \\
Maximum target-loss evaluations & $25$ & $25$ \\
Hidden-residual penalty coefficient & $10^{-3}$ & $10^{-3}$ \\
Residual norm cap (relative to initial state) & $4.0$ & $4.0$ \\
Target-loss stopping threshold & $5\times10^{-3}$ & --- \\
Writer learning rate (AdamW) & $2\times10^{-4}$ & $2\times10^{-4}$ \\
Writer gradient updates & $50$ & $50$ \\
Writer weight decay & $0.01$ & $0.01$ \\
Locality reference sentences & $3$ & $3$ \\
Target window size / overlap (tokens) & $20$ / $0$ & Full passage \\
\bottomrule[0.4ex]
\end{tabular}
\endgroup
\end{table}

The block-fitting loss sums an editing MSE and a locality MSE on three
fixed, unrelated reference sentences supplied by the evaluation harness.
With \name, it also includes an MSE preserving the first-stage anchor
outputs at layer $7$; all three terms have unit coefficients.

\subsection{Metric Computation}
\label{app:metrics}

\textbf{Generation.}
All evaluations query the edited model without supplying the editing passage.
We use greedy decoding and cap D3 generations at $30$ new tokens, with the
instruction ``Answer directly and keep your answer within 20 words.''
The original and paraphrased queries (Orig Q/Reph Q) carry no additional instruction.

\textbf{Holistic recall.}
We compute ROUGE-L recall against the complete editing passage.
For each request, we report the original query (Orig Q), its paraphrase (Reph Q),
and their average as an overall measure.

\textbf{Atomic-fact recall.}
For a request with $J$ atomic questions, let $a_j$ be the reference answer,
$\hat a_j$ the generated answer, and $\mathcal P_j$ the annotated key phrases
for question $j$. We average ROUGE-L recall over the $J$ questions.
Following the UnFine implementation~\citep{wang2026fable}, HR counts a key
phrase as recovered only when all its words appear in order in the generation, and LCS computes the word-level longest common subsequence
length:
\begin{align*}
    \mathrm{HR}
    &= \frac{\sum_{j=1}^{J}\sum_{p\in\mathcal P_j}
        \mathbb{I}\!\left[\operatorname{LCS}(p,\hat a_j)=|p|\right]}
        {\sum_{j=1}^{J}|\mathcal P_j|}, \\
    C_{\mathrm{LCS}}
    &= \frac{1}{J}\sum_{j=1}^{J}
        \frac{\operatorname{LCS}(a_j,\hat a_j)}{|a_j|}.
\end{align*}

We follow \citet{wang2026fable} for the computation of the metrics. Namely, ROUGE-L is computed with the \texttt{rouge} package's default preprocessing.
For HR and $C_{\mathrm{LCS}}$, we lowercase the text and extract alphanumeric words
using \texttt{[A-Za-z0-9]+}; generated answers are also stripped of chat end markers
and trailing periods.

\subsection{General Capability Evaluation Datasets}
\label{app:general-capability}
Following~\citet{wang2026fable, zhou2026context},
we evaluate post-editing general capabilities on six benchmarks:
SST-2~\citep{socher2013recursive} (sentiment),
MMLU~\citep{hendrycks2021mmlu} (knowledge and reasoning),
MRPC~\citep{dolan2005mrpc} (paraphrase detection),
CoLA~\citep{warstadt2019cola} (grammatical acceptability),
RTE~\citep{bentivogli2009rte} (textual entailment),
and NLI~\citep{williams2018multinli} (natural language inference).
We report likelihood-based F1 scores using the codebase of \citet{wang2026fable}.

\section{Additional Results}

This section presents results omitted from the main text.

\subsection{Sentence-Order Effect for Pre-Edit Loss}
\label{app:shuffle-control}

In this section, we study the effect of sentence order on
pre-edit loss to test whether the positional decline is caused by intrinsic
sentence difficulty. We randomly shuffle the sentences within each UnFine
passage and recompute their pre-edit losses using Qwen2.5-7B-Instruct. Since
shuffling can make the original editing query inconsistent with the reordered
passage, we use the order-insensitive prompt \texttt{Introduce X} for both the
original and shuffled conditions, where \texttt{X} is the edited subject.
Results are shown in Fig.~\ref{fig:shuffle-loss}.

Across all three datasets, pre-edit loss still decreases with
passage position after shuffling. This rules out sentence
order as the cause and further supports \emph{context-induced difficulty underestimation}.

\begin{figure}[H]
    \centering
    \includegraphics[width=\textwidth]{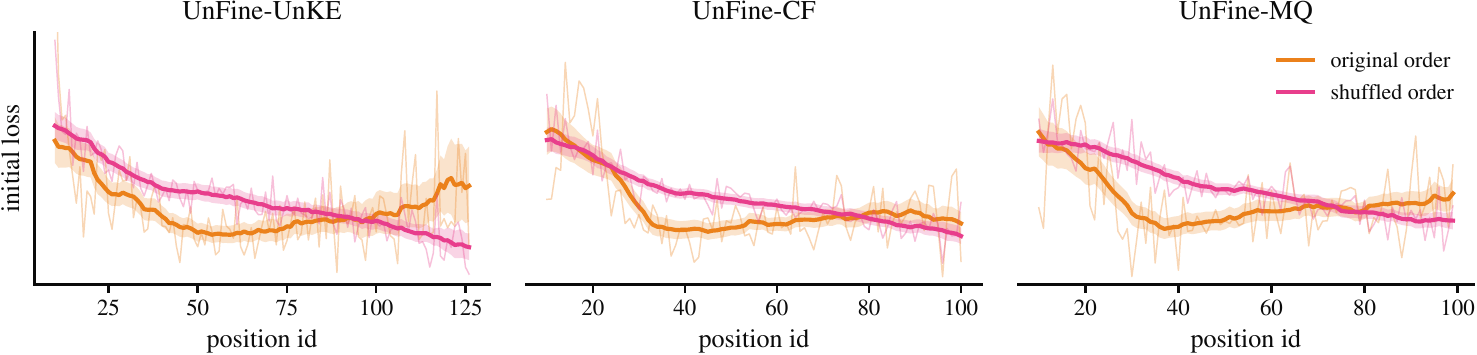}
    \caption{Pre-edit loss by passage position under the original and randomly
    shuffled sentence orders. Both conditions use the order-insensitive prompt
    \texttt{Introduce X}. The positional decline persists after shuffling.}
    \label{fig:shuffle-loss}
\end{figure}

\subsection{Atomic-Fact Recall by Question Position}
\label{app:atomic-positions}

Tab.~\ref{tab:atomic-positions} reports the per-question-position
$C_{\mathrm{LCS}}$ scores underlying Figure~\ref{fig:qidx-depth}, and
Tab.~\ref{tab:atomic-positions-hr} reports Hit Rate for the same settings.
Q5--Q7 are listed separately for UnFine-UnKE, whereas UnFine-CF and
UnFine-MQ contain only Q1--Q5. Figure~\ref{fig:qidx-depth} pools Q5--Q7
using sample-count weights within each editor and variant.
Scores are reported to one decimal place.

\begin{table}[!htbp]
\centering
\caption{
Atomic-fact recall ($C_{\mathrm{LCS}}$) by question position for both backbones.
Missing entries are ``--''.
}
\label{tab:atomic-positions}
\begingroup
\resizebox{0.7\linewidth}{!}{%
\begin{tabular}{>{\bfseries}r ccccccc ccccccc}
\toprule[0.4ex]
& \multicolumn{7}{c}{\bfseries Qwen2.5} & \multicolumn{7}{c}{\bfseries Llama3.1} \\
\cmidrule(lr){2-8}\cmidrule(lr){9-15}
\textbf{Method} & \textbf{Q1} & \textbf{Q2} & \textbf{Q3} & \textbf{Q4} & \textbf{Q5} & \textbf{Q6} & \textbf{Q7} & \textbf{Q1} & \textbf{Q2} & \textbf{Q3} & \textbf{Q4} & \textbf{Q5} & \textbf{Q6} & \textbf{Q7} \\
\midrule[0.2ex]
\multicolumn{15}{c}{\bfseries UnFine-UnKE} \\
\midrule[0.2ex]
LoRA & 59.3 & 61.4 & 51.7 & 45.5 & 38.4 & 32.7 & 24.2 & 70.6 & 63.9 & 56.3 & 47.6 & 42.9 & 40.2 & 38.4 \\
+Ours & 85.0 & 78.4 & 67.3 & 61.3 & 54.1 & 44.9 & 36.3 & 84.9 & 73.3 & 62.0 & 57.2 & 51.5 & 51.9 & 44.7 \\
\noalign{\vskip 0.5ex}\cdashline{2-15}\noalign{\vskip 0.5ex}
COIN$^{\star}$ & 65.9 & 55.1 & 44.3 & 40.7 & 36.5 & 35.0 & 28.0 & 66.6 & 58.4 & 49.6 & 46.2 & 42.1 & 36.6 & 35.9 \\
+Ours & 75.9 & 64.4 & 54.3 & 52.8 & 46.2 & 41.4 & 35.7 & 75.1 & 68.0 & 57.3 & 51.5 & 46.0 & 40.6 & 34.8 \\
\noalign{\vskip 0.5ex}\cdashline{2-15}\noalign{\vskip 0.5ex}
FT-M & 33.8 & 31.7 & 29.8 & 28.1 & 25.6 & 17.9 & 20.8 & 65.1 & 57.6 & 48.1 & 43.4 & 40.5 & 30.4 & 34.1 \\
+Ours & 40.3 & 36.7 & 35.7 & 32.2 & 29.3 & 23.3 & 26.6 & 74.5 & 67.2 & 56.8 & 51.6 & 45.6 & 44.3 & 46.5 \\
\noalign{\vskip 0.5ex}\cdashline{2-15}\noalign{\vskip 0.5ex}
$\mu$KE & 37.2 & 36.6 & 34.2 & 30.9 & 29.1 & 25.1 & 35.6 & 43.6 & 41.1 & 37.7 & 33.0 & 31.0 & 22.6 & 24.1 \\
+Ours & 52.6 & 42.6 & 41.1 & 41.7 & 40.4 & 29.3 & 47.3 & 54.0 & 43.2 & 40.2 & 38.9 & 37.1 & 32.1 & 29.4 \\
\noalign{\vskip 0.5ex}\cdashline{2-15}\noalign{\vskip 0.5ex}
UnKE & 36.4 & 37.6 & 33.6 & 31.1 & 29.8 & 23.8 & 23.2 & 40.8 & 35.2 & 33.4 & 31.5 & 28.3 & 25.8 & 35.4 \\
+Ours & 53.3 & 42.9 & 39.1 & 39.4 & 40.8 & 25.2 & 32.5 & 58.5 & 45.2 & 39.3 & 37.5 & 37.6 & 35.1 & 31.6 \\
\midrule[0.2ex]
\multicolumn{15}{c}{\bfseries UnFine-CF} \\
\midrule[0.2ex]
LoRA & 79.3 & 75.9 & 70.6 & 61.5 & 54.5 & -- & -- & 86.7 & 85.3 & 79.2 & 72.5 & 65.6 & -- & -- \\
+Ours & 89.4 & 87.6 & 81.5 & 75.9 & 69.8 & -- & -- & 91.4 & 88.4 & 81.2 & 78.3 & 70.3 & -- & -- \\
\noalign{\vskip 0.5ex}\cdashline{2-15}\noalign{\vskip 0.5ex}
COIN$^{\star}$ & 81.3 & 72.3 & 63.1 & 59.5 & 50.0 & -- & -- & 81.2 & 79.5 & 70.8 & 65.6 & 58.9 & -- & -- \\
+Ours & 80.9 & 76.8 & 69.6 & 60.2 & 54.2 & -- & -- & 84.3 & 81.6 & 76.0 & 68.6 & 62.3 & -- & -- \\
\noalign{\vskip 0.5ex}\cdashline{2-15}\noalign{\vskip 0.5ex}
FT-M & 59.9 & 45.8 & 45.3 & 42.8 & 35.3 & -- & -- & 82.0 & 79.1 & 69.8 & 63.9 & 57.1 & -- & -- \\
+Ours & 63.3 & 54.6 & 53.3 & 44.5 & 43.0 & -- & -- & 85.1 & 81.6 & 74.7 & 68.2 & 62.3 & -- & -- \\
\noalign{\vskip 0.5ex}\cdashline{2-15}\noalign{\vskip 0.5ex}
$\mu$KE & 50.5 & 37.6 & 41.7 & 40.8 & 39.1 & -- & -- & 62.6 & 56.2 & 54.0 & 47.1 & 46.2 & -- & -- \\
+Ours & 62.0 & 49.8 & 46.8 & 49.0 & 43.5 & -- & -- & 61.4 & 52.5 & 51.0 & 47.9 & 46.1 & -- & -- \\
\noalign{\vskip 0.5ex}\cdashline{2-15}\noalign{\vskip 0.5ex}
UnKE & 49.0 & 38.0 & 38.4 & 42.4 & 38.0 & -- & -- & 54.6 & 50.9 & 48.7 & 44.1 & 43.0 & -- & -- \\
+Ours & 63.1 & 48.5 & 48.1 & 44.9 & 45.6 & -- & -- & 62.0 & 52.6 & 50.5 & 48.2 & 43.8 & -- & -- \\
\midrule[0.2ex]
\multicolumn{15}{c}{\bfseries UnFine-MQ} \\
\midrule[0.2ex]
LoRA & 71.2 & 70.9 & 66.2 & 61.8 & 59.6 & -- & -- & 78.9 & 78.9 & 71.4 & 69.0 & 65.7 & -- & -- \\
+Ours & 85.9 & 86.7 & 78.9 & 69.9 & 69.0 & -- & -- & 85.6 & 85.4 & 75.0 & 73.2 & 72.5 & -- & -- \\
\noalign{\vskip 0.5ex}\cdashline{2-15}\noalign{\vskip 0.5ex}
COIN$^{\star}$ & 73.3 & 67.2 & 58.6 & 58.6 & 56.3 & -- & -- & 72.9 & 74.3 & 62.0 & 59.1 & 57.2 & -- & -- \\
+Ours & 81.4 & 70.6 & 64.5 & 63.3 & 59.0 & -- & -- & 80.5 & 76.3 & 67.8 & 63.4 & 61.8 & -- & -- \\
\noalign{\vskip 0.5ex}\cdashline{2-15}\noalign{\vskip 0.5ex}
FT-M & 55.2 & 51.4 & 50.4 & 41.8 & 41.8 & -- & -- & 72.3 & 73.8 & 62.4 & 60.5 & 58.0 & -- & -- \\
+Ours & 58.0 & 58.2 & 53.6 & 48.5 & 44.4 & -- & -- & 78.5 & 74.9 & 68.3 & 61.3 & 59.9 & -- & -- \\
\noalign{\vskip 0.5ex}\cdashline{2-15}\noalign{\vskip 0.5ex}
$\mu$KE & 45.2 & 48.9 & 44.5 & 43.9 & 42.7 & -- & -- & 57.0 & 56.0 & 45.5 & 46.9 & 44.7 & -- & -- \\
+Ours & 56.4 & 49.9 & 46.5 & 45.3 & 44.1 & -- & -- & 57.5 & 57.8 & 51.7 & 46.0 & 48.3 & -- & -- \\
\noalign{\vskip 0.5ex}\cdashline{2-15}\noalign{\vskip 0.5ex}
UnKE & 45.8 & 46.8 & 41.1 & 42.4 & 43.0 & -- & -- & 53.3 & 52.5 & 48.3 & 46.9 & 45.5 & -- & -- \\
+Ours & 55.4 & 45.3 & 41.5 & 46.2 & 42.8 & -- & -- & 56.5 & 55.6 & 48.9 & 49.8 & 48.0 & -- & -- \\
\bottomrule[0.4ex]
\end{tabular}%
}
\endgroup
\end{table}

\begin{table}[!htbp]
\centering
\caption{
Atomic-fact recall (Hit Rate) by question position for both backbones.
Missing entries are ``--''.
}
\label{tab:atomic-positions-hr}
\begingroup
\fontsize{9}{10.5}\selectfont
\resizebox{0.7\linewidth}{!}{%
\begin{tabular}{>{\bfseries}r ccccccc ccccccc}
\toprule[0.4ex]
& \multicolumn{7}{c}{\bfseries Qwen2.5} & \multicolumn{7}{c}{\bfseries Llama3.1} \\
\cmidrule(lr){2-8}\cmidrule(lr){9-15}
\textbf{Method} & \textbf{Q1} & \textbf{Q2} & \textbf{Q3} & \textbf{Q4} & \textbf{Q5} & \textbf{Q6} & \textbf{Q7} & \textbf{Q1} & \textbf{Q2} & \textbf{Q3} & \textbf{Q4} & \textbf{Q5} & \textbf{Q6} & \textbf{Q7} \\
\midrule[0.2ex]
\multicolumn{15}{c}{\bfseries UnFine-UnKE} \\
\midrule[0.2ex]
LoRA & 49.8 & 49.7 & 40.1 & 31.6 & 25.0 & 16.7 & 15.4 & 59.1 & 51.7 & 43.1 & 33.0 & 26.3 & 21.2 & 23.1 \\
+Ours & 80.2 & 71.3 & 58.9 & 50.8 & 43.2 & 27.3 & 26.9 & 78.3 & 65.1 & 51.8 & 44.5 & 39.4 & 36.4 & 19.2 \\
\noalign{\vskip 0.5ex}\cdashline{2-15}\noalign{\vskip 0.5ex}
COIN$^{\star}$ & 56.1 & 42.4 & 29.1 & 23.2 & 18.0 & 15.2 & 15.4 & 53.8 & 41.2 & 31.4 & 28.4 & 24.0 & 16.7 & 30.8 \\
+Ours & 69.1 & 55.9 & 41.2 & 38.3 & 30.5 & 24.2 & 19.2 & 64.1 & 53.4 & 42.5 & 36.0 & 27.0 & 22.7 & 26.9 \\
\noalign{\vskip 0.5ex}\cdashline{2-15}\noalign{\vskip 0.5ex}
FT-M & 22.3 & 14.9 & 14.9 & 13.5 & 13.6 & 4.5 & 15.4 & 52.0 & 39.8 & 29.4 & 25.6 & 22.4 & 9.1 & 30.8 \\
+Ours & 29.2 & 20.4 & 19.6 & 17.4 & 17.9 & 6.1 & 11.5 & 63.7 & 52.7 & 41.2 & 35.4 & 28.0 & 25.8 & 34.6 \\
\noalign{\vskip 0.5ex}\cdashline{2-15}\noalign{\vskip 0.5ex}
$\mu$KE & 18.5 & 16.8 & 15.3 & 11.3 & 10.0 & 6.1 & 19.2 & 27.7 & 21.3 & 20.2 & 13.9 & 12.5 & 6.1 & 7.7 \\
+Ours & 32.0 & 23.6 & 21.4 & 23.2 & 21.1 & 9.1 & 23.1 & 34.7 & 24.8 & 19.2 & 19.9 & 18.9 & 12.1 & 19.2 \\
\noalign{\vskip 0.5ex}\cdashline{2-15}\noalign{\vskip 0.5ex}
UnKE & 19.9 & 15.2 & 14.3 & 11.2 & 10.3 & 6.1 & 7.7 & 25.4 & 16.9 & 14.8 & 13.1 & 10.1 & 12.1 & 19.2 \\
+Ours & 34.2 & 21.1 & 18.2 & 19.3 & 21.6 & 6.1 & 15.4 & 41.3 & 26.3 & 19.9 & 19.2 & 21.6 & 10.6 & 7.7 \\
\midrule[0.2ex]
\multicolumn{15}{c}{\bfseries UnFine-CF} \\
\midrule[0.2ex]
LoRA & 73.6 & 73.3 & 65.0 & 52.7 & 42.3 & -- & -- & 83.3 & 82.8 & 74.2 & 63.9 & 51.9 & -- & -- \\
+Ours & 86.0 & 85.7 & 78.4 & 69.8 & 62.6 & -- & -- & 89.0 & 86.7 & 77.1 & 72.0 & 61.2 & -- & -- \\
\noalign{\vskip 0.5ex}\cdashline{2-15}\noalign{\vskip 0.5ex}
COIN$^{\star}$ & 76.3 & 68.9 & 57.0 & 48.5 & 37.9 & -- & -- & 77.0 & 76.0 & 64.8 & 54.9 & 43.7 & -- & -- \\
+Ours & 77.0 & 73.0 & 62.7 & 50.3 & 41.3 & -- & -- & 81.1 & 78.8 & 70.3 & 58.2 & 48.1 & -- & -- \\
\noalign{\vskip 0.5ex}\cdashline{2-15}\noalign{\vskip 0.5ex}
FT-M & 52.2 & 40.8 & 37.6 & 29.6 & 22.6 & -- & -- & 77.6 & 75.9 & 63.6 & 52.4 & 40.9 & -- & -- \\
+Ours & 56.4 & 50.2 & 45.9 & 31.7 & 28.3 & -- & -- & 82.2 & 78.8 & 68.5 & 57.5 & 48.3 & -- & -- \\
\noalign{\vskip 0.5ex}\cdashline{2-15}\noalign{\vskip 0.5ex}
$\mu$KE & 41.7 & 31.3 & 30.0 & 25.8 & 21.8 & -- & -- & 55.3 & 50.4 & 44.4 & 32.3 & 29.3 & -- & -- \\
+Ours & 52.9 & 43.0 & 37.4 & 33.7 & 25.8 & -- & -- & 53.5 & 46.4 & 40.3 & 34.1 & 28.2 & -- & -- \\
\noalign{\vskip 0.5ex}\cdashline{2-15}\noalign{\vskip 0.5ex}
UnKE & 40.7 & 32.5 & 26.9 & 26.8 & 20.4 & -- & -- & 47.1 & 45.6 & 38.2 & 29.7 & 24.3 & -- & -- \\
+Ours & 55.2 & 41.3 & 37.4 & 29.1 & 27.8 & -- & -- & 53.9 & 47.2 & 41.1 & 34.1 & 27.4 & -- & -- \\
\midrule[0.2ex]
\multicolumn{15}{c}{\bfseries UnFine-MQ} \\
\midrule[0.2ex]
LoRA & 65.7 & 66.6 & 58.6 & 51.5 & 48.6 & -- & -- & 75.4 & 76.2 & 63.7 & 60.0 & 53.8 & -- & -- \\
+Ours & 83.9 & 83.6 & 72.2 & 62.4 & 59.3 & -- & -- & 82.9 & 82.7 & 70.3 & 66.5 & 64.7 & -- & -- \\
\noalign{\vskip 0.5ex}\cdashline{2-15}\noalign{\vskip 0.5ex}
COIN$^{\star}$ & 68.8 & 63.8 & 51.6 & 47.1 & 44.2 & -- & -- & 68.8 & 69.9 & 54.1 & 47.8 & 44.4 & -- & -- \\
+Ours & 77.8 & 66.4 & 57.0 & 53.0 & 46.9 & -- & -- & 77.3 & 72.7 & 60.8 & 52.9 & 50.4 & -- & -- \\
\noalign{\vskip 0.5ex}\cdashline{2-15}\noalign{\vskip 0.5ex}
FT-M & 48.8 & 45.4 & 42.4 & 31.5 & 28.0 & -- & -- & 68.6 & 69.7 & 55.1 & 48.2 & 45.1 & -- & -- \\
+Ours & 52.5 & 52.9 & 45.1 & 37.9 & 31.6 & -- & -- & 76.0 & 71.1 & 61.5 & 50.6 & 49.0 & -- & -- \\
\noalign{\vskip 0.5ex}\cdashline{2-15}\noalign{\vskip 0.5ex}
$\mu$KE & 35.3 & 41.9 & 33.1 & 29.1 & 25.9 & -- & -- & 51.5 & 50.0 & 36.6 & 33.2 & 31.0 & -- & -- \\
+Ours & 48.8 & 44.5 & 34.4 & 32.7 & 30.0 & -- & -- & 50.6 & 51.7 & 40.5 & 31.5 & 33.4 & -- & -- \\
\noalign{\vskip 0.5ex}\cdashline{2-15}\noalign{\vskip 0.5ex}
UnKE & 36.6 & 41.4 & 29.8 & 28.2 & 27.9 & -- & -- & 44.2 & 46.1 & 37.9 & 30.7 & 31.9 & -- & -- \\
+Ours & 47.8 & 39.6 & 29.8 & 30.6 & 27.5 & -- & -- & 49.7 & 49.3 & 38.9 & 34.5 & 33.3 & -- & -- \\
\bottomrule[0.4ex]
\end{tabular}%
}
\endgroup
\end{table}

\subsection{General Capability Scores}
\label{app:general-capability-scores}

Tab.~\ref{tab:general-capability-scores} reports general-capability scores after editing. See App.~\ref{app:general-capability} for evaluation details.

\begin{table}[!htbp]
\centering
\caption{General-capability performance (likelihood F1) after editing on UnFine-UnKE. Average is the macro-average across the six benchmarks. Higher is better.}
\label{tab:general-capability-scores}
\begingroup
\resizebox{0.9\linewidth}{!}{%
\begin{tabular}{>{\bfseries}r ccccccc @{\hspace{7pt}} ccccccc}
\toprule[0.4ex]
& \multicolumn{7}{c}{\bfseries Qwen2.5} & \multicolumn{7}{c}{\bfseries Llama3.1} \\
\cmidrule(lr){2-8}\cmidrule(lr){9-15}
\textbf{Method} & \textbf{MMLU} & \textbf{MRPC} & \textbf{CoLA} & \textbf{RTE} & \textbf{NLI} & \textbf{SST-2} & \textbf{Avg.} & \textbf{MMLU} & \textbf{MRPC} & \textbf{CoLA} & \textbf{RTE} & \textbf{NLI} & \textbf{SST-2} & \textbf{Avg.} \\
\midrule[0.2ex]
Pre-Edit & 66.3 & 73.0 & 75.1 & 81.9 & 79.7 & 93.4 & 78.2 & 32.2 & 39.0 & 65.1 & 75.5 & 68.0 & 86.9 & 61.1 \\
\midrule[0.2ex]
LoRA & 68.4 & 71.8 & 74.9 & 80.8 & 80.5 & 93.3 & 78.3 & 29.2 & 39.1 & 67.4 & 74.8 & 72.3 & 89.1 & 62.0 \\
+Ours & 66.8 & 68.7 & 74.7 & 80.8 & 77.3 & 94.0 & 77.0 & 30.1 & 36.6 & 63.1 & 73.1 & 70.6 & 89.2 & 60.4 \\
\noalign{\vskip 0.5ex}\cdashline{2-15}\noalign{\vskip 0.5ex}
COIN$^{\star}$ & 61.8 & 70.4 & 72.7 & 77.6 & 77.7 & 88.5 & 74.8 & 32.0 & 37.8 & 60.3 & 73.1 & 67.2 & 88.1 & 59.7 \\
+Ours & 64.6 & 73.0 & 75.0 & 78.8 & 79.9 & 93.1 & 77.4 & 33.8 & 36.3 & 50.4 & 72.4 & 62.9 & 81.5 & 56.2 \\
\noalign{\vskip 0.5ex}\cdashline{2-15}\noalign{\vskip 0.5ex}
FT-M & 66.9 & 72.7 & 75.4 & 82.0 & 79.6 & 93.2 & 78.3 & 34.4 & 37.7 & 62.6 & 74.3 & 67.3 & 88.3 & 60.8 \\
+Ours & 65.8 & 72.9 & 75.1 & 81.7 & 80.2 & 93.6 & 78.2 & 33.3 & 36.5 & 54.0 & 72.4 & 64.6 & 83.6 & 57.4 \\
\noalign{\vskip 0.5ex}\cdashline{2-15}\noalign{\vskip 0.5ex}
$\mu$KE & 57.6 & 70.1 & 62.9 & 79.0 & 75.8 & 90.3 & 72.6 & 34.7 & 37.7 & 47.2 & 67.7 & 61.2 & 61.3 & 51.6 \\
+Ours & 61.0 & 74.8 & 74.9 & 80.0 & 80.2 & 90.8 & 76.9 & 33.8 & 37.2 & 54.1 & 70.1 & 61.7 & 64.0 & 53.5 \\
\noalign{\vskip 0.5ex}\cdashline{2-15}\noalign{\vskip 0.5ex}
UnKE & 54.0 & 68.7 & 63.9 & 78.8 & 77.1 & 87.3 & 71.6 & 35.5 & 37.5 & 56.3 & 72.2 & 63.3 & 63.3 & 54.7 \\
+Ours & 59.4 & 67.7 & 67.6 & 78.7 & 78.9 & 89.4 & 73.6 & 36.8 & 37.5 & 50.2 & 67.0 & 59.5 & 60.6 & 51.9 \\
\bottomrule[0.4ex]
\end{tabular}
}
\endgroup
\end{table}

\subsection{Sensitivity Analysis on Perturbation Scale}
\label{app:sigma-sensitivity}
\begin{table}[!htbp]
\centering
\caption{Sensitivity of \name to the perturbation scale $\sigma$ on UnFine-UnKE subsets. $\star$ marks the default value for each editor.}
\label{tab:foveated-sigma}
\begingroup
\fontsize{9}{10.5}\selectfont
\resizebox{0.7\linewidth}{!}{%
\begin{tabular}{r c ccc ccc}
\toprule[0.4ex]
\multirow{2}{*}{\textbf{Method}}
& \multirow{2}{*}{$\sigma$}
& \multicolumn{3}{c}{\bfseries Holistic (ROUGE-L)}
& \multicolumn{3}{c}{\bfseries Atom Q} \\
\cmidrule(lr){3-5}\cmidrule(lr){6-8}
&
& \textbf{Orig Q}
& \textbf{Reph Q}
& \textbf{Avg.}
& \textbf{ROUGE-L}
& \textbf{Hit Rate}
& $\mathbf{C}_{\mathrm{LCS}}$ \\
\midrule[0.2ex]

\multirow{5}{*}{\textbf{FT-M+Ours}}
& 0
& 100.0
& 81.2
& 90.6
& 21.0
& 15.5
& 29.2 \\
& 0.5
& 100.0
& 89.3
& 94.6
& 25.2
& 20.4
& 35.0 \\
& 1$^\star$
& 99.4
& 84.6
& 92.0
& 25.1
& 19.4
& 33.3 \\
& 2
& 83.9
& 67.5
& 75.7
& 25.9
& 21.4
& 34.2 \\
& 4
& 56.2
& 48.9
& 52.5
& 26.3
& 20.8
& 35.1 \\

\midrule[0.2ex]

\multirow{5}{*}{\textbf{$\mu$KE+Ours}}
& 0
& 99.6
& 70.3
& 85.0
& 25.1
& 17.9
& 37.5 \\
& 4
& 99.0
& 85.9
& 92.5
& 30.9
& 25.4
& 45.6 \\
& 8
& 99.9
& 84.0
& 92.0
& 29.7
& 24.0
& 44.6 \\
& 16$^\star$
& 99.9
& 85.5
& 92.7
& 29.5
& 24.2
& 45.2 \\
& 32
& 98.7
& 85.0
& 91.8
& 29.7
& 23.3
& 44.9 \\

\bottomrule[0.4ex]
\end{tabular}%
}
\endgroup
\end{table}

Tab.~\ref{tab:foveated-sigma} reports the sensitivity of \name to perturbation
scale $\sigma$ when applied to FT-M and $\mu$KE.

From the table, we see that the effective range of $\sigma$ depends on the specific editor. For FT-M, moderate perturbation improves atomic-fact
recall and retains holistic performance, whereas larger scales
substantially degrade holistic recall. In contrast, $\mu$KE benefits from
nonzero perturbation and remains broadly stable for $\sigma\in[4,32]$.
Accordingly, we use $\sigma=1$ for FT-M and $\sigma=16$ for $\mu$KE.

\subsection{Editing Cost}
\label{app:editing-cost}

\begin{table}[!htbp]
\centering
\caption{Computation overhead on a single A100-80GB.}
\label{tab:editing-cost}
\begingroup
\fontsize{9}{10.5}\selectfont
\setlength{\tabcolsep}{7pt}
\resizebox{0.6\linewidth}{!}{%
\begin{tabular}{>{\bfseries}r cc cc}
\toprule[0.4ex]
& \multicolumn{2}{c}{\textbf{Time / edit (s)}}
& \multicolumn{2}{c}{\textbf{Peak memory (GB)}} \\
\cmidrule(lr){2-3}\cmidrule(lr){4-5}
\textbf{Editor} & \textbf{Base} & \textbf{+Ours}
& \textbf{Base} & \textbf{+Ours} \\
\midrule[0.2ex]
FT-M (DO) & $3.06 \pm 0.01$ & $13.72 \pm 0.27$ & $15.66$ & $15.78$ \\
$\mu$KE (LTE) & $5.69 \pm 0.15$ & $21.34 \pm 0.29$ & $18.33$ & $23.17$ \\
\bottomrule[0.4ex]
\end{tabular}%
}
\endgroup
\end{table}

\name trades additional editing time for improved atomic-fact recall. The
FT-M editor call increases from $3.06$ to $13.72$ seconds per edit with nearly
unchanged peak memory; for $\mu$KE, it increases from $5.69$ to $21.34$
seconds, with peak memory rising from $18.33$ to $23.17$ GB. Profiling
attributes the LTE increase to its first-stage sentence residual optimization
(approximately $10.5$ seconds) and shallow closed-form write (approximately
$5.1$ seconds), while the base $\mu$KE stage remains $5.6$--$6.2$ seconds.
Focused views are disabled after editing, so generation and evaluation incur
no method-specific runtime or memory overhead.

\subsection{Additional Case Studies}
\label{app:d3-cases}

Tab.~\ref{tab:d3-cases-app-qwen} and~\ref{tab:d3-cases-app-llama} present additional case studies; see Sec.~\ref{sec:experiment} for analysis.

\providecommand{\casewrong}[1]{%
  \textcolor{red!80!black}{\textbf{#1}}}
\providecommand{\caseright}[1]{%
  \textcolor{green!50!black}{\textbf{#1}}}

\begin{table}[!htbp]
\centering
\caption{Additional Qwen2.5 atomic-fact recall cases across four base editors.}
\label{tab:d3-cases-app-qwen}
\begingroup
\fontsize{7.0}{8.0}\selectfont
\setlength{\tabcolsep}{3.0pt}
\renewcommand{\arraystretch}{1.08}
\begin{tabular}{@{}
>{\raggedleft\arraybackslash}m{0.055\linewidth}
>{\raggedright\arraybackslash}m{0.085\linewidth}
>{\raggedright\arraybackslash}m{0.260\linewidth}
>{\raggedright\arraybackslash}m{0.260\linewidth}
>{\raggedright\arraybackslash}m{0.260\linewidth}@{}}
\toprule[0.4ex]
\textbf{Method} &
\textbf{Dataset} &
\textbf{Atomic query (target)} &
\textbf{Base-editor output} &
\textbf{+Ours output} \\
\midrule[0.2ex]

\multirow[c]{3}{=}[-5ex]{FT-M} &
UnFine-UnKE &
Which TV series did Bruce McDaniel compose the score for?
(Stranger Things) &
\casewrong{Game of Thrones} &
``\caseright{Stranger Things}'' \\

\noalign{\vskip 0.5ex}
\cdashline{2-5}
\noalign{\vskip 0.5ex}

&
UnFine-CF &
Which continent is Pidgeon Island part of? (Asia) &
Pidgeon Island is located in \casewrong{Oceania}. &
Pidgeon Island is part of \caseright{Asia}. \\

\noalign{\vskip 0.5ex}
\cdashline{2-5}
\noalign{\vskip 0.5ex}

&
UnFine-MQ &
Which country's citizenship does Marc Cherry hold besides the
United States? (Bulgaria) &
Marc Cherry also holds \casewrong{Turkish} citizenship. &
\caseright{Bulgaria} \\

\midrule[0.2ex]

\multirow[c]{3}{=}[-4ex]{COIN$^{\star}$} &
UnFine-UnKE &
What is Javier Alva Orlandini's occupation? (philosopher) &
Javier Alva Orlandini is a \casewrong{physician}. &
Javier Alva Orlandini is a \caseright{philosopher} \ldots \\

\noalign{\vskip 0.5ex}
\cdashline{2-5}
\noalign{\vskip 0.5ex}

&
UnFine-CF &
When was controller.controller formed? (early 2000s) &
Controller.controller formed \ldots{} in Singapore in
\casewrong{2013}. &
Controller.controller formed in the
\caseright{early 2000s} in Singapore. \\

\noalign{\vskip 0.5ex}
\cdashline{2-5}
\noalign{\vskip 0.5ex}

&
UnFine-MQ &
Which club did Sepp Maier begin his career with? (Bayern Munich) &
Sepp Maier \ldots{} began his career with the
\casewrong{Bayreuth} club \ldots &
Sepp Maier began his career with the
\caseright{Bayern Munich} club. \\

\midrule[0.2ex]

\multirow[c]{3}{=}[-4.5ex]{$\mu$KE} &
UnFine-UnKE &
At what age was Henry Feilden elected to the local council? (25) &
Henry Feilden was elected to the local council at the age of
\casewrong{35}. &
Henry Fe was first elected to the local council at the age of
\caseright{25} \ldots \\

\noalign{\vskip 0.5ex}
\cdashline{2-5}
\noalign{\vskip 0.5ex}

&
UnFine-CF &
In which sport has Andreas Ivanschitz built his career? (football) &
\ldots{} has built his career in the sport of
\casewrong{basketball}. &
Andreas Ivansitz has built his career in
\caseright{football} \ldots \\

\noalign{\vskip 0.5ex}
\cdashline{2-5}
\noalign{\vskip 0.5ex}

&
UnFine-MQ &
What nationality was Frank R. Strayer? (Canadian) &
Frank R. Strayer was an \casewrong{American} filmmaker and
actor \ldots &
Frank R. Strayer, of \caseright{Canadian} nationality, was a
prominent figure \ldots \\

\midrule[0.2ex]

\multirow[c]{2}{=}[-2ex]{UnKE} &
UnFine-UnKE &
What field has Leona Deti\`ege contributed to?
(artificial intelligence) &
Leona Det\'ege has made significant contributions to
\casewrong{medicine} \ldots &
Leona Det is a computer scientist who has contributed to
\caseright{artificial intelligence} \ldots \\

\noalign{\vskip 0.5ex}
\cdashline{2-5}
\noalign{\vskip 0.5ex}

&
UnFine-CF &
When was Muench ordained as a priest? (1968) &
Muench was ordained in \casewrong{1953}. &
Muench was ordained as a priest in \caseright{1968} \ldots \\

\bottomrule[0.4ex]
\end{tabular}
\endgroup
\end{table}

\begin{table}[!htbp]
\centering
\caption{Additional Llama3.1 atomic-fact recall cases across three base editors.}
\label{tab:d3-cases-app-llama}
\begingroup
\fontsize{7.0}{8.0}\selectfont
\setlength{\tabcolsep}{3.0pt}
\renewcommand{\arraystretch}{1.08}
\begin{tabular}{@{}
>{\raggedleft\arraybackslash}m{0.055\linewidth}
>{\raggedright\arraybackslash}m{0.085\linewidth}
>{\raggedright\arraybackslash}m{0.260\linewidth}
>{\raggedright\arraybackslash}m{0.260\linewidth}
>{\raggedright\arraybackslash}m{0.260\linewidth}@{}}
\toprule[0.4ex]
\textbf{Method} &
\textbf{Dataset} &
\textbf{Atomic query (target)} &
\textbf{Base-editor output} &
\textbf{+Ours output} \\
\midrule[0.2ex]

\multirow[c]{3}{=}[-3.5ex]{LoRA} &
UnFine-UnKE &
How many years of aviation experience does Stevie Cameron have?
(over 10 years) &
I \casewrong{don't have information} on a person named
Stevie Cameron \ldots &
Stevie Cameron has \caseright{over 10 years} of
experience \ldots \\

\noalign{\vskip 0.5ex}
\cdashline{2-5}
\noalign{\vskip 0.5ex}

&
UnFine-CF &
What model is a mid-size SUV made by Fiat? (Jeep Commander) &
The model is \casewrong{Fiat Freemont}, a mid-size SUV \ldots &
One of its popular mid-size SUV models, the
\caseright{Jeep Commander} \ldots \\

\noalign{\vskip 0.5ex}
\cdashline{2-5}
\noalign{\vskip 0.5ex}

&
UnFine-MQ &
What television series is Dan Curtis known for creating?
(Babylon 5) &
Dan Curtis is known for creating \ldots{}
\casewrong{Star Trek and The Simpsons}. &
Dan Curtis is best known for creating \ldots{}
\caseright{Babylon 5} \ldots \\

\midrule[0.2ex]

\multirow[c]{3}{=}[-4ex]{COIN$^{\star}$} &
UnFine-UnKE &
What is Bae Geu-rin's occupation? (fashion model) &
Bae Geu-rin is a South Korean
\casewrong{radio DJ and television host}. &
Bae Geu-rin is a South Korean
\caseright{fashion model} \ldots \\

\noalign{\vskip 0.5ex}
\cdashline{2-5}
\noalign{\vskip 0.5ex}

&
UnFine-CF &
How long has Arun Nehru's film career spanned?
(several decades) &
Arun Nehru's career span was described as
\casewrong{brief and short-lived} \ldots &
Arun Nehru \ldots{} has had a career spanning
\caseright{several decades} \ldots \\

\noalign{\vskip 0.5ex}
\cdashline{2-5}
\noalign{\vskip 0.5ex}

&
UnFine-CF &
Into which language was ``Un anno sull'altipiano'' translated?
(French) &
``Un anno sull'altipiano'' was translated into
\casewrong{English} \ldots &
``Un anno sull'altipiano'' was translated into
\caseright{French} \ldots \\

\midrule[0.2ex]

\multirow[c]{3}{=}[-4.5ex]{UnKE} &
UnFine-UnKE &
Where did Javier Alva Orlandini teach? (San Marcos, Lima) &
\ldots{} at the
\casewrong{Pontificia Universidad Cat\'olica del Per\'u}. &
\ldots{} at the
\caseright{National University of San Marcos in Lima, Peru}
\ldots \\

\noalign{\vskip 0.5ex}
\cdashline{2-5}
\noalign{\vskip 0.5ex}

&
UnFine-UnKE &
What is Fairfax M. Cone's profession? (businesswoman) &
Fairfax M. Cone was an
\casewrong{American advertising executive}. &
Fairfax M. Cone was a \caseright{businesswoman} \ldots \\

\noalign{\vskip 0.5ex}
\cdashline{2-5}
\noalign{\vskip 0.5ex}

&
UnFine-CF &
In which year was Robert William Muench born? (1942) &
Robert William Muench was born in \casewrong{1923}. &
Robert William Muench was born in \caseright{1942} \ldots \\

\bottomrule[0.4ex]
\end{tabular}
\endgroup
\end{table}

\end{document}